\newcommand{\newadd}[1]{{\color{black}#1}}
\newtheorem{lem}{Lemma}
\newtheorem{thm}{Theorem}
\newtheorem{defn}{Definition}
\newtheorem{coro}{Corollary}
\newtheorem{prop}{Proposition}
\newtheorem{rem}{Remark}
\newtheorem{fact}{Fact}
\newcommand{\E}[1]{\mathbb{E}\left[{#1}\right]}
\DeclareMathOperator*{\argmax}{arg\,max}
\crefname{equation}{}{}
\Crefname{equation}{}{}
\crefname{thm}{theorem}{theorems}
\Crefname{thm}{Theorem}{Theorems}
\crefname{clm}{claim}{claims}
\Crefname{clm}{Claim}{Claims}
\Crefname{coro}{Corollary}{Corollaries}
\Crefname{lem}{Lemma}{Lemmas}
\Crefname{sec}{Section}{Sections}
\crefname{app}{appendix}{appendices}
\Crefname{app}{Appendix}{Appendices}
\crefname{prop}{proposition}{propositions}
\Crefname{prop}{Proposition}{Propositions}
\Crefname{propty}{Property}{Properties}
\crefname{figure}{figure}{figures}
\Crefname{figure}{Figure}{Figures}
\crefname{fig}{figure}{figures}
\Crefname{fig}{Figure}{Figures}
\crefname{defn}{definition}{definitions}
\Crefname{defn}{Definition}{Definitions}
\crefname{fact}{fact}{facts}
\Crefname{fact}{Fact}{Facts}
\crefname{appendix}{appendix}{appendices}
\Crefname{appendix}{Appendix}{Appendices}
\crefname{algo}{algorithm}{algorithms}
\Crefname{algo}{Algorithm}{Algorithms}
\crefname{algorithm}{algorithm}{algorithms}
\Crefname{algorithm}{Algorithm}{Algorithms}
\crefname{conj}{conjecture}{conjectures}
\Crefname{conj}{Conjecture}{Conjectures}
\crefname{obs}{observation}{observations}
\Crefname{obs}{Observation}{Observations}
\newcommand{\totalPulls}{T}
\newcommand{\setofArmsC}{\mathcal{C}}
\newcommand{\setofArmsK}{\mathcal{K}}
\newcommand{\numArms}{K}
\newcommand{\arm}{k}
\newcommand{\meanReward}{\mu}
\newcommand{\Index}{I}
\newcommand{\pulls}{n}
\newcommand{\slot}{t}
\newcommand{\gap}{\Delta}
\newcommand{\estimateMean}{\hat{\phi}}
\newcommand{\expectedPseudoReward}{\phi}
\newcommand{\regret}{Reg}
\newcommand{\reward}{r}
\newcommand{\estimateReward}{s}
\newcommand{\optimistGap}{\tilde{\Delta}}
\newcommand{\indicator}{\mathbbm{1}}
\newcommand{\bestarm}{{k^*}}
\def \OO {\mathrm{O}}
\begin{document}

\title{Multi-Armed Bandits with Correlated Arms}

\author{\name Samarth Gupta \email samarthg@andrew.cmu.edu \\
 \addr Carnegie Mellon University\\
 Pittsburgh, PA 15213 
 \AND
 \name Shreyas Chaudhari \email schaudh2@andrew.cmu.edu \\
 \addr Carnegie Mellon University\\
 Pittsburgh, PA 15213 
 \AND
 \name Gauri Joshi \email gaurij@andrew.cmu.edu \\
 \addr Carnegie Mellon University\\
 Pittsburgh, PA 15213 
 \AND
 \name Osman Ya\u{g}an \email oyagan@andrew.cmu.edu\\
 \addr Carnegie Mellon University\\
 Pittsburgh, PA 15213}

\editor{No editors}
\maketitle

\begin{abstract}
We consider a multi-armed bandit framework where the rewards obtained by pulling different arms are correlated. We develop a unified approach to leverage these reward correlations and present fundamental generalizations of classic bandit algorithms to the correlated setting. We present a unified proof technique to analyze the proposed algorithms. Rigorous analysis of C-UCB (the correlated bandit versions of Upper-confidence-bound) reveals that the algorithm end up pulling certain sub-optimal arms, termed as \textit{non-competitive}, only $\OO(1)$ times, as opposed to the $\OO(\log T)$ pulls required by classic bandit algorithms such as UCB, TS etc. We present regret-lower bound and show that when arms are correlated through a latent random source, our algorithms obtain {\em order-optimal} regret. We validate the proposed algorithms via experiments on the MovieLens and Goodreads datasets, and show significant improvement over classical bandit algorithms.
\end{abstract}

\maketitle

\section{Introduction}
\label{sec:introduction}

\subsection{Background and Motivation}

\textbf{Classical Multi-armed Bandits.} The \emph{multi-armed bandit} (MAB) problem falls under the class of sequential decision making problems. In the classical multi-armed bandit problem, there are $K$ arms, with each arm having an {\em unknown} reward distribution. At each round $t$, we need to decide an arm $k_{t} \in \mathcal{K}$ and we receive a random reward $R_{k_t}$ drawn from the reward distribution of arm $k_{t}$. The goal in the classical multi-armed bandit  is to maximize the {\em long-term} cumulative reward. In order to maximize cumulative reward, it is important to balance the {\em exploration-exploitation} trade-off, i.e., pulling each arm enough number of times to identify the one with the highest mean reward, while trying to make sure that the arm with the highest mean reward is played as many times as possible. This problem has been well studied starting with the work of Lai and Robbins \cite{lai1985asymptotically} that proposed the upper confidence bound (UCB) arm-selection algorithm and studied its fundamental limits in terms of bounds on \emph{regret}. Subsequently, several other algorithms including Thompson Sampling (TS) \cite{agrawal2012analysis} and KL-UCB \cite{garivier2011kl}, have been proposed for this setting. The generality of the classical multi-armed bandit model allows it to be useful in numerous applications. For example, MAB algorithms are useful in medical diagnosis \cite{villar2015multi}, where the arms correspond to the different treatment mechanisms/drugs and are widely used for the problem of ad optimization \cite{agarwal2009explore} by viewing different version of ads as the arms in the MAB problem. The MAB framework is also useful in system testing \cite{tekin2017multi}, scheduling in computing systems \cite{mora2009stochastic, krishnasamy2016regret, joshi2016efficient}, and web optimization \cite{white2012bandit, agarwal2009explore}. 

\noindent
\textbf{Correlated Multi-Armed Bandits.} The classical MAB setting implicitly assumes that the rewards are independent across arms, i.e., pulling an arm $k$ does not provide any information about the reward we would have received from arm $\ell$. However, this may not be true in practice as the reward corresponding to different treatment/drugs/ad-versions are likely to be {\em correlated} with each other. For instance, similar ads/drugs may generate similar reward for the user/patient. These correlations, when modeled and accounted for, can allow us to significantly improve the cumulative reward by reducing the amount of \emph{exploration} in bandit algorithms.

\begin{figure}[t]
    \centering
    \includegraphics[width = 0.6\textwidth]{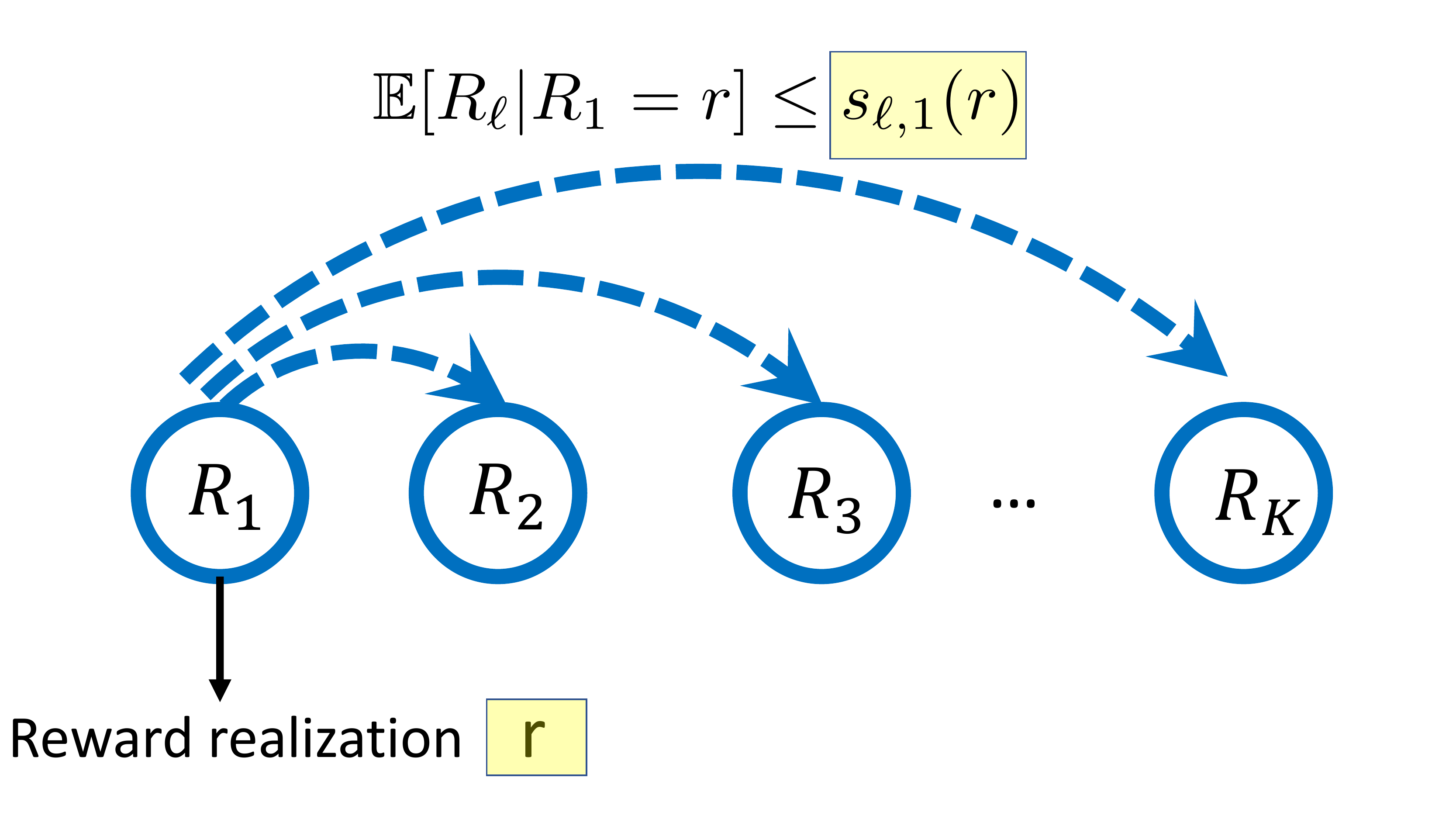}
    \caption{ Upon observing a reward $r$ from an arm $k$, pseudo-rewards $s_{\ell,k}(r),$ give us an upper bound on the conditional expectation of the reward from arm $\ell$ given that we observed reward $r$ from arm $k$. These pseudo-rewards models the correlation in rewards corresponding to different arms. 
    }
    \label{fig:pseudoModel}
\end{figure}

Motivated by this, we study a variant of the classical multi-armed bandit problem in which rewards corresponding to different arms are correlated to each other, i.e., the conditional reward distribution satisfies $f_{R_\ell | R_{k}}(r_{\ell} | r_k) \neq f_{R_\ell}(r_{\ell})$, whence $\E{R_{\ell} | R_k} \neq \E{R_{\ell}}$. Such correlations can only be learned upon obtaining samples from different arms simultaneously, i.e., by pulling multiple arms at a time. As that is not allowed in the classical Multi-Armed Bandit formulation, we assume the knowledge of such correlations in the form of prior knowledge that might be obtained through domain expertise or from controlled surveys. One way of capturing correlations is through the knowledge of the joint reward distribution. However, if the complete joint reward distribution is known, then the best-arm is known trivially. Instead, in our work, we only assume restrictive information about correlations in the form of \textit{pseudo-rewards} that constitute an upper bound on conditional expected rewards. This makes our model more general and suitable for practical applications. Fig.~\ref{fig:pseudoModel} presents an illustration of our model, where the pseudo-rewards, denoted by $s_{\ell,k}(r)$, provide an upper bound on the reward that we could have received from arm $\ell$ given
that pulling arm $k$ led to a reward of $r$; i.e., 
\vspace{-1mm}
\begin{equation}
 \mathbb{E}[R_\ell | R_k = r] \leq s_{\ell,k}(r).
 \label{eq:pseudo_reward_defn}
\end{equation}

We show that the knowledge of such bounds, even when they are not all tight, can lead to significant improvement in the cumulative reward obtained by reducing the amount of {\em exploration}  compared to classical MAB algorithms. Our proposed MAB model and algorithm can be applied in all real-world applications of the classical Multi-Armed bandit problem, where it is possible to know pseudo-rewards from domain knowledge or through surveyed data. In the next section, we illustrate the applicability of our novel correlated Multi-Armed Bandit model and its differences with the existing contextual and structured bandit works through the example of optimal {\em ad-selection}.

\subsection{An Illustrative Example}

Suppose that a company is to run a display advertising campaign for one of their products, and its creative team have designed several different versions that can be displayed. It is expected that the user engagement (in terms of click probability and time spent looking at the ad) depends the version of the ad that is displayed. In order to maximize the total user engagement over the course of the ad campaign, multi-armed bandit algorithms can be used; different versions of the ad correspond to the {\em arms} and the reward from selecting an arm is given by the clicks or time spent looking at the ad version corresponding to that arm.

\begin{figure}[th]
    \centering
    \includegraphics[width = 0.7\textwidth]{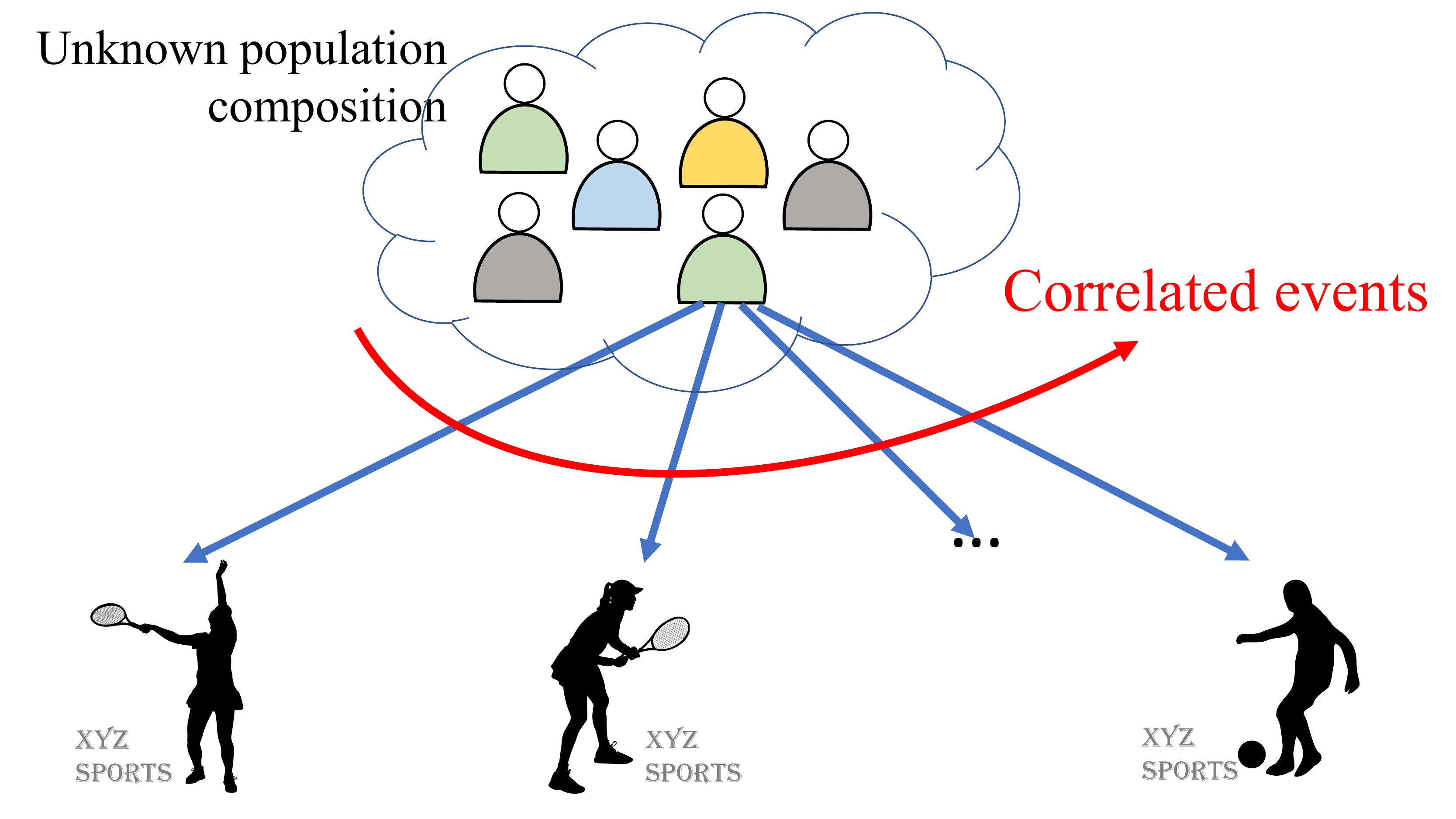}
    \caption{The ratings of a user corresponding to different versions of the same ad are likely to be correlated. For example, if a person likes first version, there is a good chance that they will also like the 2nd one as it also related to tennis. However, the population composition is unknown, i.e., the fraction of people liking the first/second or the last version is unknown.} 
    \label{fig:clooneyEx}
    \vspace{-0.2cm}
\end{figure}

\vspace{0.1cm}
\noindent
\textbf{Personalized recommendations using Contextual and Structured bandits.} Although the ad-selection problem can be solved by standard MAB algorithms, there are several specialized MAB variants that are designed to give better performance. For instance, the {\em contextual} bandit problem \cite{zhou15survey, agarwal2014taming} has been studied to provide {\em personalized} displays of the ads to the users. Here, before making a choice at each time step (i.e., deciding which version to show to a user),  we observe the {\em context}  associated with that user (e.g., age/occupation/income features). Contextual bandit algorithms learn the mappings from the context $\theta$ to the most favored version of ad $k^*(\theta)$ in an online manner and thus are useful for personalized recommendations. A closely related problem is the structured bandit problem \cite{combes2017minimal, lattimore2014bounded, abbasi2011improved, dani2008stochastic}, in which the context $\theta$ (age/ income/ occupational features) is {\em hidden} but the mean rewards for different versions of ad (arms) as a function of hidden context $\theta$ are known. Such models prove useful for personalized recommendation in which the context of the user is unknown, but the reward mappings $\mu_k(\theta)$ are known through surveyed data. 

\vspace{0.15cm}
\noindent
\textbf{Global Recommendations using Correlated-Reward Bandits.}
In this work we study a variant of the classical multi-armed bandit problem in which rewards corresponding to different arms are correlated to each other. 
In many practical settings, the reward we get from different arms
at any given step are likely to be correlated. In the ad-selection example given in \Cref{fig:clooneyEx}, a user reacting positively (by clicking, ordering, etc.) to the first version of the ad with a girl playing tennis might also be more likely to click the second version as it is also related to tennis; of course one can construct examples where there is negative correlation between click events to different ads. The model we study in this paper explicitly captures these correlations through the knowledge of pseudo-rewards $s_{\ell,k}(r)$ (See \Cref{fig:pseudoModel}). Similar to the classical MAB setting, the goal here is to display versions of the ad to maximize user engagement. In addition, unlike contextual bandits, we do not observe the context (age/occupational/income) features of the user and do not focus on providing personalized recommendation. Instead our goal is to provide global recommendations to a population whose demographics is unknown. Unlike {\em structured bandits}, we do not assume that the mean rewards are functions of a hidden context parameter $\theta$. In structured bandits, although the {\em mean} rewards depend on $\theta$ the reward realizations can still be independent. See \Cref{subsec:strucBandit} for more details.

\subsection{Main Contributions and Organization}

\vspace{0.1cm}
\textbf{i) A General and Previously Unexplored Correlated Multi-Armed Bandit Model.} In \Cref{sec:problem} we describe our novel correlated multi-armed bandit model, in which rewards of a user corresponding to different arms are correlated with each other. This correlation is captured by the knowledge of \textit{pseudo-rewards}, which are upper bounds on the conditional mean reward of arm $\ell$ given reward of arm $k$. In practice, pseudo-rewards can be obtained via expert/domain knowledge (for example, common ingredients in two drugs that are being considered to treat an ailment) or controlled surveys (for example, beta-testing users who are asked to rate different versions of an ad). A key advantage of our framework is that pseudo-rewards are just upper bounds on the conditional expected rewards and can be arbitrarily loose. This also makes the proposed framework and algorithm directly usable in practice -- if some pseudo-rewards are unknown due to lack of domain knowledge/data, they can simply be replaced by the maximum possible reward entries, which serves a natural upper bound.

\vspace{0.1cm}
\noindent
\textbf{ii) An approach to generalize algorithms to the Correlated MAB setting.}
We propose a novel approach in \Cref{sec:algorithm} that extends any classical bandit (such as UCB, TS, KL-UCB etc.) algorithm to the correlated MAB setting studied in this paper. This is done by making use of the pseudo-rewards to reduce exploration in standard bandit algorithms. We refer to this algorithm as \textsc{C-Bandit} where \textsc{Bandit} refers to the classical bandit algorithm used in the last step of the algorithm (i.e., UCB/TS/KL-UCB). 

\vspace{0.1cm}
\noindent
\textbf{iii) Unified Regret Analysis} We study the performance of our proposed algorithms by analyzing their expected \emph{regret}, $\E{\text{Reg}(T)}$. The regret of an algorithm is defined as the difference between the cumulative reward of a \emph{genie} policy, that always pulls the optimal arm $k^*$, and the cumulative reward obtained by the algorithm over $T$ rounds. By doing regret analysis of C-UCB, we obtain the following upper bound on the expected regret of C-UCB. 

\begin{prop}[Upper Bound on Expected Regret]
The expected cumulative regret of the C-UCB algorithm is upper bounded as
\begin{equation}
    \E{Reg(T)} \leq (C-1) \cdot \OO(\log T) + \OO(1),
\end{equation}
\label{coro:teaser}
\end{prop}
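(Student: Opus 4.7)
The plan is to begin from the standard per-arm regret decomposition
\begin{equation*}
\E{Reg(T)} \;=\; \sum_{k \neq k^*} \Delta_k \cdot \E{n_k(T)},
\end{equation*}
where $\Delta_k = \mu_{k^*} - \mu_k$ and $n_k(T)$ is the number of pulls of arm $k$ within $T$ rounds, and then to bound each expected pull count by partitioning the $K-1$ suboptimal arms into the $C-1$ \emph{competitive} and $K-C$ \emph{non-competitive} classes introduced earlier in the paper. Showing $\E{n_k(T)} = \OO(\log T)$ for each competitive suboptimal arm and $\E{n_\ell(T)} = \OO(1)$ for each non-competitive arm would immediately yield the stated bound.

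For a competitive suboptimal arm $k$, the plan is a direct adaptation of the classical UCB regret argument. By definition, a competitive arm can survive the pseudo-reward elimination step even after $k^*$ is played many times, so once it survives, C-UCB reduces to running UCB between $k$ and $k^*$ on the surviving subset. Standard Hoeffding-based concentration on $\hat{\mu}_k$ and $\hat{\mu}_{k^*}$ then bounds the number of pulls of $k$ by $\OO(\log T / \Delta_k^2)$, exactly as for vanilla UCB. Summing across the $C-1$ competitive suboptimal arms contributes $(C-1)\cdot \OO(\log T)$ to the regret.

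For a non-competitive arm $\ell$, the key observation is that if the most-pulled arm at round $t$ happens to be $k^*$ and has been pulled enough times, then the empirical pseudo-reward $\hat{\phi}_{\ell, k^*}(t)$ concentrates around $\E{s_{\ell,k^*}(R_{k^*})}$, which by the definition of non-competitiveness lies strictly below $\mu_{k^*}$. Simultaneously $\hat{\mu}_{k^*}(t)$ concentrates around $\mu_{k^*}$, so the pseudo-reward elimination test removes $\ell$ from contention with probability $1 - e^{-\Omega(n_{k^*}(t))}$. Summing these exponentially small failure probabilities over $t$ gives a finite contribution to $\E{n_\ell(T)}$.

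The main obstacle is handling the rounds in which the most-pulled arm is not $k^*$, since only then can the pseudo-reward concentration on $k^*$ fail. My plan is to show $\sum_{t=1}^{T} \Pr(k_{\max}(t) \neq k^*) = \OO(1)$, where $k_{\max}(t)$ denotes the empirically most-pulled arm. The event $\{k_{\max}(t) \neq k^*\}$ forces some suboptimal arm to have been pulled more than $t/K$ times; for competitive suboptimal arms the UCB-style bound already restricts this to $\OO(\log t)$ pulls, and for non-competitive arms a bootstrapping (inductive in $t$) argument using the same pseudo-reward concentration delivers a self-consistent $\OO(1)$ bound. Combining the two regimes gives $\E{n_\ell(T)} = \OO(1)$ per non-competitive arm, and summing all contributions yields $\E{Reg(T)} \leq (C-1)\cdot \OO(\log T) + (K-C)\cdot \OO(1)$, which is the claimed inequality.
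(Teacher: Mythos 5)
Your overall architecture --- the regret decomposition, $\OO(\log T)$ pulls for each competitive suboptimal arm via a UCB-style argument, and $\OO(1)$ pulls for each non-competitive arm via concentration of $\hat{\phi}_{\ell,k^*}(t)$ on the rounds where $k^*$ is the most-pulled arm --- is the same as the paper's. The genuine gap is in your plan for showing $\sum_{t=1}^{T}\Pr(k_{\max}(t)\neq k^*)=\OO(1)$, which is exactly the step the paper works hardest on (its lemmas giving $\Pr(k_{t+1}=k,\,n_k(t)\geq t/(2K))\leq \OO(t^{-3})$ and, by an induction/telescoping over $t$, $\Pr(n_k(t)\geq t/K)\leq \OO((t/K)^{-2})$ for every suboptimal $k$). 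For competitive arms you invoke ``the UCB-style bound already restricts this to $\OO(\log t)$ pulls,'' but an expectation bound $\E{n_k(t)}=\OO(\log t)$ only yields, via Markov, $\Pr(n_k(t)\geq t/K)=\OO(K\log t/t)$, whose sum over $t$ is $\OO(\log^2 T)$, not $\OO(1)$; the non-competitive arms would then inherit an $\OO(\log^2 T)$ term and the claimed bound would not follow. You need a per-round high-probability statement that arm $k$ is pulled again after already having order $t/K$ pulls with probability $\mathrm{poly}(1/t)$ (its UCB index is then dominated by $\mu_{k^*}$ with high probability, unless $k^*$ has been declared empirically non-competitive), followed by the induction over $t$ to make the tail summable --- this is precisely what the paper does.

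Second, for non-competitive arms in the bad regime your ``bootstrapping \ldots using the same pseudo-reward concentration'' is circular: the pseudo-reward elimination of a non-competitive arm $\ell$ is only guaranteed when $k^*\in\mathcal{S}_t$, i.e., when $k^*$ already has at least $t/K$ pulls, and in the regime you are trying to control ($k_{\max}(t)\neq k^*$) that is exactly what may fail; the arms actually in $\mathcal{S}_t$ need not certify that $\ell$ is suboptimal. The paper avoids this by not using pseudo-rewards at all in that step: it bounds the probability that any suboptimal arm (competitive or not) is pulled after accumulating $t/(2K)$ pulls purely through the UCB index comparison with $k^*$, which needs only $\Delta_k>0$, together with the separately proved bound $\Pr(E_1(t))\leq 2Kt\exp(-t\Delta_{\min}^2/(2K))$ on the event that $k^*$ is declared empirically non-competitive --- a bound that uses only the fact that every arm in $\mathcal{S}_t$ has at least $t/K$ samples, whichever arms those are. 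A smaller omission of the same kind: your competitive-arm step assumes C-UCB ``reduces to running UCB between $k$ and $k^*$,'' which tacitly assumes $k^*$ survives the elimination; you must add the summable term $\sum_t\Pr(E_1(t))$ for the rounds in which $k^*$ itself is eliminated, as the paper's Theorem 2 does.
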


Here $C$ denotes the number of \emph{competitive} arms. An arm $k$ is said to be \emph{competitive} if expected pseudo-reward of arm $k$ with respect to the optimal arm $k^*$ is larger than the mean reward of arm $k^*$, that is, if $\E{s_{k,k^*}(r)} \geq \mu_{k^*}$, otherwise, the arm is said to be non-competitive. The result in \Cref{coro:teaser} arises from the fact that the C-UCB algorithm ends up pulling the non-competitive arms only $\OO(1)$ times and only the competitive sub-optimal arms are pulled $\OO(\log T)$ times. In contrast to UCB, that pulls all $K-1$ sub-optimal arms $\OO(\log T)$ times, our proposed C-UCB  algorithm pulls only $C-1 \leq K-1$ arms $\OO(\log T)$ times. In fact, when $C = 1$, our proposed algorithm achieves {\em bounded} regret meaning that after some finite step, no arm but the optimal one will be selected. In this sense, we reduce a $K$-armed bandit problem to a $C$-armed bandit problem. We emphasize that $k^*$, $\mu^*$ and $C$ are {\em all} unknown to the algorithm at the beginning.

We present our detailed regret bounds and analysis in \Cref{sec:regret}. A rigorous  analysis of the regret achieved under C-UCB is given through a unified technique. This technique can be of broad  interest as we also provide a recipe to obtain regret analysis for any \textit{C-Bandit} algorithm. For instance, the analysis of C-KL-UCB can be easily done through our provided outline.

\vspace{0.1cm}
\noindent
\textbf{iv) Evaluation using real-world datasets.} 

We perform simulations to validate our theoretical results in \Cref{sec:simulation}. In \Cref{sec:experiments}, we do extensive validation of our results by performing experiments on two real-world datasets, namely \textsc {Movielens} and \textsc{Goodreads}, which show that the proposed approach yields drastically smaller regret than classical Multi-Armed Bandit strategies.

\section{Problem Formulation}
\label{sec:problem}

\subsection{Correlated Multi-Armed Bandit Model}
\label{subsec:generalModel}

\begin{table}[t]
\centering
\begin{tabular}{|l|l|l|l|l|}
\cline{1-2} \cline{4-5}
\textbf{r} & \textbf{$s_{2,1}(r)$} &  & \textbf{r} & \textbf{$s_{1,2}(r)$} \\ \cline{1-2} \cline{4-5} 
\textbf{0} & 0.7                   &  & \textbf{0} & 0.8                     \\ \cline{1-2} \cline{4-5} 
\textbf{1} & 0.4                   &  & \textbf{1} & 0.5                     \\ \cline{1-2} \cline{4-5} 
\end{tabular}
\\ \vspace{2mm}
\parbox{.45\linewidth}{
\centering
\begin{tabular}{|l|l|l|}
\hline
    \textbf{(a)}    & $R_1 = 0$ & $R_1 = 1$ \\ \hline
$R_2 = 0$ & 0.2       & 0.4       \\ \hline
$R_2 = 1$ & 0.2       & 0.2       \\ \hline
\end{tabular}
}
\hfill
\parbox{.45\linewidth}{
\centering
\begin{tabular}{|l|l|l|}
\hline
    \textbf{(b)}    & $R_1 = 0$ & $R_1 = 1$ \\ \hline
$R_2 = 0$ & 0.2       & 0.3       \\ \hline
$R_2 = 1$ & 0.4       & 0.1       \\ \hline
\end{tabular}
}
\caption{The top row shows the pseudo-rewards of arms 1 and 2, i.e., upper bounds on the conditional expected rewards (which are known to the player). The bottom row depicts two possible joint probability distribution (unknown to the player). Under distribution (a), Arm 1 is optimal whereas Arm 2 is optimal under distribution (b).}
\label{tab:pseudoBin}
\vspace{-0.2cm}
\end{table}

Consider a Multi-Armed Bandit setting with $K$ arms $\{1,2, \ldots K\}$. At each round $t$, a user enters the system and we need to decide an arm $k_t$ to display to the user. Upon pulling arm $k_t$, we receive a random reward $R_{k_t} \in [0,B]$. Our goal is to maximize the cumulative reward over time. The expected reward of arm $k$, is denoted by $\mu_k$. If we knew the arm with highest mean, i.e., $k^* = \arg \max_{k \in \mathcal{K}} \mu_k$ beforehand, then we would always pull arm $k^*$ to maximize expected cumulative reward. We now define the cumulative regret, minimizing which is equivalent to maximizing cumulative reward:
\begin{equation}
    Reg(T) = \sum_{t = 1}^{T} \mu_{k_t} - \mu_{k^*} = \sum_{k \neq k^*} n_k(T) \Delta_k.
\label{eqn:regretdefinition}
\end{equation}
Here, $n_k(T)$ denotes the number of times a sub-optimal arm is pulled till round $T$ and $\Delta_k$ denotes the {\em sub-optimality gap} of arm $k$, i.e., $\Delta_k = \mu_{k^*} -\mu_k$.

The classical multi-Armed bandit setting implicitly assumes the rewards $R_1, R_2 \ldots R_K$ are independent, that is, $\Pr(R_{\ell} = r_\ell | R_k = r) = \Pr(R_{\ell} = r_\ell) \quad \forall{r_{\ell},r} \& \forall{\ell,k},$ which implies that, $\E{R_{\ell} | R_k = r} = \E{R_{\ell}} \quad \forall{r,\ell,k}$. However, in most practical scenarios this assumption is unlikely to be true. In fact, rewards of a user corresponding to different arms are likely to be correlated. Motivated by this we consider a setup where the conditional distribution of the reward from arm $\ell$ given reward from arm $k$ is not equal to the probability distribution of the reward from arm $\ell$, i.e., $f_{R_\ell | R_{k}}(r_{\ell} | r_k) \neq f_{R_\ell}(r_{\ell})$, with $f_{R_\ell}(r_{\ell})$ denoting the probability distribution function of the reward from arm $\ell$. Consequently, due to such correlations, we have $\E{R_{\ell} | R_k} \neq \E{R_{\ell}}$.

In our problem setting, we consider that the player has partial knowledge about the joint distribution of correlated arms in the form of \emph{pseudo-rewards}, as defined below:

\begin{defn}[Pseudo-Reward]
Suppose we pull arm $k$ and observe reward $r$, then the pseudo-reward of arm $\ell$ with respect to arm $k$, denoted by $s_{\ell,k}(r)$, is an upper bound on the conditional expected reward of arm $\ell$, i.e.,
\begin{equation}
 \mathbb{E}[R_\ell | R_k = r] \leq s_{\ell,k}(r),
\end{equation}
without loss of generality, we define $s_{\ell,\ell}(r) = r$.
\end{defn}

The pseudo-rewards information consists of a set of $K \times K$ functions $s_{\ell,k}(r)$ over $[0,B]$. This information can be obtained in practice through either domain/expert knowledge or from controlled surveys. For instance, in the context of medical testing, where the goal is to identify the best drug to treat an ailment from among a set of $K$ possible options, the effectiveness of two drugs is correlated when the drugs share some common ingredients. Through domain knowledge of doctors, it is possible answer questions such as ``what are the chances that drug $B$ would be effective given drug $A$ was not effective?", through which we can infer the pseudo-rewards. 

\begin{table}[t]
\parbox{.3\linewidth}{
\centering
\begin{tabular}{|l|l|l|}
\hline
\textbf{r} & \textbf{$s_{2,1}(r)$} & \textbf{$s_{3,1}(r)$} \\ \hline
\textbf{0} & 0.7                   &  \textcolor{red}{2}                  \\ \hline
\textbf{1} & 0.8                   & 1.2                   \\ \hline
\textbf{2} & \textcolor{red}{2}                   & 1                   \\ \hline
\end{tabular}
}
\hfill
\parbox{.3\linewidth}{
\centering
\begin{tabular}{|l|l|l|}
\hline
\textbf{r} & \textbf{$s_{1,2}(r)$} & \textbf{$s_{3,2}(r)$} \\ \hline
\textbf{0} & 0.5                   & 1.5                   \\ \hline
\textbf{1} & 1.3                   &       \textcolor{red}{2}              \\ \hline
\textbf{2} &    \textcolor{red}{2}                 &   0.8                 \\ \hline
\end{tabular}
}
\hfill
\parbox{.3\linewidth}{
\centering
\begin{tabular}{|l|l|l|}
\hline
\textbf{r} & \textbf{$s_{1,3}(r)$} & \textbf{$s_{2,3}(r)$} \\ \hline
\textbf{0} & 1.5                    &        \textcolor{red}{2}             \\ \hline
\textbf{1} &   \textcolor{red}{2}                  & 1.3                   \\ \hline
\textbf{2} & 0.7                   & 0.75                   \\ \hline
\end{tabular}
}
\caption{If some pseudo-reward entries are unknown (due to lack of prior-knowledge/data), those entries can be replaced with the maximum possible reward and then used in the \textsc{C-BANDIT} algorithm. We do that here by entering $2$ for the entries where pseudo-rewards are unknown.}
\label{tab:paddedEntries}
\vspace{-0.2cm}
\end{table}

\subsection{Computing Pseudo-Rewards from prior-data/surveys}
The pseudo-rewards can also be learned from prior-available data, or through {\em offline} surveys in which users are presented with {\em all} $K$ arms allowing us to sample $R_1, \ldots, R_K$ jointly. Through such data, we can evaluate an estimate on the conditional expected rewards. For example in \Cref{tab:pseudoBin}, we can look at all users who obtained $0$ reward for Arm 1 and calculate their average reward for Arm 2, say $\hat{\mu}_{2,1}(0)$. This average provides an estimate on the conditional expected reward. Since we only need an upper bound on $\E{R_2 | R_1 = 0}$, we can use several  approaches to construct the pseudo-rewards.
\begin{enumerate}
    \item If the training data is \textit{large}, one can use the empirical estimate $\hat{\mu}_{2,1}(0)$ directly as $s_{2,1}(0)$, because through law of large numbers, the empirical average equals the $\E{R_2 | R_1 = 0}$.
    \item Alternatively, we can set $s_{2,1}(0) = \hat{\mu}_{2,1}(0) + \hat{\sigma}_{2,1}(0)$, with $\hat{\sigma}_{2,1}(0)$ denoting the empirical standard deviation on the conditional reward of arm 2, to ensure that pseudo-reward is an upper bound on the conditional expected reward.
    \item In addition, pseudo-rewards for any unknown conditional mean reward could be filled with the maximum possible reward for the corresponding arm.  \Cref{tab:paddedEntries} shows an example of a 3-armed bandit problem where some pseudo-reward entries are unknown, e.g., due to lack of data. We can fill these missing entries with  maximum possible reward $(i.e., 2)$ as shown in \Cref{tab:paddedEntries} to complete the pseudo-reward entries. 
    \item  If through the training data, we obtain a soft upper bound $u$ on $\E{R_2|R_1 = 0}$ that holds with probability $1-\delta$, then we can translate it to the pseudo-reward $s_{2,1}(0) = u \times (1 - \delta) + 2 \times \delta$, (assuming maximum possible reward is 2). 
\end{enumerate}

\begin{rem}
Note that the pseudo-rewards are upper bounds on the expected conditional reward and not hard bounds on the conditional reward itself. This makes our problem setup practical as upper bounds on expected conditional reward are easier to obtain, as illustrated in the previous paragraph.
\end{rem}

\begin{rem}[Reduction to Classical Multi-Armed Bandits]
When all pseudo-reward entries are unknown, then all pseudo-reward entries can be filled with maximum possible reward for each arm, that is, $s_{\ell, k}(r) = B$  $\forall{r,\ell,k}$. In such a case, the problem framework studied in this paper reduces to the setting of the classical Multi-Armed Bandit problem and our proposed $\textsc{C-Bandit}$ algorithm performs exactly as standard \textsc{bandit} (for e.g., UCB, TS etc.) algorithms.  
\end{rem}

While the pseudo-rewards are known in our setup, the underlying joint probability distribution of rewards is unknown. For instance, \Cref{tab:pseudoBin} (a) and \Cref{tab:pseudoBin} (b) show two joint probability distributions of the rewards that are both possible given the pseudo-rewards at the top of \Cref{tab:pseudoBin}.
If the joint distribution is as given in \Cref{tab:pseudoBin} (a), then Arm 1 is optimal, while Arm 2 is optimal if the joint  distribution is as given in \Cref{tab:pseudoBin}(b).

\newadd{
\begin{rem} For a setting where reward domain is \emph{large} or there are a large number of arms, it may be difficult to learn the pseudo-reward entries from prior-data. In such scenarios, the knowledge of additional correlation structure may be helpful to know the value of pseudo-rewards. We describe one such structure in the next section where rewards are correlated through a latent random source and show how to evaluate pseudo-rewards in such a scenario.
\end{rem}
}

\subsection{Special Case: Correlated Bandits with a Latent Random Source}
\label{subsec:specialCase}

\begin{figure}[t]
    \centering
    \includegraphics[width = 0.6\textwidth]{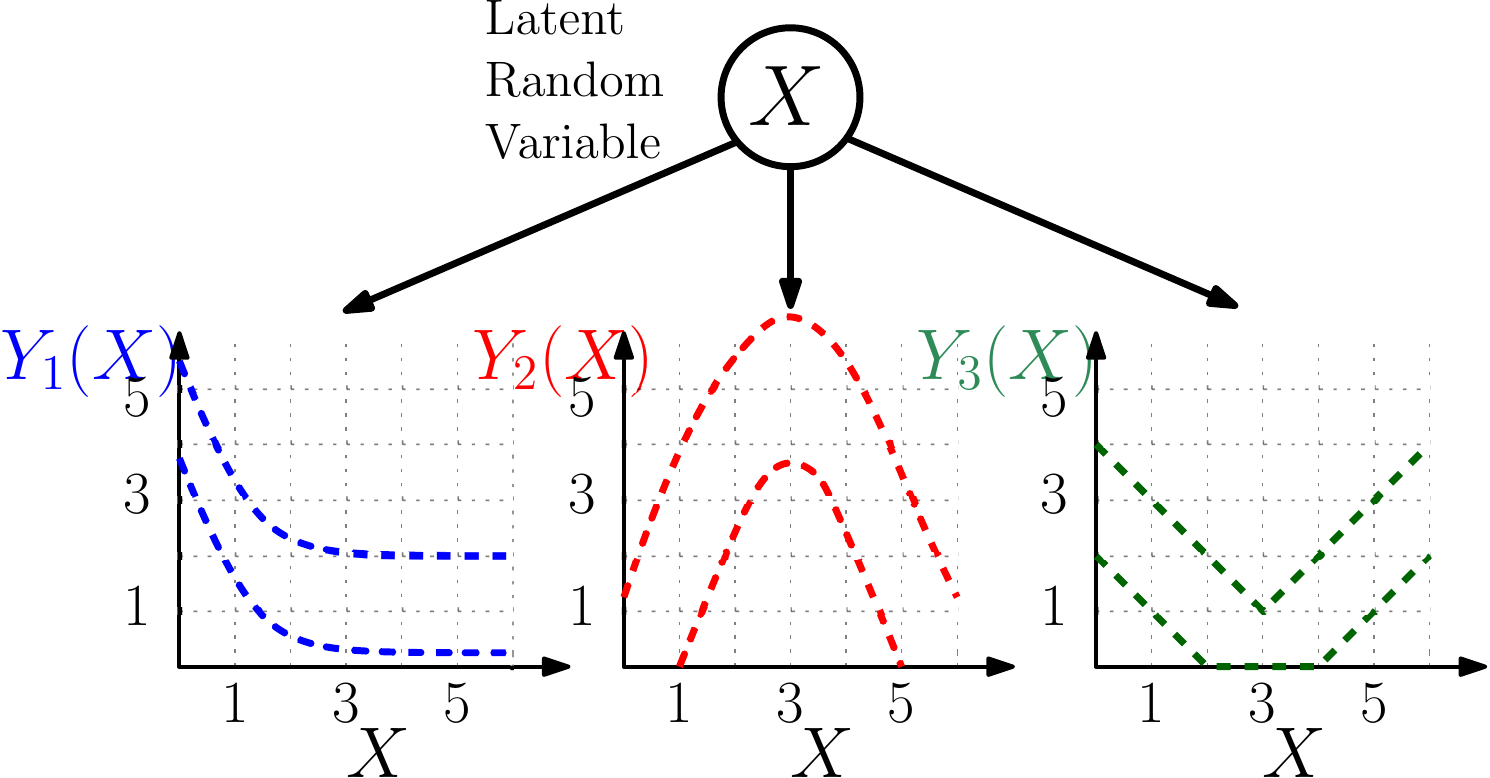}
    \caption{A special case of our proposed problem framework is a setting in which rewards for different arms are correlated through a hidden random variable X. At each round $X$ takes a realization in $\mathcal{X}$. The reward obtained from an arm $k$ is $Y_k(X)$. The figure illustrates lower bounds and upper bounds on $Y_k(X)$ (through dotted lines). For instance, when $X$ takes the realization $1$, reward of arm 3 is a random variable bounded between $1$ and $3$. }
    \label{fig:latentExample}
    \vspace{-0.2cm}
\end{figure}
\begin{figure}
    \centering
    \includegraphics[width = 0.6\textwidth]{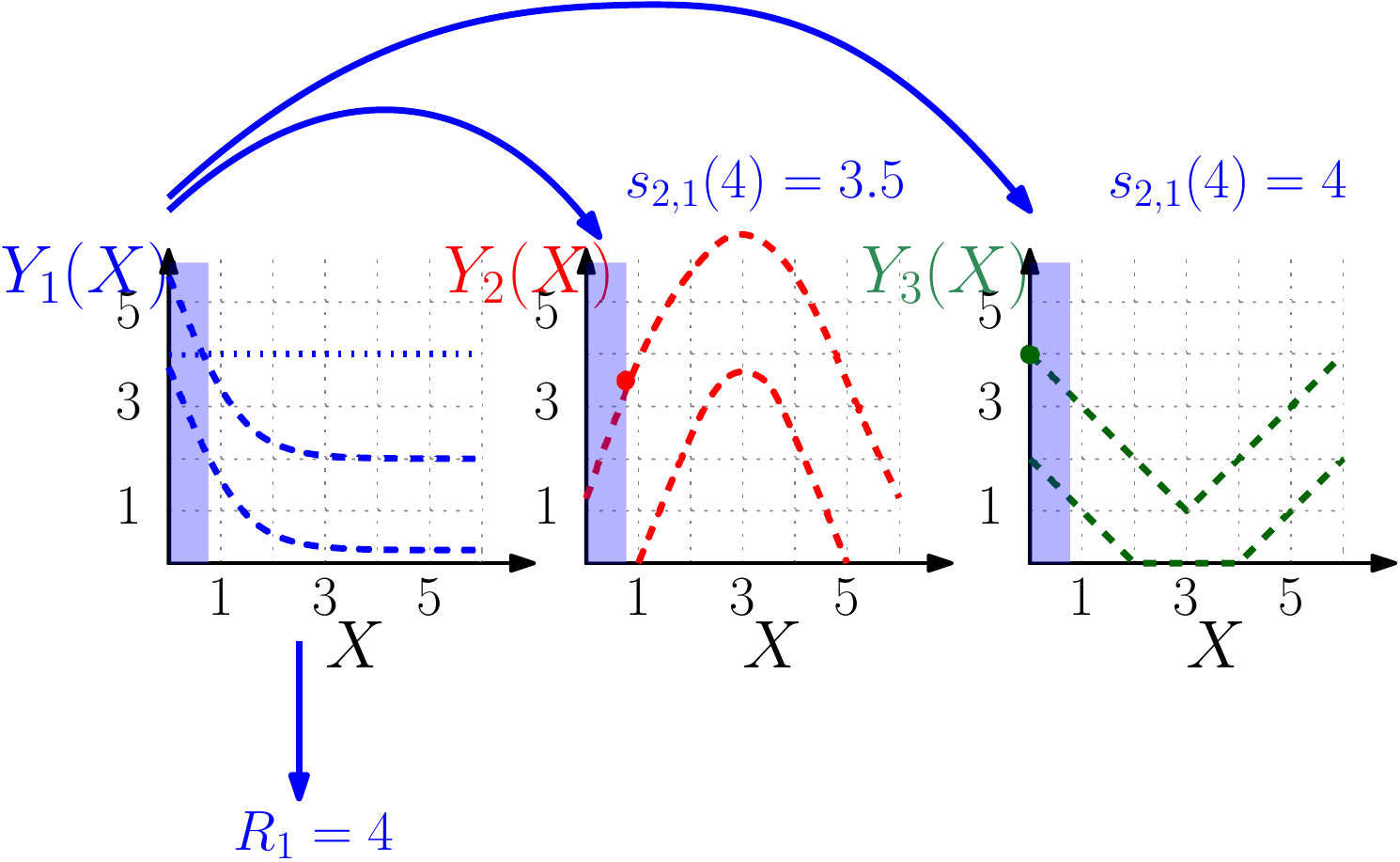}
    \caption{An illustration on how to calculate pseudo-rewards in CMAB with latent random source. Upon observing a reward of 4 from arm 1, we can see that the maximum possible reward for arms 2 and 3 is 3.5 and 4 respectively. Therefore, $s_{2,1}(4) = 3.5$ and $s_{3,1}(4) = 4$. }
    \label{fig:latentReward}
    \vspace{-0.2cm}
\end{figure}

Our proposed correlated multi-armed bandit framework subsumes many interesting and previously unexplored multi-armed bandit settings. One such special case is the correlated multi-armed bandit model where the rewards depend on a common latent source of randomness \cite{gupta2020correlated}. More concretely, the rewards of different arms are correlated through a hidden random variable $X$ (see \Cref{fig:latentExample}). At each round $t$, $X$ takes a an i.i.d. realization $X_t \in \mathcal{X}$ (unobserved to the player) and upon pulling arm $k$, we observe a random reward $Y_k(X_t)$. The latent random variable $X$ here could represent the \textit{features} (i.e., age/occupation etc.) of the user arriving to the system, to whom we show one of the $K$ arms. These \emph{features} of the user are hidden in the problem due to  privacy concerns. The random reward $Y_k(X_t)$ represents the preference of user with context $X_t$ for the $k^{th}$ version of the ad, for the application of ad-selection.

In this problem setup, upper and lower bounds on $Y_k(X)$, namely $\bar{g}_k(X)$ and $\underline{g}_k(X)$ are known. For instance, the information on upper and lower bounds of $Y_k(X_t)$ could represent knowledge of the form that \textit{children of age 5-10 rate documentaries only in the range 1-3 out of 5}. Such information can be known or learned through prior available data. While the bounds on $Y_k(X)$ are known, the distribution of $X$ and reward distribution within the bounds is unknown, due to which the optimal arm is not known beforehand. Thus, an online approach is needed to minimize the regret. 

It is possible to translate this setting to the general framework described in the problem by transforming the mappings $Y_k(X)$ to pseudo-rewards $s_{\ell,k}(r)$. Recall the pseudo-rewards represent an upper bound on the conditional expectation of the rewards. In this framework, $s_{\ell,k}(r)$ can be calculated as:
$$s_{\ell,k}(r) = \max_{\{x: \underline{g}_k(x) \leq r \leq \bar{g}_k(x)\}}  \bar{g}_{\ell}(x),$$
where $\underline{g}_k(x)$ and $\bar{g}_k(x)$ represent upper and lower bounds on $Y_k(x)$. Upon observing a realization from arm $k$, it is possible to estimate the maximum possible reward that would have been obtained from arm $\ell$ through the knowledge of bounds on $Y_k(X)$.

\Cref{fig:latentReward} illustrates how pseudo-reward is evaluated when we obtain a reward $r = 4$ by pulling arm 1. We first infer that $X$ lies in $[0, 0.8]$ if $r = 4$ and then find the maximum possible reward for arm 2 and arm 3 in $[0,0.8]$. Once these pseudo-rewards are constructed, the problem fits in the general framework described in this paper and we can use the algorithms proposed for this setting directly.

\begin{rem}
In the scenario where $\underline{g}_k(x)$ and $\bar{g}_k(x)$ are soft lower and upper bounds, i.e., $\underline{g}_k(x) \leq Y_k(x) \leq \bar{g}_k(x)$ w.p. $1 - \delta$, we can still construct pseudo-reward as follows: 
$$s_{\ell,k}(r) = (1 - \delta)^2 \times \left( \max_{\{x: \underline{g}_k(x) \leq r \leq \bar{g}_k(x)\}}  \bar{g}_{\ell}(x)  \right) + (1 - (1 - \delta)^2) \times M,$$ where $M$ is the maximum possible reward an arm can provide. Thus our proposed framework and algorithms work under this setting as well.\footnote{We evaluate a range of values within which $x$ lies based on the reward with probability $1-\delta$. The maximum possible reward of arm $\ell$ for values of $x$ is then identified with probability $1-\delta$. Due to this, with probability $(1 - \delta)^2$, conditional reward of arm $\ell$ is at-most $\max_{\{x: \underline{g}_k(x) \leq r \leq \bar{g}_k(x)\}}  \bar{g}_{\ell}(x)$.}
\end{rem}

\subsection{Comparison with parametric (structured) models}
\label{subsec:strucBandit}

As mentioned in \Cref{sec:introduction}, a seemingly related model is the structured bandits model \cite{combes2017minimal, lattimore2014bounded, gupta2018unified}. Structured bandits is a class of problems that cover linear bandits \cite{abbasi2011improved}, generalized linear bandits \cite{filippi2010parametric}, Lipschitz bandits \cite{magureanu2014lipschitz}, global bandits \cite{ata2015global}, regional bandits \cite{wang2018regional} etc. In the structured bandits setup, mean rewards corresponding to different arms are related to one another through a hidden parameter $\theta$. The underlying value of $\theta$ is fixed and the mean reward mappings $\theta \rightarrow \mu_k(\theta)$ are known. Similarly, \cite{pandey2007multi-armed} studies a dependent armed bandit problem, that also has mean rewards corresponding to different arms related to one another. It considers a parametric model, where mean rewards of different arms are drawn from one of the $K$ clusters, each having an unknown parameter $\pi_{i}$. All of these  models are fundamentally different from the problem setting considered in this paper. We list some of the differences with the structured  bandits (and the model in \cite{pandey2007multi-armed}) below.

\begin{enumerate}
    \item In this work we explicitly model the correlations in the rewards of a user corresponding to different arms. While mean rewards are related to each other in structured bandits and \cite{pandey2007multi-armed}, the reward realizations are not necessarily correlated.
    \item The model studied here is non-parametric in the sense that there is no hidden feature space as is the case in structured bandits and the work of Pandey et al. \cite{pandey2007multi-armed}. 
    \item In structured bandits, the reward mappings from $\theta$ to $\mu_k(\theta)$ need to be {\em exact}. If they happen to be incorrect, then the algorithms for structured bandit cannot be used as they rely on the correctness of $\mu_k(\theta)$ to construct confidence intervals on the unknown parameter $\theta$. In contrast, the model studied here relies on the pseudo-rewards being upper bounds on conditional expectations. These bounds need not be tight and the proposed C-Bandit algorithms adjust accordingly and perform at least as well as the corresponding classical bandit algorithm. 
    \item Similar to the structured bandits, the unimodal bandit framework \cite{combes2014unimodal, trinh2020solving} also assumes a structure on the mean rewards and does not capture the reward correlations explicitly. Under the unimodal framework, it is assumed that the mean reward $\mu_k$ as a function of the arms $k$ has a single mode. Instead of assuming that mean rewards are related to one another, our framework explicitly captures the inherent correlations in the form of pseudo-reward. Unimodal bandits have often been used to model the problem of link-rate adaptation in wireless networks, where the mean-reward corresponding to different choices of arms is a unimodal function \cite{qureshi2019fast, qureshi2020online, gupta2019link}. The same problem can also be dealt by modeling the correlations explicitly through the pseudo-reward framework described in this paper.
\end{enumerate}

\section{The Proposed \textsc{C-BANDIT} Algorithms}
\label{sec:algorithm}

We now propose an approach that extends the classical multi-armed bandit algorithms (such as UCB, Thompson Sampling, KL-UCB) to the correlated MAB setting. At each round $\slot+1$, the UCB algorithm \cite{auer2002finite} selects the arm with the highest UCB index $I_{k,t}$, i.e.,
\begin{align}
\arm_{\slot+1} = \arg \max_{\arm \in \setofArmsK} I_{k,t},  \quad I_{k,t} =  \hat{\mu}_\arm(t) + B\sqrt{\frac{2 \log (\slot)}{\pulls_\arm(\slot)}}, \label{eqn:UCB1_index}
\end{align} 
where $\hat{\meanReward}_\arm(t)$ is the empirical mean of the rewards received from arm $\arm$ until round $t$, and $\pulls_\arm(\slot)$ is the number of times arm $\arm$ is pulled till round $\slot$. The second term in the UCB index causes the algorithm to explore arms that have been pulled only a few times (i.e., those with small $\pulls_\arm(\slot)$). Recall that we assume all rewards to be bounded within an interval of size $B$. When the index $t$ is implied by context, we abbreviate $\hat{\meanReward}_\arm(t)$ and $ \Index_\arm(t)$ to $\hat{\meanReward}_\arm$ and $\Index_\arm$ respectively in the rest of the paper.

Under Thompson sampling \cite{agrawal2013further}, the arm $\arm_{\slot+1} =\arg$ $\max_{\arm \in \setofArmsK} S_{\arm,\slot}$ is selected at time step $t+1$. Here, $S_{\arm,\slot}$ is the sample obtained from the posterior distribution of $\mu_\arm$, That is,
\begin{align}
\arm_{\slot+1} = \arg\max_{\arm \in \setofArmsK} S_{\arm,\slot}, \quad S_{\arm,\slot} \sim \mathcal{N}\left(\hat{\mu}_\arm(\slot), \frac{\beta B}{\pulls_\arm(\slot) + 1}\right),
\label{eqn:ts_index}
\end{align}
here $\beta$ is a hyperparameter for the Thompson Sampling algorithm

In the correlated MAB framework, the rewards observed from one arm can help estimate the rewards from other arms. Our key idea is to use this information to reduce the amount of exploration required. We do so by evaluating the \emph{empirical pseudo-reward} of every other arm $\ell$ with respect to an arm $\arm$ at each round $t$. Using this additional information, we identify some arms as \emph{empirically non-competitive} at round $t$, and only for this round, do not consider them as a candidate in the UCB/Thompson Sampling/(any other bandit algorithm). 

\subsection{Empirical Pseudo-Rewards}
\label{sec:pseudo_reward}

In our correlated MAB framework, pseudo-reward of arm $\ell$ with respect to arm $\arm$ provides us an estimate on the reward of arm $\ell$ through the reward sample obtained from arm $\arm$. We now define the notion of empirical pseudo-reward  which can be used to obtain an \textit{optimistic estimate} of $\mu_\ell$ through just reward samples of arm $\arm$.

\begin{defn}[Empirical and Expected Pseudo-Reward]
\label{defn:empirical_pseudo_reward}
After $\slot$ rounds, arm $\arm$ is pulled $\pulls_\arm(\slot)$ times. Using these $\pulls_\arm(\slot)$ reward realizations, we can construct the empirical pseudo-reward $\estimateMean_{\ell, \arm}(\slot)$ for each arm $\ell$ with respect to arm $\arm$ as follows. 
\begin{align}
\estimateMean_{\ell, \arm}(\slot) \triangleq \frac{\sum_{\tau=1}^{\slot} \indicator_{k_\tau = k} \ \estimateReward_{\ell, \arm}(\reward_{k_\tau})}{\pulls_\arm(\slot)}, \qquad \ell \in \{1,\ldots, K\} \setminus \{k\}, %, ~~\ell \neq k.
\end{align}
The expected pseudo-reward of arm $\ell$ with respect to arm $\arm$ is defined as
\begin{align}
\expectedPseudoReward_{\ell, \arm} \triangleq \E{\estimateReward_{\ell, \arm}(R_k)}.
\end{align}
For convenience, we set $\hat{\phi}_{k,k}(t) = \hat{\mu}_k(t)$ and $\phi_{k,k} = \mu_k$.
\end{defn}

Observe that $\E{s_{\ell,k}(R_k)} \geq \E{\E{R_\ell | R_k=r}} = \mu_\ell$. Due to this, empirical pseudo-reward $\estimateMean_{\ell, \arm}(\slot)$ can be used to obtain an estimated upper bound on $\mu_{\ell}$. Note that the empirical pseudo-reward $\estimateMean_{\ell, \arm}(\slot)$ is defined with respect to arm $\arm$ and it is only a function of the rewards observed by pulling arm $\arm$.

\subsection{The \textsc{C-Bandit} Algorithm}
\label{sec:modified_ucb}
Using the notion of empirical pseudo-rewards, we now describe a 3-step procedure to fundamentally generalize classical bandit algorithms for the correlated MAB setting.

\noindent
\textbf{Step 1: Identify the set $\mathcal{S}_t$ of significant arms:} At each round $t$, define $\mathcal{S}_t$ to be the set of arms that have at least $t/K$ samples, i.e., $\mathcal{S}_t = \{k \in \mathcal{K}: n_k(t) > \frac{t}{K}\}$. As $\mathcal{S}_t$ is the set of arms that have relatively \emph{large} number of samples, we use these arms for the purpose of identifying \emph{empirically competitive} and \emph{empirically non-competitive} arms. Furthermore, define $k^{\text{emp}}(t)$ to be the arm that has the highest empirical mean in set $\mathcal{S}_t$, i.e., $k^{\text{emp}}(t) = \argmax_{k \in \mathcal{S}_t} \hat{\mu}_k(t)$. 
\footnote{If one were to use all arms (even those that have few samples) to identify empirically non-competitive arms, it can lead to incorrect inference, as pseudo-rewards with few samples will have larger noise, which can in-turn lead to elimination of the optimal arm. Using only the arms that have been pulled $\frac{t}{K}$ times in $\mathcal{S}_t$, allows us to ensure that the non-competitive arms are pulled only $\OO(1)$ times as we show in \Cref{sec:regret}. }

\noindent
\textbf{Step 2: Identify the set of \emph{empirically competitive} arms $\mathcal{A}_t$ :} 

Using the empirical mean, $\hat{\mu}_{k^{\text{emp}}}(t)$, of the arm with highest empirical reward in the set $\mathcal{S}_t$, we define the notions of empirically non-competitive and empirically competitive arms below.

\begin{defn}[Empirically Non-Competitive arm at round $t$]
An arm $k$ is said to be Empirically Non-Competitive at round $t$, if $\min_{\ell \in \mathcal{S}_t} \hat{\phi}_{k, \ell}(t) < \hat{\mu}_{k^{\text{emp}}}(t)$.
\end{defn}

\begin{defn}[Empirically Competitive arm at round $t$]
An arm $k$ is said to be Empirically Competitive at round $t$ if $\min_{\ell \in \mathcal{S}_t} \hat{\phi}_{k, \ell}(t) \geq \hat{\mu}_{k^{\text{emp}}}(t)$. The set of all empirically competitive arms at round $t$ is denoted by $\mathcal{A}_t$.
\end{defn}

The expression $\min_{\ell \in \mathcal{S}_t} \hat{\phi}_{k, \ell}(t)$ provides the tightest estimated upper bound on mean of arm $k$, through the samples of arms in $\mathcal{S}_t$. If this estimated upper bound is smaller than the estimated mean of $k^{\text{emp}}(t)$, then we call arm $k$ as \emph{empirically non-competitive} as it seems unlikely to be optimal through the samples of arms in $\mathcal{S}_t$. If the estimated upper bound of arm $k$ is greater than $\hat{\mu}_{k^{\text{emp}}}(t)$ , i.e., $\min_{\ell \in \mathcal{S}_t} \hat{\phi}_{k, \ell}(t) \geq \hat{\mu}_{k^{\text{emp}}}(t)$, we call arm $k$ as empirically competitive at round $t$, as it cannot be inferred as sub-optimal through samples of arms in $\mathcal{S}_t$. Note that the set of empirically competitive and empirically non-competitive arms is evaluated at each round $t$ and hence an arm that is empirically non-competitive at round $t$ may be empirically competitive in subsequent rounds.

\noindent
\textbf{Step 3: Play BANDIT algorithm in $\{\mathcal{A}_t \cup \{k^{\text{emp}}(t)\}\}$} As empirically non-competitive arm seem sub-optimal to be selected at round $t$, we only consider the set of empirically competitive arms along with $k^{\text{emp}}(t)$ in this step of the algorithm. At round $t$, we play a BANDIT algorithm from the set $\mathcal{A}_t \cup \{k^{\text{emp}}(t)\}$. For instance, the C-UCB pulls the arm $$k_t = \arg \max_{\arm \in \{\mathcal{A}_t \cup \arm^{\textit{emp}}(t) \}} I_{k,t-1},$$ 
where $\Index_{\arm,t-1}$ is the UCB index defined in \eqref{eqn:UCB1_index}. 
    
Similarly, C-TS pulls the arm $$k_t = \arg \max_{\arm \in \{\mathcal{A}_t \cup \arm^{\textit{emp}}(t) \}} S_{k,t-1},$$ where $S_{k,t}$ is the Thompson sample defined in \eqref{eqn:ts_index}). At the end of each round we update the empirical pseudo-rewards $\estimateMean_{\ell,\arm_\slot}(t)$ for all $\ell$, the empirical reward for arm $k_t$.

Note that our \textsc{C-BANDIT} approach allows using any classical Multi-Armed Bandit algorithm in the correlated Multi-Armed Bandit setting. This is important because some algorithms such as Thompson Sampling and KL-UCB are known to obtain much better empirical performance over UCB. Extending those to the correlated MAB setting allows us to have the superior empirical performance over UCB even in the correlated setting. This benefit is demonstrated in our simulations and experiments described in \Cref{sec:simulation} and \Cref{sec:experiments}.
\begin{rem}[Pseudo-lower bounds]
If suppose one had the information about pseudo-lower bounds (which are lower bounds on conditional expected rewards), then it is possible to use this in our correlated bandit framework. In step 2 of our algorithm, we identify an arm $k$ as empirically non-competitive if $\min_{\ell \in \mathcal{S}_t} \hat{\phi}_{k,\ell}(t) < \hat{\mu}_k^{\text{emp}}(t)$. We can maintain empirical pseudo-lower bound $\hat{w}_{i,j}(t)$ of each arm $i$ with respect to every other arm $j$. Then, we can replace the step 2 of our algorithm by calling an arm empirically non-competitive if $\min_{\ell \in \mathcal{S}_t} \hat{\phi}_{k,\ell}(t) < \max_{i \in \mathcal{S}_t} \max_{j \in \mathcal{S}_t} \hat{w}_{i,j}(t)$. In the situation where pseudo-lower bounds are unknown, they can be set to $-\infty$ and the algorithm reduces to the C-Bandit algorithm proposed in the paper. We can expect the empirical performance of this algorithm (which is aware of pseudo-lower bounds) to be slightly better than the C-Bandit algorithm. However, its regret guarantees will be the same as that of the C-Bandit algorithm. This is because pseudo-upper bounds are crucial to deciding whether an arm is competitive/non-competitive (defined in the next section), and pseudo-lower bounds are not. Put differently, even in the presence of pseudo-lower bound, the definition of non-competitive and competitive arms (Definition 5) remains the same.  
\end{rem}

\begin{algorithm}[t]
\hrule 
\vspace{0.1in}
\begin{algorithmic}[1]
\STATE \textbf{Input:} Pseudo-rewards $s_{\ell,k}(r)$ %, Total number of rounds, $T$. 
\STATE \textbf{Initialize:} $\pulls_\arm = 0, \Index_\arm = \infty$ for all $\arm \in \{1, 2, \dots \numArms\}$ 
\FOR{ each round $\slot$}
\STATE Find $\mathcal{S}_t = \{k: n_k(t) \geq \frac{t}{K}\}$, the arm that have been pulled significant number of times till $t-1$. Define $k^{\text{emp}}(t) = \argmax_{k \in \mathcal{S}_t} \hat{\mu}_k(t)$.
\STATE Initialize the empirically competitive set $\mathcal{A}_t$ as an empty set $\{ \}$.
\FOR{$\arm \in \mathcal{K}$}
\IF {$ \min_{\ell \in \mathcal{S}_t} \hat{\phi}_{k,\ell}(t) \geq \hat{\mu}_{k^{\text{emp}}}(t)$} 
\STATE Add arm $\arm$ to the empirically competitive set: $\mathcal{A}_t = \mathcal{A}_t \cup \{\arm\}$
\ENDIF
\ENDFOR
\STATE Apply UCB1 over arms in $\mathcal{A}_t \cup \{\arm^{\text{emp}}(t)\} $ by pulling arm $\arm_\slot = \arg \max_{\arm \in \mathcal{A}_t \cup \{\arm^{\text{emp}}(t)\}} \Index_\arm(t-1)$
\STATE Receive reward $\reward_{\slot}$, and update $\pulls_{\arm_\slot}(t) = \pulls_{\arm_\slot}(t) + 1$
\STATE Update Empirical reward: %\GJ{Check this new empirical reward update}:
$\hat{\meanReward}_{\arm_\slot}(\slot) = \frac{\hat{\meanReward}_{\arm_\slot}(\slot-1)(\pulls_{\arm_\slot}(t)-1) + r_t }{\pulls_{\arm_\slot}(t)}$
\STATE Update the UCB Index: $\Index_{\arm_\slot}(t) = \hat{\meanReward}_{\arm_\slot}(t) + B\sqrt{\frac{2 \log \slot}{\pulls_{\arm_\slot}(t)}}$
\STATE Update empirical pseudo-rewards for all $ \arm \neq \arm_\slot$:  $\estimateMean_{\arm, \arm_\slot}(\slot) = \sum_{\tau: \arm_\tau = \arm_\slot} \estimateReward_{\arm,\arm_\tau}( \reward_\tau)/\pulls_{\arm_\slot}(t)$
\ENDFOR
\end{algorithmic}
\vspace{0.1in}
\hrule
\caption{C-UCB Correlated UCB Algorithm}
\label{alg:formalAlgo}
\vspace{-0.2cm}
\end{algorithm}

\begin{algorithm}[t]
\hrule 
\vspace{0.1in}
\begin{algorithmic}[1]
\STATE Steps 1 - 10 as in C-UCB
 \STATE \textbf{Apply TS over arms in $\mathcal{A}_t \cup \{\arm^{\text{emp}}(t)\} $} by pulling arm $\arm_\slot = \arg \max_{\arm \in \mathcal{A}_t \cup \{\arm^{\text{emp}}(t)\}} S_{k,t}$, where $S_{k,t} \sim \mathcal{N}\left(\hat{\mu}_k(t), \frac{\beta B}{n_k(t) + 1}\right)$.
\STATE Receive reward $\reward_{\slot}$, and update $\pulls_{\arm_\slot}(t)$, $\hat{\mu}_{k_t}(t)$ and empirical pseudo-rewards $\hat{\phi}_{k,k_t}(t)$. 
\end{algorithmic}
\vspace{0.1in}
\hrule
\caption{C-TS Correlated TS Algorithm}
\label{alg:formalAlgoTS}
\vspace{-0.2cm}
\end{algorithm}
\section{Regret Analysis and Bounds}
\label{sec:regret}
We now characterize the performance of the C-UCB algorithm by analyzing the expected value of the cumulative regret \eqref{eqn:regretdefinition}. The expected regret can be expressed as
\begin{align}
    \E{\regret(\totalPulls)} &=
   \sum_{\arm = 1}^{\numArms} \E{\pulls_\arm(T)} \Delta_\arm, 
    \label{eqn:exp_regret}
\end{align}
where $\Delta_\arm = \meanReward_{\arm^*} - \meanReward_\arm $ is the sub-optimality gap of arm $\arm$ with respect to the optimal arm $\arm^*$, and $\pulls_\arm(T)$ is the number of times arm $\arm$ is pulled in $\totalPulls$ slots.

For the regret analysis, we assume without loss of generality that the rewards are between 0 and 1 for all $\arm \in \{ 1, 2, \dots \numArms \}$. Note that the \textsc{C-Bandit} algorithms do not require this condition, and the regret analysis can also be generalized to any bounded rewards.

\subsection{Competitive and Non-competitive arms with respect to Arm k}
\label{sec:competitive}
For the purpose of regret analysis in \Cref{sec:regret}, we need to understand which arms are empirically competitive as $t \rightarrow \infty$. We do so by defining the notions of Competitive and Non-Competitive arms.
\begin{defn}[Non-Competitive and Competitive arms]
An arm $\ell$ is said to be non-competitive if the expected reward of optimal arm $k^*$ is larger than the expected pseudo-reward of arm $\ell$ with respect to the optimal arm $\arm^*$, i.e, if, $\optimistGap_{\ell,\arm^*} \triangleq \mu_{k^*} - \phi_{\ell,k^*} > 0$. Similarly, an arm $\ell$ is said to be competitive if $\optimistGap_{\ell,\arm^*} = \mu_{k^*} - \phi_{\ell, k^*}  <= 0$. The unique best arm $k^*$ has $\optimistGap_{\arm^*,\arm^*} = \mu_{k^*} - \phi_{k^*, k^*} = 0$ and is counted in the set of competitive arms.\footnote{As $t \rightarrow \infty$, only the optimal arm will remain in $\mathcal{S}_t$, and hence the definition of competitive arms only compares the expected mean of arm $k^*$ and expected pseudo-reward of arm $k$ with respect to arm $k^*$}
\end{defn}

We refer to $\optimistGap_{\ell,\arm^*}$ as the pseudo-gap of arm $\ell$ in the rest of the paper. These notions of competitiveness are used in the regret analysis in \Cref{sec:regret}. The central idea behind our correlated \textsc{C-BANDIT} approach is that after pulling the optimal arm $k^*$ sufficiently large number of times, the non-competitive (and thus sub-optimal) arms can be classified as     empirically non-competitive with increasing confidence, and thus need not be explored. As a result, the non-competitive arms will  be pulled only $\OO(1)$ times. However, the competitive arms cannot be discerned as sub-optimal by just using the rewards observed from the optimal arm, and have to be explored $\OO(\log \totalPulls)$ times each. Thus, we are able to reduce a $K$-armed bandit to a $C$-armed bandit problem, where $C$ is the number of competitive arms. \footnote{Observe that $k^*$ and subsequently $C$ are both unknown to the algorithm. Before the start of the algorithm, it is not known which arm is optimal/competitive/non-competitive. Algorithm works in an online manner by evaluating the noisy notions of competitiveness, i.e., empirically competitive arms, and ensures that only $C - 1$ of the arms are pulled $\OO(\log T)$ times.} We show this by bounding the regret of \textsc{C-BANDIT} approach. 

\subsection{Regret Bounds} 
In order to bound $\E{\regret(\totalPulls)}$ in \eqref{eqn:exp_regret}, we can analyze the expected number of times sub-optimal arms are pulled, that is, $\E{\pulls_\arm(\totalPulls)}$, for all $\arm \neq \arm^*$. \Cref{thm:NonCompetitiveBound} and \Cref{thm:CompetitiveBound} below show that $\E{\pulls_\arm(\totalPulls)}$ scales as $O(1)$ and $O(\log T)$ for non-competitive and competitive arms respectively. Recall that a sub-optimal arm is said to be non-competitive if its pseudo-gap $\optimistGap_{\arm,\arm^*}>0$, and competitive otherwise.

\begin{thm}[Expected Pulls of a Non-competitive Arm]
\label{thm:NonCompetitiveBound}

The expected number of times a non-competitive arm with pseudo-gap $\optimistGap_{\arm,\arm^*}$ is pulled by C-UCB is upper bounded as
\begin{align}
\E{\pulls_\arm(\totalPulls)} &\leq \numArms \slot_0 + K^3 \sum_{\slot= \numArms \slot_0}^{\totalPulls} 2 \left(\frac{\slot}{\numArms}\right)^{-2} + \sum_{\slot = 1}^{\totalPulls} 3\slot^{-3}, \label{eqn:upper_bnd_comp}\\
&= \OO(1), 
\end{align}
where, 
{
\begin{align*}
&\slot_0  = \inf \bigg\{\tau \geq 2: \Delta_{\text{min}} , \optimistGap_{\arm,\arm^*} \geq 4 \sqrt{\frac{2K\log \tau}{\tau}} \bigg\}. 
\end{align*}
}
\end{thm}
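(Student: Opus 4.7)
The plan is to show that a non-competitive arm $k$ is selected only finitely many times in expectation by decomposing the selection event according to the two pathways through which C-UCB picks an arm, namely $k \in \mathcal{A}_t$ or $k = k^{\text{emp}}(t)$, and bounding each via Hoeffding's inequality. The $K t_0$ term absorbs an initial burn-in period; for $t \geq K t_0$ the definition of $t_0$ guarantees that $\tilde{\Delta}_{k,k^*}$ and $\Delta_{\min}$ both exceed $4\sqrt{2K \log t / t}$, which is exactly what is needed to turn exponential concentration into $(t/K)^{-2}$-decay.

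First I would write
\[
\E{n_k(T)} \leq K t_0 + \sum_{t = K t_0}^{T-1} \Pr\!\left(k_{t+1} = k,\, k^* \in \mathcal{S}_t\right) + \sum_{t = K t_0}^{T-1} \Pr(k^* \notin \mathcal{S}_t).
\]
For the first sum, condition on $k^* \in \mathcal{S}_t$, which implies $n_{k^*}(t) \geq t/K$. If C-UCB selects $k$, then either (a) $k \in \mathcal{A}_t$, forcing $\hat{\phi}_{k,k^*}(t) \geq \hat{\mu}_{k^{\text{emp}}}(t) \geq \hat{\mu}_{k^*}(t)$; since $\phi_{k,k^*} = \mu_{k^*} - \tilde{\Delta}_{k,k^*}$ with $\tilde{\Delta}_{k,k^*} > 0$, this requires $\hat{\phi}_{k,k^*}(t) - \phi_{k,k^*} \geq \tilde{\Delta}_{k,k^*}/2$ or $\mu_{k^*} - \hat{\mu}_{k^*}(t) \geq \tilde{\Delta}_{k,k^*}/2$; or (b) $k = k^{\text{emp}}(t)$, implying $n_k(t) \geq t/K$ and $\hat{\mu}_k(t) \geq \hat{\mu}_{k^*}(t)$, which in turn requires concentration failures of magnitude $\Delta_k/2 \geq \Delta_{\min}/2$. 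All deviations use $\geq t/K$ samples, so Hoeffding bounds each by $\exp(-\tilde{\Delta}_{k,k^*}^2 t / (2K))$ (or the analogous expression in $\Delta_{\min}$). For $t \geq K t_0$ these become $(t/K)^{-2}$, and a union bound over the at-most-$K$ reference arms in $\mathcal{S}_t$ together with the failure modes yields the $K^3 \cdot 2(t/K)^{-2}$ term.

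For the second sum, $\sum_t \Pr(k^* \notin \mathcal{S}_t)$, I would invoke a standard UCB-style argument adapted to the candidate set. Since $\phi_{k^*,k^*} = \mu_{k^*}$ makes $k^*$ pass the competitiveness test as soon as $k^* \in \mathcal{S}_t$ (failures of this passing being already counted in the Hoeffding tail above), one can reduce $\{k^* \notin \mathcal{S}_t\}$ to the event that a sub-optimal arm has been preferred to $k^*$ more than $t(1 - 1/K) - K$ times by the UCB index. Each such preference, by the usual analysis, requires either $\hat{\mu}_{k^*}(s) + \sqrt{2 \log s / n_{k^*}(s)} < \mu_{k^*}$ or that the chosen arm's empirical mean exceeds its true mean by more than its UCB radius, each of probability at most $s^{-3}$ via Hoeffding. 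Summing produces the $\sum 3 t^{-3}$ term. The final $O(1)$ equality then follows because $\sum t^{-2}$ and $\sum t^{-3}$ both converge.

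The main obstacle is the coupling between the restricted candidate set $\mathcal{A}_t \cup \{k^{\text{emp}}(t)\}$ and the assertion that $k^*$ remains a valid candidate at almost every round. Standard UCB analyses rely crucially on the optimal arm always being in the running; here, one must separately show that the competitiveness filter does not spuriously exclude $k^*$ (except with probability covered by the same Hoeffding tails used above) and then thread this through both failure modes (a) and (b) with a single burn-in threshold $t_0$. Once this coordination is in place, the remaining steps reduce to mechanical applications of Hoeffding's inequality and summation of convergent series.
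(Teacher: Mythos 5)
Your overall decomposition and your treatment of the first sum do mirror the paper: the paper likewise splits on whether $k^*$ is the heavily-sampled arm (its proof uses the event $n_{k^*}(t) = \max_{k'} n_{k'}(t)$, interchangeable with $k^* \in \mathcal{S}_t$ for this purpose), and its Lemmas~2--3 bound $\Pr(k_{t+1}=k,\, n_{k^*}(t) \geq t/K)$ via exactly the two cases you list: $k \in \mathcal{A}_t$ (forcing $\hat{\phi}_{k,k^*}(t) \geq \hat{\mu}_{k^{\text{emp}}}(t) \geq \hat{\mu}_{k^*}(t)$, a deviation of order $\tilde{\Delta}_{k,k^*}$ over at least $t/K$ samples) and $k = k^{\text{emp}}(t)$ (a deviation of order $\Delta_k/2$). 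That part is sound, up to constants and the fact that you attribute the $K^3(t/K)^{-2}$ and $3t^{-3}$ terms to the opposite sums compared with the theorem, which is only bookkeeping.

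The genuine gap is in the second sum, $\sum_t \Pr(k^* \notin \mathcal{S}_t)$. Your argument rests on the claim that each round at which a sub-optimal arm is preferred to $k^*$ by the UCB index requires a concentration failure of probability at most $s^{-3}$. That is false: in the standard UCB decomposition, $I_{k'}(s) > I_{k^*}(s)$ forces a deviation only when $n_{k'}(s)$ already exceeds roughly $8\log s/\Delta_{k'}^2$; when $k'$ is under-sampled, its exploration bonus alone makes the preference likely with no deviation at all (if each preference truly had probability $s^{-3}$, vanilla UCB would have bounded regret). These early, ``legitimate'' preferences are precisely what could deplete $n_{k^*}(t)$ below $t/K$, so your per-preference bound does not control the event you need. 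What is required --- and what the paper's Lemmas~4 and~5 supply --- is (i) a bound on $\Pr(k_{t+1}=k',\, n_{k'}(t) \geq s)$ for $s > t/(2K)$ and $t > t_0$, where conditioning on $k'$ already having more than $t/(2K)$ pulls makes its UCB bonus smaller than $\Delta_{\min}/2$ (this is where the $\Delta_{\min}$ part of the definition of $t_0$ is used) and where one must also add the probability $\Pr(E_1(t)) \leq 2Kt\exp\left(-t\Delta_{\min}^2/(2K)\right)$ that the competitiveness filter wrongly excludes $k^*$ at that round; and (ii) a telescoping/occupancy argument converting these per-round bounds into $\Pr\left(n_{k'}(t) \geq t/K\right) \leq (2K+2)K\left(t/K\right)^{-2}$, then a union over $k' \neq k^*$. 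Relatedly, your parenthetical that failures of $k^*$ passing the competitiveness test are ``already counted in the Hoeffding tail above'' does not hold: your first sum is restricted to rounds where arm $k$ itself is pulled and $k^* \in \mathcal{S}_t$, so exclusions of $k^*$ at rounds where \emph{other} arms are pulled --- the rounds that actually keep $k^*$ out of $\mathcal{S}_t$ --- are not covered there and must be handled inside the second sum, as the paper does through the $E_1(t)$ term in Lemma~4. Note also that $k^* \in \mathcal{S}_t$ by itself does not make $k^*$ pass the test, since $\min_{\ell \in \mathcal{S}_t}\hat{\phi}_{k^*,\ell}(t)$ can be dragged below $\hat{\mu}_{k^{\text{emp}}}(t)$ by another arm in $\mathcal{S}_t$; this is exactly the content of Lemma~1.
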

\begin{thm}[Expected Pulls of a Competitive Arm]
\label{thm:CompetitiveBound}
The expected number of times a competitive arm is pulled by C-UCB algorithm is upper bounded as 
\begin{align}
\E{\pulls_\arm(\totalPulls)} 
&\leq 8 \frac{\log (\totalPulls)}{\gap_\arm^2} + \left(1 + \frac{\pi^2}{3}\right) + \sum_{\slot = 1}^{\totalPulls} 2K\slot \exp\left(- \frac{\slot \gap_{\text{min}}^2}{2 \numArms}\right), \label{eqn:upper_bnd_non_comp}\\
&= \OO(\log \totalPulls) \quad \text{ where  } \gap_{\text{min}} = \min_k \Delta_\arm  > 0.
\end{align}
\end{thm}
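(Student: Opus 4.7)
The plan is to adapt the classical UCB regret analysis of Auer, Cesa-Bianchi, and Fischer to the setting where C-UCB restricts the UCB maximization to the candidate set $\mathcal{A}_t \cup \{k^{\text{emp}}(t)\}$, and to absorb the ``restriction error'' into a separate bad-event term that contributes only $\OO(1)$ to the regret. The leading $8 \log(T)/\Delta_k^2 + (1 + \pi^2/3)$ will come from the standard UCB argument applied inside the candidate set, and the extra sum will pay for the possibility that the optimal arm is temporarily missing from it.

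The starting point is the classical decomposition
$$n_k(T) \leq u + \sum_{t=1}^{T} \indicator\{k_t = k,\, n_k(t-1) \geq u\}, \qquad u = \left\lceil \frac{8\log T}{\Delta_k^2}\right\rceil.$$
Taking expectations, I split the summand into two cases depending on whether $k^*$ is in the candidate set at round $t$. In \emph{Case~1}, $k^* \in \mathcal{A}_t \cup \{k^{\text{emp}}(t)\}$: then $k_t = k$ forces $I_{k,t-1} \geq I_{k^*,t-1}$, which in turn (given $n_k(t-1) \geq u$) forces either $\hat{\mu}_{k^*}(t-1)$ to be anomalously small or $\hat{\mu}_k(t-1)$ anomalously large, each of probability $\leq t^{-4}$ by Hoeffding. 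Summing over $t$ yields the standard $1+\pi^2/3$ slack and the $8\log T/\Delta_k^2$ leading term. In \emph{Case~2}, $k^* \notin \mathcal{A}_t \cup \{k^{\text{emp}}(t)\}$: the summand is bounded by $\Pr\bigl(k^* \notin \mathcal{A}_t \cup \{k^{\text{emp}}(t)\}\bigr)$, independently of which arm was pulled.

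To handle Case~2, I would argue that $k^* \notin \mathcal{A}_t \cup \{k^{\text{emp}}(t)\}$ requires either (i) $k^* \notin \mathcal{S}_t$, i.e., $n_{k^*}(t) < t/K$, or (ii) $k^* \in \mathcal{S}_t$ but some other arm in $\mathcal{S}_t$ has strictly higher empirical mean than $k^*$ (noting that since $\hat{\phi}_{k^*,k^*}(t) = \hat{\mu}_{k^*}(t)$, the only way $k^*$ can land in $\mathcal{A}_t$ when $k^* \in \mathcal{S}_t$ is via $k^* = k^{\text{emp}}(t)$). In either case, some sub-optimal arm $j$ has $n_j(t) \geq t/K$ samples and its empirical mean exceeds $\mu_j + \Delta_{\min}/2$. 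Hoeffding's inequality applied to $\hat{\mu}_j$ with $\geq t/K$ samples gives probability $\leq \exp(-t\Delta_{\min}^2/(2K))$; union-bounding over the at most $K$ sub-optimal arms and over the $\leq 2t$ possible histories at which the overshoot could first occur contributes the prefactor $2Kt$, matching the final term of the claimed bound, which sums to $\OO(1)$ since the exponential dominates the polynomial.

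The main obstacle is the Case~2 bound, because reducing $\{k^* \notin \mathcal{A}_t \cup \{k^{\text{emp}}(t)\}\}$ to a concentration event on a specific sub-optimal arm's empirical mean is not automatic: C-UCB's dynamic candidate set could in principle starve $k^*$ for long stretches. The saving observation is the pigeonhole step $\sum_j n_j(t) = t$, which on the event $k^* \notin \mathcal{S}_t$ forces some $j \neq k^*$ to satisfy $n_j(t) \geq t/K$ and hence provides the deterministic lower bound on the sample count needed to make Hoeffding meaningful. Once Case~2 is controlled this way, Case~1 is essentially the classical UCB argument carried out over the restricted candidate set rather than over all $K$ arms, and the union-bound structure is unchanged from the standard proof.
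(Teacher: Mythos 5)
Your Case~1 matches the paper's treatment (the paper invokes the classical Auer-type bound on $\sum_t \Pr(I_k > I_{k^*}, k_t = k)$, which is where $8\log T/\Delta_k^2 + 1 + \pi^2/3$ comes from), but your Case~2 has a genuine gap. You reduce the event $\{k^* \notin \mathcal{A}_t \cup \{k^{\text{emp}}(t)\}\}$ to ``(i) $k^* \notin \mathcal{S}_t$ or (ii) $k^*$ is beaten on empirical mean inside $\mathcal{S}_t$'' and then assert that \emph{in either case} some sub-optimal arm $j$ with $n_j(t) \geq t/K$ has $\hat{\mu}_j(t) > \mu_j + \Delta_{\text{min}}/2$. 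For sub-case (i) this is false: pigeonhole gives only that some sub-optimal arm has at least $t/K$ samples, which carries no deviation of its empirical mean from $\mu_j$, so Hoeffding yields nothing, and $\Pr(n_{k^*}(t) < t/K)$ cannot be dispatched this way. Controlling events of the type $\{n_j(t) \geq t/K\}$ for sub-optimal $j$ is exactly what the paper's algorithm-specific Lemmas~6--7 do, and even those give only a polynomial bound valid for $t > Kt_0$ --- they are used for the \emph{non-competitive} arm theorem and would not reproduce the exponential term claimed in \eqref{eqn:upper_bnd_non_comp}. (In sub-case (ii) you also drop a branch: $\hat{\mu}_j(t) > \hat{\mu}_{k^*}(t)$ can occur because $\hat{\mu}_{k^*}(t)$ is deflated rather than $\hat{\mu}_j(t)$ inflated; since $k^* \in \mathcal{S}_t$ there, this is fixable by adding the deflation event, but it is missing as written.)

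The paper's route avoids your sub-case (i) entirely, and this is the idea you are missing: exclusion of $k^*$ from the candidate set does not hinge on whether $k^* \in \mathcal{S}_t$, because $k^*$ can belong to $\mathcal{A}_t$ through its pseudo-reward estimates even when it has few pulls. The exclusion event is precisely $\{k^* \neq k^{\text{emp}}(t)$ and $\min_{\ell \in \mathcal{S}_t} \hat{\phi}_{k^*,\ell}(t) < \hat{\mu}_{k^{\text{emp}}}(t)\}$, and since every $\ell \in \mathcal{S}_t$ has at least $t/K$ samples while $\phi_{k^*,\ell} \geq \mu_{k^*}$ and $\mu_{k^{\text{emp}}} \leq \mu_{k^*} - \Delta_{\text{min}}$, this event forces either $\hat{\mu}_{k^{\text{emp}}}(t) > \mu_{k^*} - \Delta_{\text{min}}/2$ (an inflated, well-sampled sub-optimal mean) or $\min_{\ell \in \mathcal{S}_t}\hat{\phi}_{k^*,\ell}(t) < \mu_{k^*} - \Delta_{\text{min}}/2$ (a deflated, well-sampled pseudo-reward estimate of $k^*$). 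Both are concentration failures on quantities built from at least $t/K$ i.i.d.\ samples, which is what produces the $2Kt\exp\bigl(-t\Delta_{\text{min}}^2/(2K)\bigr)$ term (Lemma~5 in the paper). Replacing your (i)/(ii) dichotomy with this pseudo-reward-based argument closes the gap and recovers the stated bound.
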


Substituting the bounds on $\E{\pulls_\arm(\totalPulls)}$ derived in \Cref{thm:NonCompetitiveBound} and \Cref{thm:CompetitiveBound} into \eqref{eqn:exp_regret}, we get the following upper bound on expected regret.

\begin{coro}[Upper Bound on Expected Regret]
\label{thm:upper_bnd_exp_regret}
The expected cumulative regret of the C-UCB and C-TS algorithms is upper bounded as
\begin{align}
\E{\regret(\totalPulls)} &\leq \sum_{\arm \in \setofArmsC \setminus \{k^*\}} \Delta_\arm  U^{(c)}_\arm(\totalPulls) + \sum_{\arm' \in \{ 1, \ldots , \numArms \} \setminus \{ \setofArmsC \}
}\Delta_{\arm'} U^{(nc)}_{\arm'}(\totalPulls) , \label{eqn:upper_bnd_exp_regret}\\
&= (C-1) \cdot \OO(\log \totalPulls) + \OO(1), \label{eqn:upper_bnd_exp_regret_order}
\end{align} 
where $\mathcal{C} \subseteq \{ 1, \ldots , \numArms \}$ is set of competitive arms with cardinality $C$, $U^{(c)}_\arm (\totalPulls)$ is the upper bound on $\E{\pulls_\arm(\totalPulls)}$ for competitive arms given in \eqref{thm:CompetitiveBound}, and $U^{(nc)}_\arm(\totalPulls)$ is the upper bound for non-competitive arms given in \eqref{thm:NonCompetitiveBound}.
\end{coro}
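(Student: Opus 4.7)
The plan is to derive the corollary as a direct consequence of the regret decomposition in \eqref{eqn:exp_regret} together with the per-arm bounds established in Theorems~1 and~2. Since the corollary states an inequality and then reports its order of growth, the proof has essentially two parts: (i) substitute the per-arm pull-count bounds into the regret identity, splitting arms by competitiveness; (ii) collect the asymptotics.

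First, I would write the expected regret as
\begin{equation*}
\E{\regret(\totalPulls)} \;=\; \sum_{\arm \neq \arm^*} \E{\pulls_\arm(\totalPulls)} \, \gap_\arm,
\end{equation*}
using \eqref{eqn:exp_regret} and the fact that $\gap_{\arm^*} = 0$. Next, partition $\{1,\ldots,\numArms\} \setminus \{\arm^*\}$ into competitive sub-optimal arms (those in $\setofArmsC \setminus \{\arm^*\}$) and non-competitive arms (those in $\{1,\ldots,\numArms\} \setminus \setofArmsC$). For each competitive arm $\arm$, apply Theorem~2 to bound $\E{\pulls_\arm(\totalPulls)} \leq U^{(c)}_\arm(\totalPulls)$; for each non-competitive arm $\arm'$, apply Theorem~1 to bound $\E{\pulls_{\arm'}(\totalPulls)} \leq U^{(nc)}_{\arm'}(\totalPulls)$. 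Substituting yields \eqref{eqn:upper_bnd_exp_regret} verbatim.

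For the order expression \eqref{eqn:upper_bnd_exp_regret_order}, I would use that Theorem~2 gives $U^{(c)}_\arm(\totalPulls) = \OO(\log \totalPulls)$ per competitive arm (since the leading term is $8 \log(\totalPulls)/\gap_\arm^2$ and the remaining terms are $\OO(1)$: the constant $1 + \pi^2/3$ and the series $\sum_{\slot=1}^{\totalPulls} 2 \numArms \slot \exp(-\slot \gap_{\min}^2/(2\numArms))$, which converges because it is dominated by a geometrically decaying tail once $\slot$ is large). Similarly, Theorem~1 gives $U^{(nc)}_{\arm'}(\totalPulls) = \OO(1)$, because $\numArms \slot_0$ is a constant in $\totalPulls$, $\sum_{\slot \geq \numArms \slot_0} 2 \numArms^3 (\slot/\numArms)^{-2}$ converges, and $\sum_{\slot \geq 1} 3 \slot^{-3}$ converges. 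Summing the at-most $C-1$ competitive contributions and the at-most $\numArms - C$ non-competitive contributions, and absorbing the bounded gaps $\gap_\arm \leq 1$ into the $\OO(\cdot)$ constants, delivers $(C-1)\cdot\OO(\log \totalPulls) + \OO(1)$.

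There is no real obstacle here, as this is a bookkeeping corollary: the substantive work lies in Theorems~1 and~2. The only minor point to state carefully is why the exponential-tail term in Theorem~2 is indeed $\OO(1)$; this follows because $\gap_{\min} > 0$ is a fixed problem parameter, so $2\numArms \slot \exp(-\slot \gap_{\min}^2/(2\numArms))$ is summable over $\slot$, yielding a finite constant independent of $\totalPulls$. With that remark, the bound on $\E{\regret(\totalPulls)}$ in \eqref{eqn:upper_bnd_exp_regret} and its order-form in \eqref{eqn:upper_bnd_exp_regret_order} both follow. The same argument applies to C-TS provided the analogous per-arm bounds of Theorems~1 and~2 hold for C-TS, as claimed in the corollary.
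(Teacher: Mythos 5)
Your proposal is correct and matches the paper's own argument, which simply combines the per-arm pull-count bounds of Theorems~1 and~2 with the regret decomposition in \eqref{eqn:exp_regret}, splitting arms into competitive and non-competitive sets. Your added remarks on why the exponential-tail and polynomial-tail series are $\OO(1)$ are fine but not a departure from the paper's route.
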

\vspace{0.05cm}

\subsection{Proof Sketch}

We now present an outline of our regret analysis of C-UCB. A key strength of our analysis is that it can be extended very easily to any \textsc{C-BANDIT} algorithm. The results independent of last step in the algorithm are presented in Appendix B, while the rigorous regret upper bounds for C-UCB is presented in Appendix D. We also present a regret analysis for C-TS in a scenario where $K = 2$, and TS is employed with Beta priors in Appendix E.

There are three key components to prove the result in \Cref{thm:NonCompetitiveBound} and \Cref{thm:CompetitiveBound}. The first two components hold independent of which bandit algorithm (UCB/TS/KL-UCB) is used for selecting the arm from the set of competitive arms, which makes our analysis easy to extend to any \textsc{C-BANDIT} algorithm. The third step is specific to the last step in C-BANDIT algorithm. We analyse the third component for C-UCB to provide its rigorous regret results.

\noindent
\textbf{i) Probability of optimal arm being identified as empirically non-competitive at round $t$ (denoted by $\Pr(E_1(t))$) is small.} In particular, we show that  $$\Pr(E_1(t)) \leq 2Kt \exp\left(-\frac{t\Delta_{\text{min}}^2}{2K}\right).$$ This ensures that the optimal arm is identified as empirically non-competitive only $\OO(1)$ times. We show that the number of times a competitive arm is pulled is bounded as 
\begin{equation}
\E{n_k(T)} \leq \sum_{t = 1}^T \Pr(E_1(t)) + \Pr(E^c_1(t), k_t = k, I_{k,t-1} > I_{k^*,t-1}).
\end{equation}
The first term sums to a constant, while the second term is upper bounded by the number of times UCB pulls the sub-optimal arm $k$. Due to this the upper bound on the number of pulls of competitive arm by C-UCB / C-TS is only an additive constant more than the upper bound on the number of pulls for an arm by UCB / TS algorithms and hence we have same pre-log constants for the upper bound on the pulls of competitive arms.   

\noindent
\textbf{ii) Probability of identifying a non-competitive arm as empirically competitive jointly with optimal arm being pulled more than $\frac{t}{K}$ times is small.} Notice that the first two steps of our algorithm involve identifying the set of arms $\mathcal{S}_t$ that have been pulled at least $\frac{t}{K}$ times, and eliminating arms which are empirically non-competitive with respect to the set $\mathcal{S}_t$ for round $t$. We show that the joint event that arm $k^* \in \mathcal{S}_t$ and a non-competitive arm $k$ is identified as empirically non-competitive is small. Formally, 
\begin{equation}
\Pr\left(k_{t+1} = k, n_{k^*}(t) \geq \frac{t}{K}\right) \leq t\exp\left(-\frac{t \tilde{\Delta}_{k,k^*}}{2K}\right). \label{eqn:eq2}
\end{equation}
This occurs because upon obtaining a \textit{large} number of samples of arm $k^*$, expected reward of arm $k^*$ (i.e., $\mu_{k^*}$) and expected pseudo-reward of arm $k$ with respect to arm $k^*$ (i.e., $\phi_{k,k^*}$) can be estimated \textit{fairly accurately}. Since the pseudo-gap of arm $k$ is positive (i.e., $\mu_{k^*} > \phi_{k,k^*}$), the probability that arm $k$ is identified as empirically competitive is small. 
An implication of \eqref{eqn:eq2} is that the expected number of times a non-competitive arm is identified as empirically competitive jointly with the optimal arm having at least $\frac{t}{K}$ pulls at round $t$ is bounded above by a constant. 

\noindent
iii) \textbf{Probability that a sub-optimal arm is pulled more than $t/K$ times at round $t$ is small.} Formally, we show that for C-UCB, we have
\begin{equation}
    \Pr\left(n_k(t) \geq \frac{t}{K}\right) \leq (2K + 2) \left(\frac{t}{K}\right)^{-2} \quad \forall t > Kt_0, k \neq k^*
\label{eq:eq3}
\end{equation}
This component of our analysis is specific to the classical bandit algorithm used in \textsc{C-BANDIT}. Intuitively, a result of this kind should hold for any \textit{good performing} classical multi-armed bandit algorithm.  
We reach the result of \eqref{eq:eq3} in C-UCB by showing that 
\begin{equation}
    \Pr\left(k_{t+1} = k, n_k(t) > \frac{t}{2K}\right) \leq t^{-3} \quad \forall t > t_0, k \neq k^*
\label{eq:eq4}
\end{equation}
The probability of selecting a sub-optimal arm $k$ after it has been pulled \textit{significantly} many times is small as with more number of pulls, the exploration component in UCB index of arm $k$ becomes small, and consequently it is likely to be smaller than the UCB index of optimal arm $k^*$ (as it has larger empirical mean reward or has been pulled fewer number of times). Our analysis in \Cref{lem:suboptimalNotPulled} shows how the result in \eqref{eq:eq4} can be translated to obtain \eqref{eq:eq3} (this translation is again not dependent on which bandit algorithm is used in \textsc{C-BANDIT}). 

We show that the expected number of pulls of a non-competitive arm $k$ can be bounded as
\begin{equation}
    \E{n_k(T)} \leq \sum_{t = 1}^{T} \Pr\left(k_{t+1} = k, k^* = \argmax_{k} n_k(t)\right) + \Pr\left(k^* \neq \argmax_{k} n_k(t) \right)
\label{eqn:noncompNum}
\end{equation}
The first term in \eqref{eqn:noncompNum} is $\OO(1)$ due to \eqref{eqn:eq2} and the second term is $\OO(1)$ due to \eqref{eq:eq3}. Refer to Appendix D for rigorous regret analysis of C-UCB.

\subsection{Discussion on Regret Bounds}
\normalfont

\noindent
\textbf{Competitive Arms.} Recall than an arm is said to be competitive if $\mu_{k^*}$ (i.e., expected reward from arm $k^*$) $> \E{\phi_{k,k^*}} = \E{\tilde{\mathbb{E}}[R_{k'} | R_k]}$. Since the distribution of reward of each arm is unknown, initially the Algorithm does not know which arm is \textit{competitive} and which arm is \textit{non-competitive}.

\vspace{0.1cm}
\noindent
\textbf{Reduction in effective number of arms.} Interestingly, our result from \Cref{thm:NonCompetitiveBound} shows that the C-UCB algorithm, that operates in a sequential fashion, makes sure that \textit{non-competitive} arms are pulled only $\OO(1)$ times. Due to this, only the competitive arms are pulled $\OO(\log T)$ times. Moreover, the pre-log terms in the upper bound of UCB and C-UCB for these arms is the same. In this sense, our \textsc{C-BANDIT} approach reduces a $K$-armed bandit problem to a $C$-armed bandit problem. Effectively only $C-1 \leq K-1$ arms are pulled $\OO(\log T)$ times, while other arms are stopped being pulled after a finite time.

\begin{table}[]
\centering
\begin{tabular}{|l|l|l|l|}
\hline
\textbf{$p_1(r)$} & \textbf{r} & \textbf{$s_{2,1}(r)$} & \textbf{$s_{3,1}(r)$} \\ \hline
0.2               & \textbf{0} & 0.7                     & 2                     \\ \hline
0.2               & \textbf{1} & 0.8                     & 1.2                     \\ \hline
0.6               & \textbf{2} & 2                     & 1                     \\ \hline
\end{tabular}
\caption{Suppose Arm 1 is optimal and its unknown probability distribution is $(0.2,0.2,0.6)$, then $\mu_1 = 1.4$, while $\phi_{2,1} = 1.5$ and $\phi_{3,1} = 1.2$. Due to this Arm 2 is Competitive while Arm 3 is non-competitive}
\label{tab:comp}
\vspace{-0.2cm}
\end{table}

\noindent
Depending on the joint probability distribution, different arms can be optimal, competitive or non-competitive. \Cref{tab:comp} shows a case where arm 1 is optimal and the reward distribution of arm $1$ is $(0.2,0.2,0.6)$, which leads to $\mu_1 = 1.4 > \phi_{3,1} = 1.2$ and $\mu_1 = 1.4 < \phi_{2,1} = 1.5$. Due to this Arm 2 is competitive while Arm 3 is non-competitive.

\vspace{0.1cm}
\noindent
\textbf{Achieving Bounded Regret.}
If the set of competitive arms $\setofArmsC$ is a singleton set containing only the optimal arm (i.e., the number of competitive arms $C =1$), then our algorithm will lead to (see \eqref{eqn:upper_bnd_exp_regret_order}) an expected regret of $\OO(1)$, instead of the typical $\OO(\log T)$ regret scaling in classic multi-armed bandits. One such scenarion in which this can happen is if pseudo-rewards $s_{k,k^*}$ of all arms with respect to optimal arm $k^*$ match the conditional expectation of arm $k$. Formally, if $s_{k,k^*} = \E{R_k | R_{k^*}} \forall{k}$, then $\E{s_{k,k^*}} = \E{R_k} = \mu_k < \mu_{k^*}$. Due to this, all sub-optimal arms are non-competitive and our algorithms achieve only $\OO(1)$ regret. We now evaluate a lower bound result for a special case of our model, where rewards are correlated through a latent random variable $X$ as described in \Cref{subsec:specialCase}. 

We present a lower bound on the expected regret for the model described in \Cref{subsec:specialCase}. Intuitively, if an arm $\ell$ is \textit{competitive}, it can not be deemed sub-optimal by only pulling the optimal arm $\arm^*$ infinitely many times. This indicates that exploration is necessary for competitive sub-optimal arms. The proof of this bound closely follows that of the 2-armed classical bandit problem \cite{lai1985asymptotically}; i.e., we construct a new bandit instance under which a previously sub-optimal arm becomes optimal without affecting reward distribution of any other arm.

\begin{thm}[Lower Bound for Correlated MAB with latent random source]
\label{thm:lower_bnd_exp_regret}

For any algorithm that achieves a sub-polynomial regret, the expected cumulative regret for the model described in \Cref{subsec:specialCase} is lower bounded as

\begin{equation}
    \lim_{\totalPulls \rightarrow \infty}\inf \frac{\E{\regret (\totalPulls)}}{\log (\totalPulls)} \geq 
    \begin{cases} 
    \max_{\arm \in \setofArmsC}\frac{\Delta_k}{D(f_{R_k} || f_{\tilde{R}_k})} \quad &\text{if } C > 1\\
    0 \quad &\text{if } C = 1. 
    \end{cases}
\vspace{-0.2cm}
\end{equation}
\label{thm:lowerBound}
 \end{thm}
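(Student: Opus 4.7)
The plan is to adapt the classical two-armed change-of-measure lower bound of \cite{lai1985asymptotically} to the latent-source setting described in \Cref{subsec:specialCase}. The case $C=1$ is trivial: the bound is $0$ and is immediate from $\E{\regret(T)}\geq 0$ (in fact, our earlier \Cref{thm:upper_bnd_exp_regret} shows that this case admits $\OO(1)$ regret, so the lower bound is tight up to a constant). The substantive part is $C>1$, where I would argue separately, for \emph{each} competitive sub-optimal arm $k$, that $\liminf_{T\to\infty}\E{n_k(T)}/\log T \geq 1/D(f_{R_k}\,\|\,f_{\tilde R_k})$, and then obtain the stated bound by multiplying by $\Delta_k$ and taking the maximum over $k\in\mathcal{C}\setminus\{k^*\}$.

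The first step is to construct, for a fixed competitive arm $k$, a companion latent-source instance $\tilde\nu$ that the learner cannot distinguish from the true instance $\nu$ without sampling arm $k$. Concretely, I would keep the distribution of the latent variable $X$ and the conditional laws $f_{Y_j(X)\mid X}$ for every $j\neq k$ exactly as in $\nu$ (so that $f_{R_j}=f_{\tilde R_j}$ for all $j\neq k$), and modify only the conditional law $f_{Y_k(X)\mid X}$ within the prescribed envelope $[\underline{g}_k(x),\bar g_k(x)]$ to obtain a new marginal $f_{\tilde R_k}$ under which arm $k$ is strictly optimal. The key reason such a modification exists is precisely the definition of \emph{competitive}: because $\phi_{k,k^*}=\E{s_{k,k^*}(R_{k^*})}\geq\mu_{k^*}$ and $s_{k,k^*}(r)=\max_{\{x:\underline{g}_{k^*}(x)\leq r\leq\bar g_{k^*}(x)\}}\bar g_k(x)$, we may choose $\tilde Y_k(x)$ to equal $\bar g_k(x)$ almost surely (or arbitrarily close to it), which lifts $\tilde\mu_k$ above $\mu_{k^*}$ while respecting the envelope and leaving the other arms' marginals untouched. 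For boundary cases where $\phi_{k,k^*}=\mu_{k^*}$ exactly, a standard $\epsilon$-perturbation followed by $\epsilon\downarrow 0$ recovers the bound.

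Once the pair $(\nu,\tilde\nu)$ is in hand, the second step is the canonical change-of-measure inequality: for any policy that is uniformly good (sub-polynomial regret on every instance in the class), the log-likelihood ratio of the observed trajectory between $\nu$ and $\tilde\nu$ is driven entirely by the pulls of arm $k$, because the reward distributions of all other arms coincide under the two instances. Applying the Kullback--Leibler divergence decomposition to the stopping time $T$ and invoking a Markov-type argument (as in Lemma~1 of \cite{lai1985asymptotically}, or equivalently the Kaufmann--Capp\'e--Garivier formulation) yields
\begin{equation}
\liminf_{T\to\infty}\frac{\E_{\nu}\!\left[n_k(T)\right]}{\log T}\;\geq\;\frac{1}{D(f_{R_k}\,\|\,f_{\tilde R_k})}.
\end{equation}
Combining this with $\E{\regret(T)}\geq \Delta_k\,\E{n_k(T)}$ and maximizing over all competitive sub-optimal arms gives the statement of \Cref{thm:lowerBound}.

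The main obstacle is not the Lai--Robbins machinery itself (which is standard once $\tilde\nu$ is specified) but rather the construction of $\tilde\nu$ within the structural constraints of the latent-source model: we must stay inside the known envelope $[\underline{g}_k,\bar g_k]$, preserve $f_X$, and preserve every other arm's marginal reward distribution, all while making arm $k$ strictly optimal. Verifying that the competitiveness inequality $\phi_{k,k^*}\geq\mu_{k^*}$ is exactly the right condition for this feasibility problem -- and that a \emph{non}-competitive arm admits no such alternative, which is what makes the $C=1$ case achieve $\OO(1)$ regret rather than $\Omega(\log T)$ -- is the delicate structural point that distinguishes the proof from the classical argument.
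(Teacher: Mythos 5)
There is a genuine gap at exactly the point you call ``the delicate structural point,'' and it is where your construction diverges from what competitiveness actually gives you. You keep $f_X$ and all conditionals $f_{Y_j\mid X}$, $j\neq k$, fixed and push $Y_k(x)$ up to its envelope $\bar g_k(x)$; the best mean this can achieve for arm $k$ is $\E{\bar g_k(X)}$ under the \emph{original} latent distribution. But competitiveness only says $\phi_{k,k^*}=\E{s_{k,k^*}(R_{k^*})}\geq \mu_{k^*}$, and since $s_{k,k^*}(r)=\max_{\{x:\underline g_{k^*}(x)\leq r\leq \bar g_{k^*}(x)\}}\bar g_k(x)\geq \bar g_k(X)$ pointwise (the max is over \emph{all} latent values consistent with the observed $R_{k^*}$, not the realized one), we only get $\phi_{k,k^*}\geq \E{\bar g_k(X)}$, possibly strictly. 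So an arm can be competitive while $\E{\bar g_k(X)}<\mu_{k^*}$, in which case your alternative instance does not make arm $k$ optimal and the change-of-measure argument never gets off the ground. Concretely: let $X$ be uniform on $\{0,1\}$, let arm $k^*$ have envelope $[0,1]$ at both $x=0$ and $x=1$ with $\mu_{k^*}=0.7$, and let $\bar g_k(0)=0.2$, $\bar g_k(1)=0.9$. Then $s_{k,k^*}(r)=0.9$ for every $r$, so $\phi_{k,k^*}=0.9\geq\mu_{k^*}$ and arm $k$ is competitive, yet $\E{\bar g_k(X)}=0.55<0.7$; no modification of $f_{Y_k\mid X}$ alone, within the envelope and with $f_X$ fixed, can make arm $k$ optimal.

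The paper's proof resolves this by \emph{also} changing the latent distribution: it sets $\tilde X=S(R_{k^*})$ with probability $1-\epsilon_2$ (and uniform otherwise), where $S(R_{k^*})=\arg\max_{\underline g_{k^*}(x)< R_{k^*}<\bar g_{k^*}(x)}\bar g_k(x)$, sets $\tilde Y_k$ to the upper envelope (up to an $\epsilon_1$ mixture), and then re-weights the conditional law of arm $k^*$ via $f_{\tilde Y_{k^*}\mid X}(r)=f_{Y_{k^*}\mid X}(r)f_X(x)/f_{\tilde X}(x)$ so that the \emph{marginal} of arm $k^*$ is unchanged, i.e., $f_{\tilde R_{k^*}}=f_{R_{k^*}}$. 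Under this coupling $\E{\tilde R_k}$ approaches $\phi_{k,k^*}\geq\mu_{k^*}$ and exceeds $\mu_{k^*}$ after the $\epsilon$-perturbations, and the two-armed Lai--Robbins lemma (\Cref{lem:LaiRobbins2Arms}) is then applied with divergence $D(f_{R_k}\,\|\,f_{\tilde R_k})$. Note the invariance the paper preserves is only the marginal of the optimal arm $k^*$ (which is what the lemma requires), not the marginals of all $j\neq k$ as in your plan. Your remaining steps (the divergence decomposition, $\E{\regret(\totalPulls)}\geq\Delta_k\E{\pulls_k(\totalPulls)}$, maximizing over competitive arms, and the trivial $C=1$ case) match the paper, but the feasibility of the alternative instance must be argued via a modified latent distribution as above; as stated, your construction is infeasible for some competitive arms.
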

\vspace{-0.2cm} 
Here $f_{R_k}$ is the reward distribution of arm $k$, which is linked with $f_X$ since $R_k = Y_k(X)$. The term $f_{\tilde{R}_{k}}$ represents the reward distribution of arm $k$ in the new bandit instance where arm $k$ becomes optimal and distribution $f_{R_{k^{*}}}$ is unaffected. The divergence term represents "the amount of distortion needed in reward distribution of arm $k$ to make it better than arm $k^*$", and hence captures the problem difficulty in the lower bound expression. 

\vspace{0.1cm}
\noindent
\textbf{Bounded regret whenever possible for the special case of \Cref{subsec:specialCase}.}
From \Cref{thm:upper_bnd_exp_regret}, we see that whenever $C > 1$, our proposed algorithm achieves $\OO(\log \totalPulls)$ regret matching the lower bound given in Theorem \ref{thm:lowerBound} order-wise. Also, when $C = 1$, our algorithm achieves $\OO(1)$ regret. Thus, our algorithm achieves bounded regret whenever possible, i.e., when $C = 1$ for the model described in \Cref{subsec:specialCase}. In the general problem setting, a lower bound $\Omega(\log T)$ exists whenever it is possible to change the joint distribution of rewards such that the marginal distribution of optimal arm $k^*$ is unaffected and pseudo-rewards $s_{\ell,k}(r)$ still remain an upper bound on $\E{R_{\ell} | R_k = r}$ under the new joint probability distribution. In general, this can happen even if $C = 1$, we discuss one such scenario in the Appendix F and explain the challenges that need to come from the algorithmic side to meet the lower bound.
\section{Simulations}
\label{sec:simulation}
We now present the empirical performance of proposed algorithms. For all the results presented in this section, we compare the performance of all algorithms on the same reward realizations and plot the cumulative regret averaged over 100 independent trials. The shaded area represents error bars with one standard deviation. We set $\beta = 1$ for all TS and C-TS plots.

\subsection{Simulations with known pseudo-rewards}
\begin{table}[t]
\centering
\begin{tabular}{|l|l|l|l|l|}
\cline{1-2} \cline{4-5}
\textbf{r} & \textbf{$s_{2,1}(r)$} &  & \textbf{r} & \textbf{$s_{1,2}(r)$} \\ \cline{1-2} \cline{4-5} 
\textbf{0} & 0.7                   &  & \textbf{0} & 0.8                     \\ \cline{1-2} \cline{4-5} 
\textbf{1} & 0.4                   &  & \textbf{1} & 0.5                     \\ \cline{1-2} \cline{4-5} 
\end{tabular}
\\ \vspace{2mm}
\parbox{.45\linewidth}{
\centering
\begin{tabular}{|l|l|l|}
\hline
    \textbf{(a)}    & $R_1 = 0$ & $R_1 = 1$ \\ \hline
$R_2 = 0$ & 0.2       & 0.4       \\ \hline
$R_2 = 1$ & 0.2       & 0.2       \\ \hline
\end{tabular}
}
\hfill
\parbox{.45\linewidth}{
\centering
\begin{tabular}{|l|l|l|}
\hline
    \textbf{(b)}    & $R_1 = 0$ & $R_1 = 1$ \\ \hline
$R_2 = 0$ & 0.2       & 0.3       \\ \hline
$R_2 = 1$ & 0.4       & 0.1       \\ \hline
\end{tabular}
}
\caption{The top row shows the pseudo-rewards of arms 1 and 2, i.e., upper bounds on the conditional expected rewards (which are known to the player). The bottom row depicts two possible joint probability distribution (unknown to the player). Under distribution (a), Arm 1 is optimal whereas Arm 2 is optimal under distribution (b).}
\label{tab:pseudoBin2}
\vspace{-0.2cm}
\end{table}
\begin{figure}[t]
    \centering
    \includegraphics[width = 0.8\textwidth]{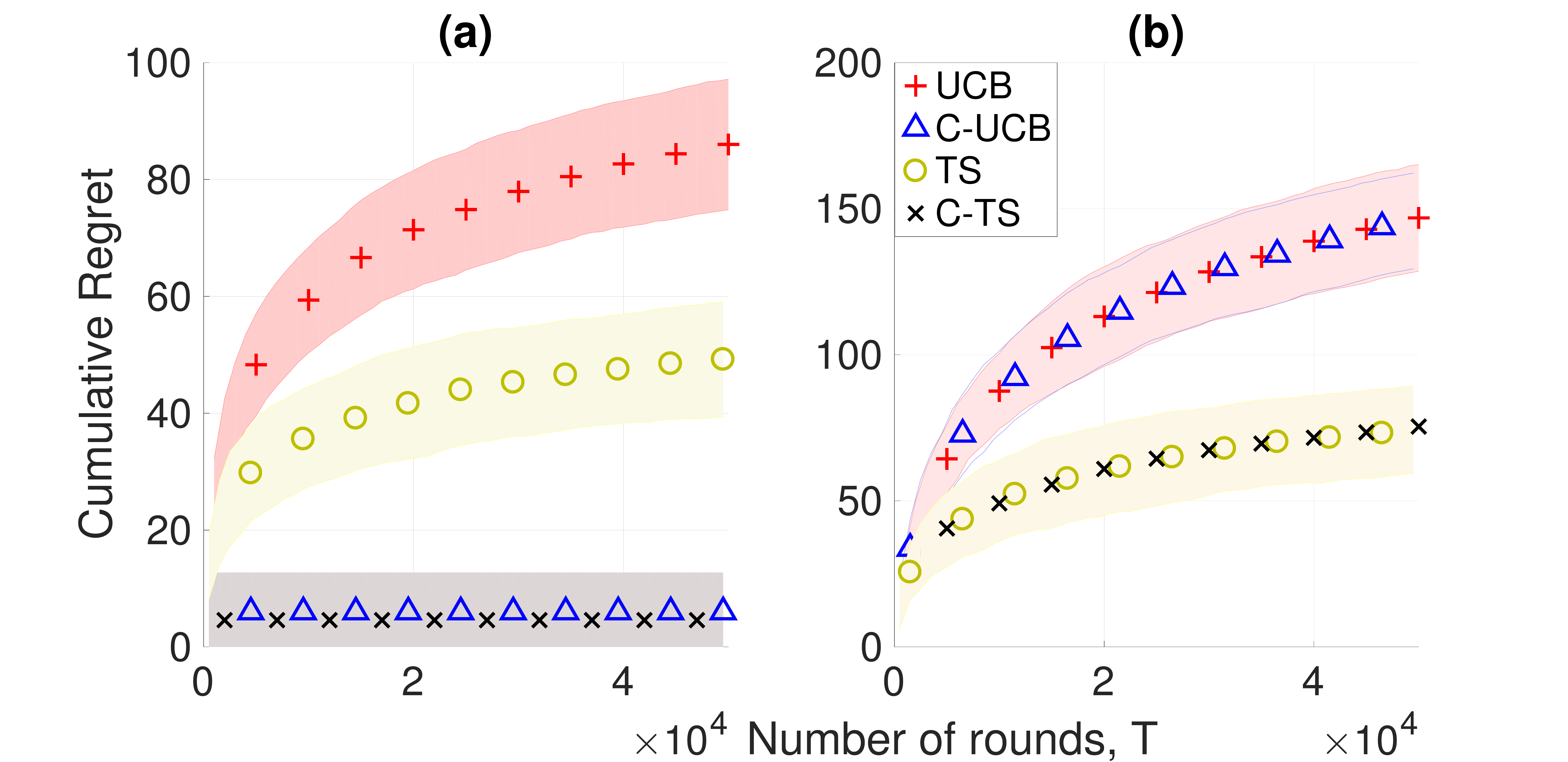}
    \caption{Cumulative regret for UCB, C-UCB, TS and C-TS corresponding to the problem shown in \Cref{tab:pseudoBin2}. For the setting (a) in \Cref{tab:pseudoBin2}, Arm 1 is optimal and Arm 2 is non-competitive, in setting (b) of \Cref{tab:pseudoBin2} Arm 2 is optimal while Arm 1 is competitive.}
    \label{fig:simulationWPseudoReward}
    \vspace{-0.3cm}
\end{figure}

Consider the example shown in \Cref{tab:pseudoBin}, with the top row showing the pseudo-rewards, which are known to the player, and the bottom row showing two possible joint probability distributions (a) and (b), which are unknown to the player. We show the simulation result of our proposed algorithms C-UCB, C-TS against UCB, TS in \Cref{fig:simulationWPseudoReward} for the setting considered in \Cref{tab:pseudoBin}.

\noindent
\textbf{Case (a): Bounded regret}. For the probability distribution (a), notice that Arm 1 is optimal with $\mu_1 = 0.6, \mu_2 = 0.4$. Moreover, $\phi_{2,1} = 0.4 \times 0.7 + 0.6 \times 0.4 = 0.52$. Since $\phi_{2,1} < \mu_1$, Arm 2 is non-competitive. Hence, in \Cref{fig:simulationWPseudoReward}(a), we see that our proposed C-UCB and C-TS Algorithms achieve bounded regret, whereas UCB, TS show logarithmic regret.

\noindent
\textbf{Case (b): All competitive arms}. For the probability distribution (b), Arm 2 is optimal with $\mu_2 = 0.5$ and $\mu_1 = 0.4$. The expected pseudo-reward of arm 1 w.r.t to arm 2 in this case is $\phi_{1,2} = 0.8 \times 0.5 + 0.5 \times 0.5 = 0.65$. Since $\phi_{1,2} > \mu_2$, the sub-optimal arm (i.e., Arm 1) is competitive and hence C-UCB and C-TS also end up exploring Arm 1. Due to this we see that C-UCB, C-TS achieve a regret similar to UCB, TS in \Cref{fig:simulationWPseudoReward}(b). C-TS has empirically smaller regret than C-UCB as Thompson Sampling performs better empirically than the UCB algorithm. The design of our C-Bandit approach allows the use of any other bandit algorithm in the last step, e.g., KL-UCB.

\subsection{Simulations for the latent random source model in \Cref{subsec:specialCase}}
\begin{figure}[t]
    \centering
    \includegraphics[width=0.6\textwidth]{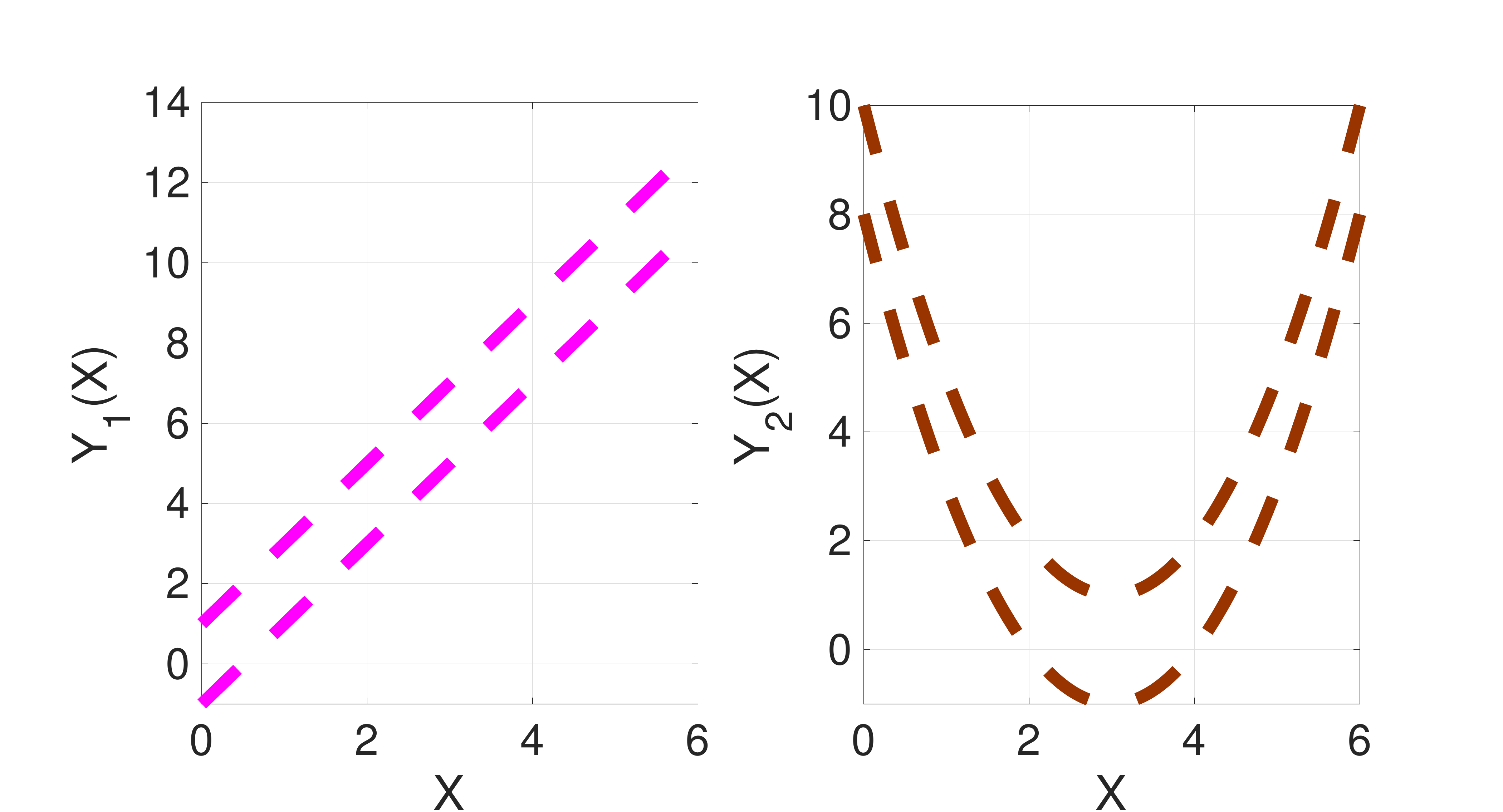}
    \caption{Rewards corresponding to the two arms are correlated through a random variable $X$ lying in $(0,6)$. The lines represent lower and upper bounds on reward of Arms 1 ,$Y_1(X)$, and 2, $Y_2(X)$, given the realization of random variable $X$.}
    \label{fig:illustrationLBUB}
    \vspace{-0.2cm}
\end{figure}

\begin{figure}[t]
    \centering
    \includegraphics[width = 0.8\textwidth]{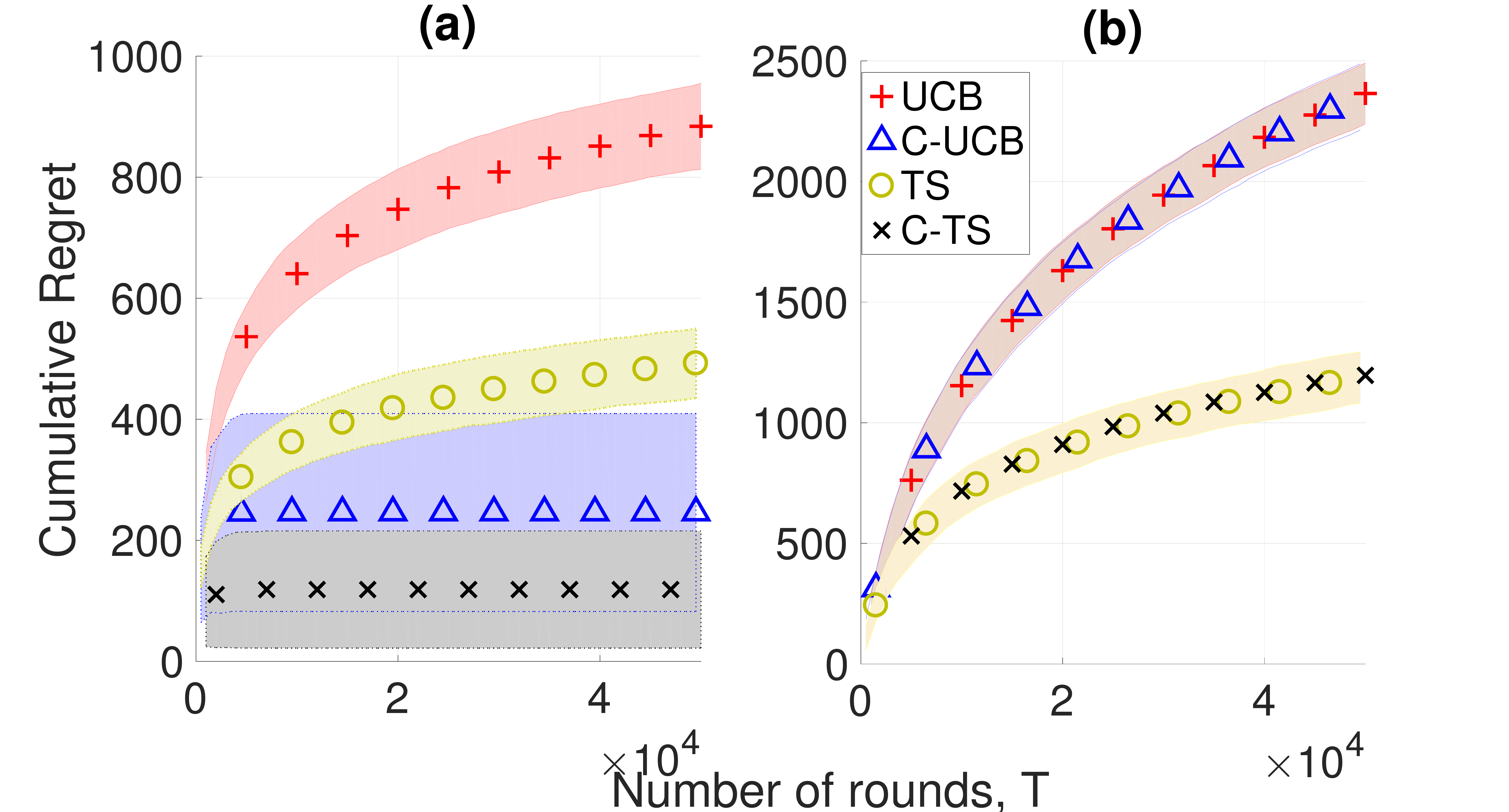}
    \caption{Simulation results for the example shown in \Cref{fig:illustrationLBUB}. In (a), $X \sim \text{Beta}(1,1)$ and in (b) $X \sim \text{Beta}(1.5,5)$. In case (a), Arm 1 is optimal while Arm 2 is non-competitive (C = 1), due to which we see that C-UCB and C-TS obtain bounded regret. Arm 2 is optimal for the distribution in (b) and Arm 1 is competitive, due to which $C=2$ and we see that C-UCB and C-TS attain a performance similar to UCB and TS.}
    \label{fig:upperlowerbdSim}
    \vspace{-0.3cm}
\end{figure}
We now show the performance of C-UCB and C-TS against UCB, TS for the model considered in \Cref{subsec:specialCase}, where rewards corresponding to different arms are correlated through a latent random variable $X$. We consider a setting where reward obtained from Arm 1, given a realization $x$ of $X$, is bounded between $2x - 1$ and $2x + 1$, i.e., $2X - 1 \leq Y_1(X) \leq 2X + 1$. Similarly, conditional reward of Arm 2 is, $(3-X)^2 - 1 \leq Y_2(X) \leq (3 - X)^2 + 1$. \Cref{fig:illustrationLBUB} demonstrates these upper and lower bounds on $Y_k(X)$. We run C-UCB, C-TS, TS and UCB for this setting for two different distributions of $X$. For the simulations, we set the conditional reward of both the arms to be distributed uniformly between the upper and lower bounds, however this information is not known to the Algorithms. 

\noindent
\textbf{Case (a): $X \sim \text{Beta}(1,1)$}. When $X$ is distributed as $X \sim \text{Beta}(1,1)$, Arm 1 is optimal while Arm 2 is non-competitive. Due to this, we observe that C-UCB and C-TS achieve bounded regret in \Cref{fig:upperlowerbdSim}(a).

\noindent
\textbf{Case (b): $X \sim \text{Beta}(1.5,5)$}. In the scenario where $X$ has the distribution $\text{Beta}(1.5,5)$, Arm 2 is optimal while Arm 1 is competitive. Due to this, C-UCB and C-TS do not stop exploring Arm 1 in finite time and we see the cumulative regret similar to UCB, TS in \Cref{fig:upperlowerbdSim}(b).

Our next simulation result considers a setting where the known upper and lower bounds on $Y_k(X)$ match and the reward $Y_k$ corresponding to a realization of $X$ is deterministic, i.e., $Y_k(X) = g_k(X)$. We show our simulation results for the reward functions described in \Cref{fig:sim_reward_funcs_cont} with three different distributions of $X$. Corresponding to $X \sim \text{Beta}(4,4)$, Arm 1 is optimal and Arms 2,3 are non-competitive leading to bounded regret for C-UCB, C-TS in \Cref{fig:teaserSim}(a). In setting (b), we consider $X \sim \text{Beta}(2,5)$ in which Arm 1 is optimal, Arm 2 is competitive and Arm 3 is non-competitive. Due to this, our proposed C-UCB and C-TS Algorithms stop pulling Arm 3 after some time and hence achieve significantly reduced regret relative to UCB in \Cref{fig:teaserSim}(b). For third scenario (c), we set $X \sim \text{Beta}(1,5)$, which makes Arm 3 optimal while Arms 1 and 2 are competitive. Hence, our algorithms explore both the sub-optimal arms and have a regret comparable to that of UCB, TS in \Cref{fig:teaserSim}(c). 

\begin{figure}[t]
    \centering
    \includegraphics[width=0.45\textwidth]{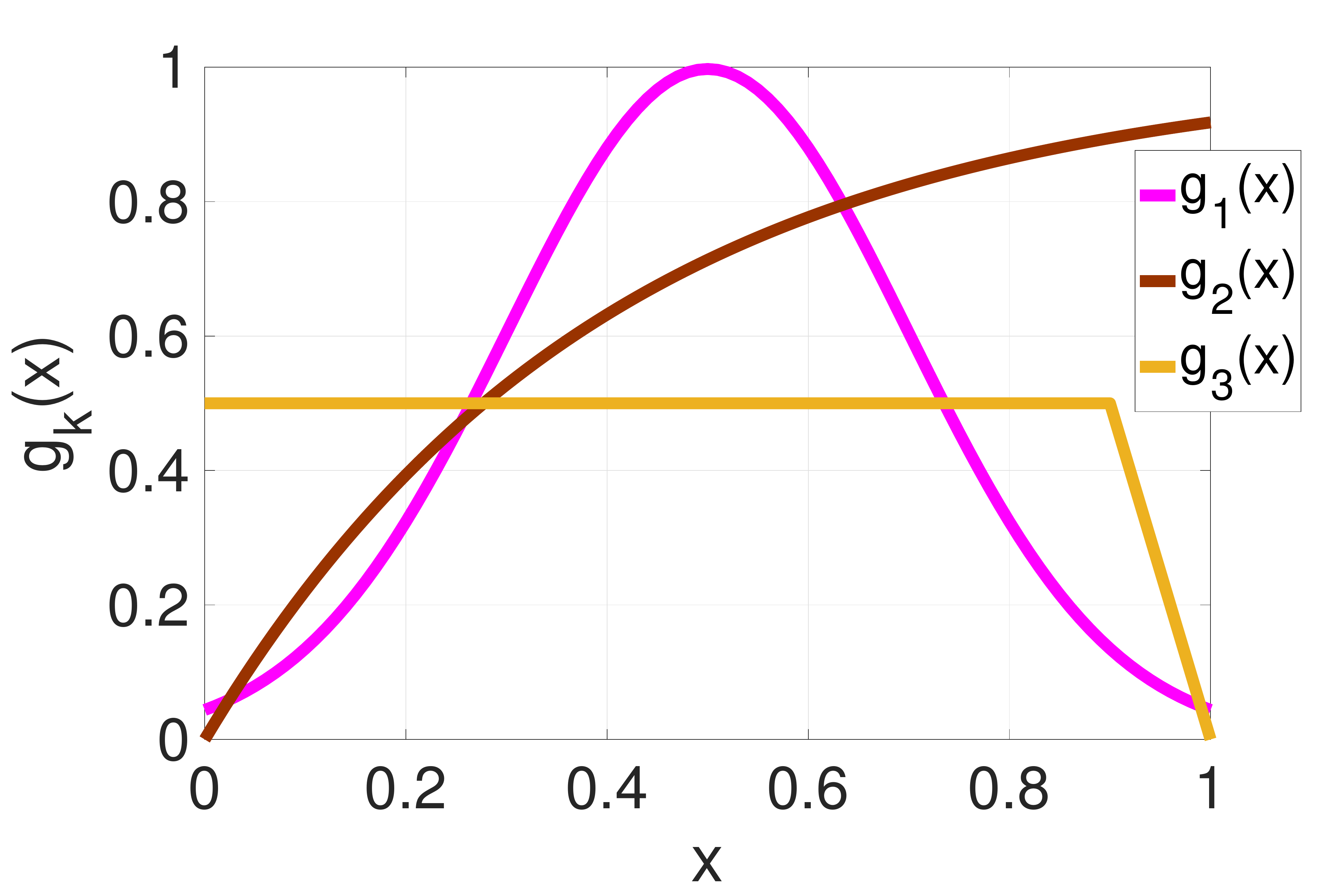}
    \caption{Reward Functions used for the simulation results presented in \Cref{fig:teaserSim}. The reward $g_k(X)$ is a function of a latent random variable $X$. For instance, when $X = 0.5$, reward from Arms 1,2 and 3 are $g_1(X) = 1$, $g_2(X) = 0.7135$ and $g_3(X) = 0.5$. }
    \label{fig:sim_reward_funcs_cont}
    \vspace{-0.2cm}
\end{figure}

\begin{figure}[t]
    \centering
    \includegraphics[width = 0.8\textwidth]{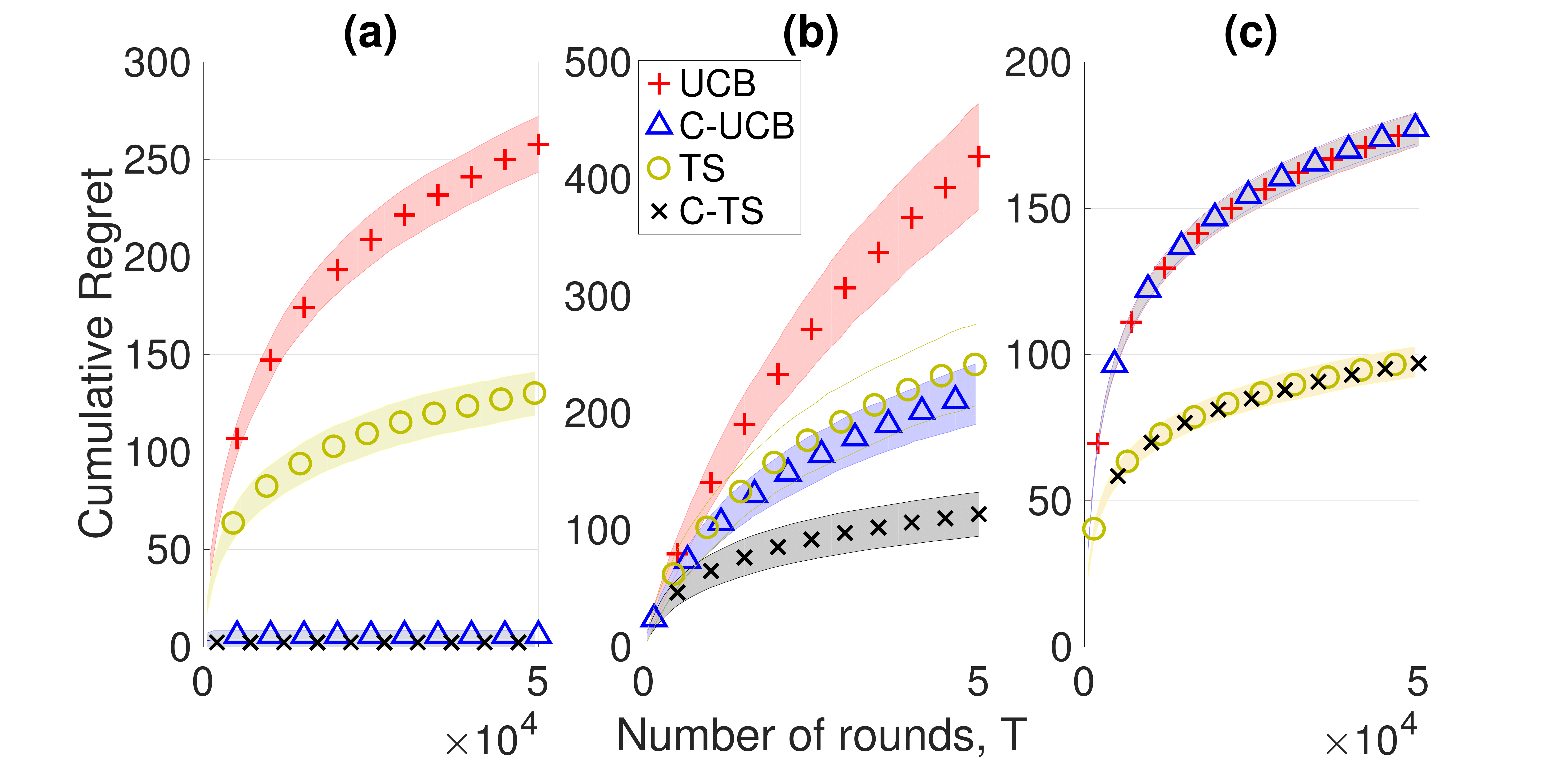}
    \caption{ \sl The cumulative regret of C-UCB and C-TS depend on $C$, the number of \emph{competitive} arms. The value of $C$ depends on the {\em unknown} joint probability distribution of rewards and is not known beforehand. We consider a setup where $C = 1$ in (a), $C = 2$ in (b) and $C = 3$ in (c). Our proposed algorithm pull only the $C-1$ competitive sub-optimal arms $\OO(\log T)$ times, as opposed to UCB, TS that pull all $K-1$ sub-optimal arms $\OO(\log T)$ times. Due to this, we see that our proposed algorithms achieve bounded regret when $C = 1$. When $C = 3$, our proposed algorithms perform as well as the UCB, TS algorithms.}
    \label{fig:teaserSim} 
    \vspace{-0.3cm}
\end{figure}

\section{Experiments}
\label{sec:experiments}

We now show the performance of our proposed algorithms in real-world settings. Through the use of \textsc{MovieLens} and \textsc{Goodreads} datasets, we demonstrate how the correlated MAB framework can be used in practical settings for  recommendation system applications. In such systems, it is possible to use the prior available data (from a certain population) to learn the correlation structure in the form of pseudo-rewards. When trying to design a campaign to maximize user engagement in a new unknown demographic, the learned correlation information in the form of pseudo-rewards can help significantly reduce the regret as we show from our results described below.

\subsection{Experiments on the \textsc{MovieLens} dataset}
The \textsc{MovieLens} dataset \cite{movielenspaper} contains a total of 1M ratings for a total of 3883 Movies rated by 6040 Users. Each movie is rated on a scale of 1-5 by the users. Moreover, each movie is associated with one (and in some cases, multiple) genres. For our experiments, of the possibly several genres associated with each movie, one is picked uniformly at random. To perform our experiments, we split the data into two parts, with the first half containing ratings of the users who  provided the most number of ratings. This half is used to learn the pseudo-reward entries, the other half is the test set which is used to evaluate the performance of the proposed algorithms. Doing such a split ensures that the rating distribution is different in the training and test data.

\begin{figure}[t]
    \centering
    \includegraphics[width = 0.6\textwidth]{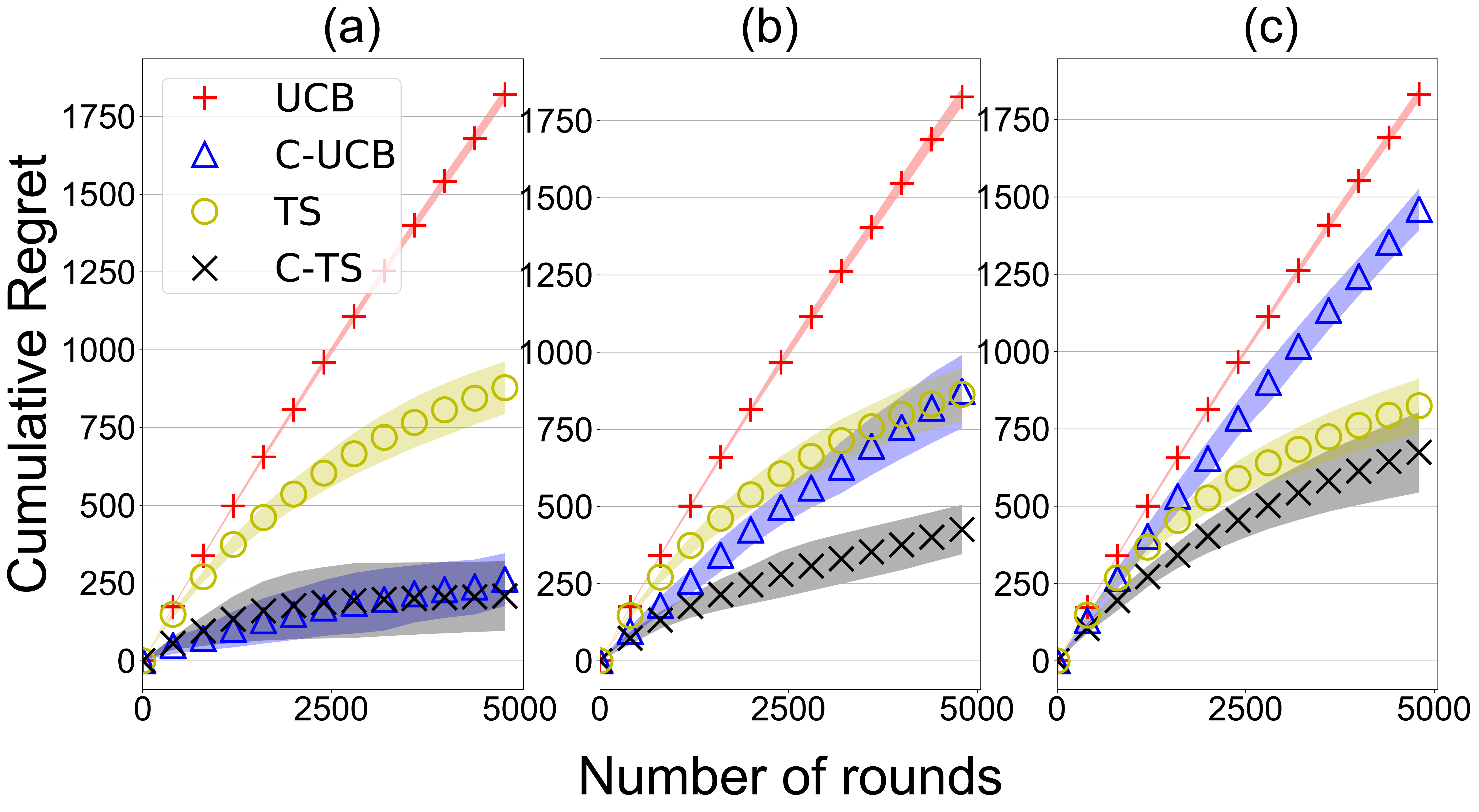}
    \caption{Cumulative regret for UCB, C-UCB, TS and C-TS for the application of recommending the best genre in the Movielens dataset, where $p$ fraction of the pseudo-entries are replaced with maximum reward $i.e., 5$. In $(a), p = 0.25$, for $(b), p = 0.50$ and $p = 0.7$ in $(c)$. The value of $C$ is $4,11$ and $13$ in $(a), (b)$ and $(c)$ respectively. As $C$ is smaller than $K$ (i.e., $18$) in each case, we see that C-UCB and C-TS outperform UCB and TS significantly.}
    \label{fig:genre_plot}
    \vspace{-0.3cm}
\end{figure}
\noindent
\textbf{Recommending the Best Genre.} In our first experiment, the goal is to provide the best genre recommendations to a population with unknown demographic. We use the training dataset to learn the pseudo-reward entries. The pseudo-reward entry $s_{\ell,k}(r)$ is evaluated by taking the empirical average of the ratings of genre $\ell$ that are rated by the users who rated genre $k$ as $r$. To capture the fact that  it might not be possible in practice to fill all pseudo-reward entries, we randomly remove $p$-fraction of the pseudo-reward entries. The removed pseudo-reward entries are replaced by the maximum possible rating, i.e., $5$ (as that gives a natural upper bound on the conditional mean reward). Using these pseudo-rewards,  we evaluate our proposed algorithms on the test data. Upon recommending a particular genre (arm), the rating (reward) is obtained by sampling one of the ratings for the chosen arm in the test data. Our experimental results for this setting are shown in \Cref{fig:genre_plot}, with $p = 0.25, 0.50$ and $0.70$ (i.e., fraction of pseudo-reward entries that are removed). We see that the proposed C-UCB and C-TS algorithms significantly outperform UCB and TS in all  three settings. For each of the three cases we also evaluate the value of $C$ (which is unknown to the algorithm), by always pulling the optimal arm and finding the size of empirically competitive set at $T = 5000$. The value of $C$ turned out to be $4,11$ and $13$ for $p = 0.25, 0.50$ and $0.70$. As $C < 18$ in each case, some of the 18 arms are stopped being pulled after some time and due to this, C-UCB and C-TS significantly outperform UCB and TS respectively. This shows that even when only a subset of the correlations are known, it is possible to exploit them to improve the performance of classical bandit algorithms.

\begin{figure}[t]
    \centering
    \includegraphics[width = 0.6\textwidth]{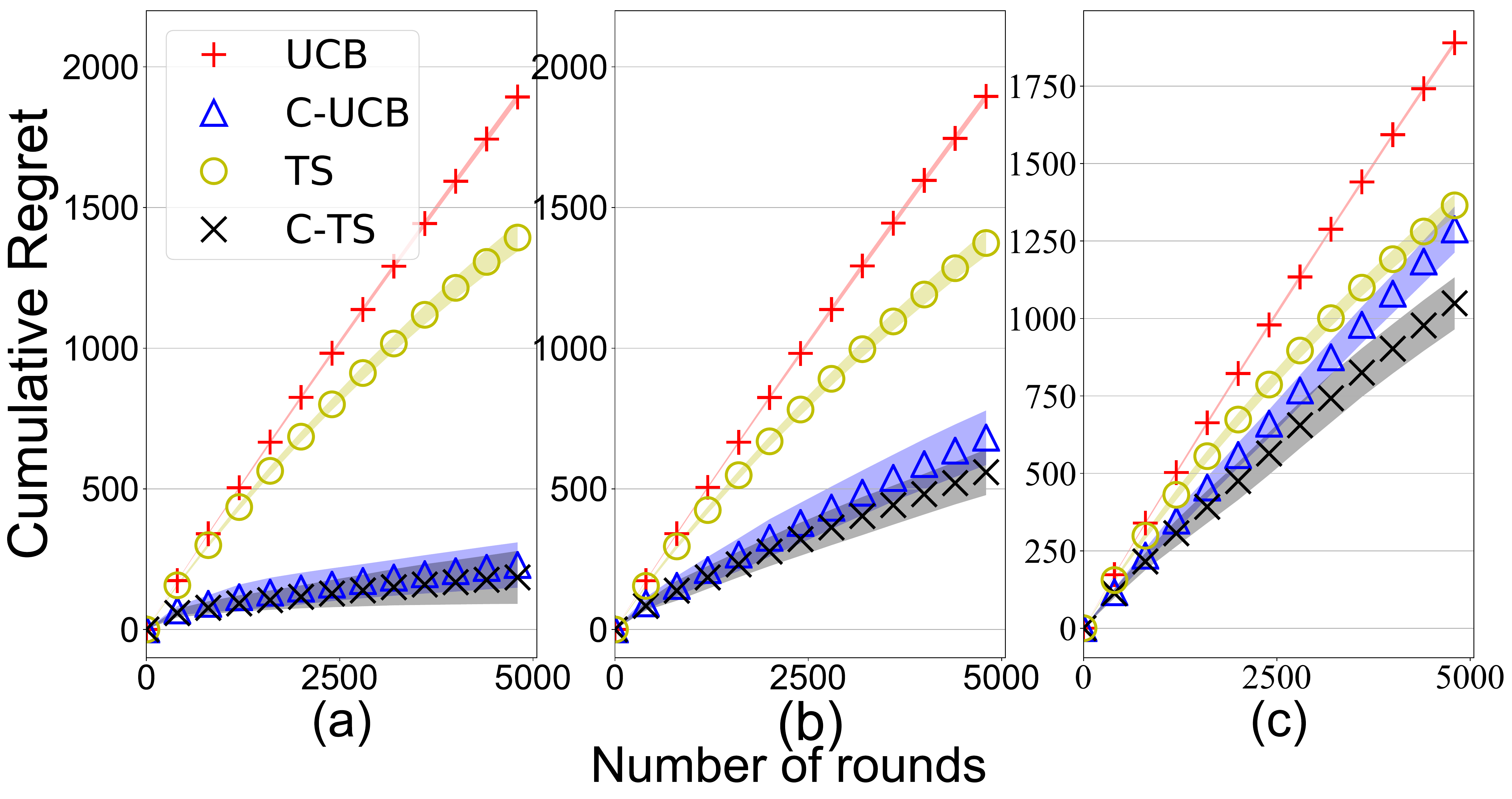}
    \caption{Cumulative regret of UCB, C-UCB, TS and C-TS for providing the best movie recommendations in the Movielens dataset. Each pseudo-reward entry is added by $0.1$ in (a), $0.4$ in (b) and $0.6$ in (c). The value of $C$ is $6,24$ and $39$ in $(a), (b)$ and $(c)$ respectively. As $C$ is smaller than $K$ (i.e., $50$) in each case, we see the superior performance of C-UCB, C-TS over UCB and TS.}
    \label{fig:movie_plot}
    \vspace{-0.3cm}
\end{figure}

\noindent
\textbf{Recommending the Best Movie.} We now consider the goal of providing the best movie recommendations to the population. To do so, we consider the 50 most rated movies in the dataset. containing 109,804 user-ratings given by 6,025 users.
In the testing phase, the goal is to recommend one of these 50 movies to each user. 
As was the case in previous experiment, we learn the pseudo-reward entries from the training data. Instead of using the learned pseudo-reward directly, we add a \textit{safety buffer} to each of the pseudo-reward entry; i.e., we set the pseudo-reward as the empirical conditional mean {\em plus} the {\sc safety buffer}. Adding a buffer will be needed in practice, as the conditional expectations learned from the training data are likely to have some noise and adding a safety buffer allows us to make sure that pseudo-rewards constitute an upper bound on the conditional expectations. Our experimental result in \Cref{fig:movie_plot} shows the performance of C-UCB and C-TS relative to UCB for this setting with safety buffer set to $0.1$ in \Cref{fig:movie_plot}(a), $0.4$ in \Cref{fig:movie_plot}(b) and $0.6$ in \Cref{fig:movie_plot}(c). In all three cases, even after addition of safety buffers, our proposed C-UCB and C-TS algorithms outperform the UCB algorithm.

\subsection{Experiments on the {\sc Goodreads} dataset}
The {\sc Goodreads} dataset \cite{wan2018item} contains the ratings for 1,561,465 books by a total of 808,749 users. Each rating is on a scale of 1-5. For our experiments, we only consider the poetry section and focus on the goal of providing best poetry recommendations to the whole population whose demographics is unknown. The poetry dataset has 36,182 different poems rated by 267,821 different users. We do the pre-processing of goodreads dataset in the same manner as that of the MovieLens dataset, by splitting the dataset into two halves, train and test. The train dataset contains the ratings of the users with most number of recommendations. 

\begin{figure}[t]
    \centering
    \includegraphics[width = 0.6\textwidth]{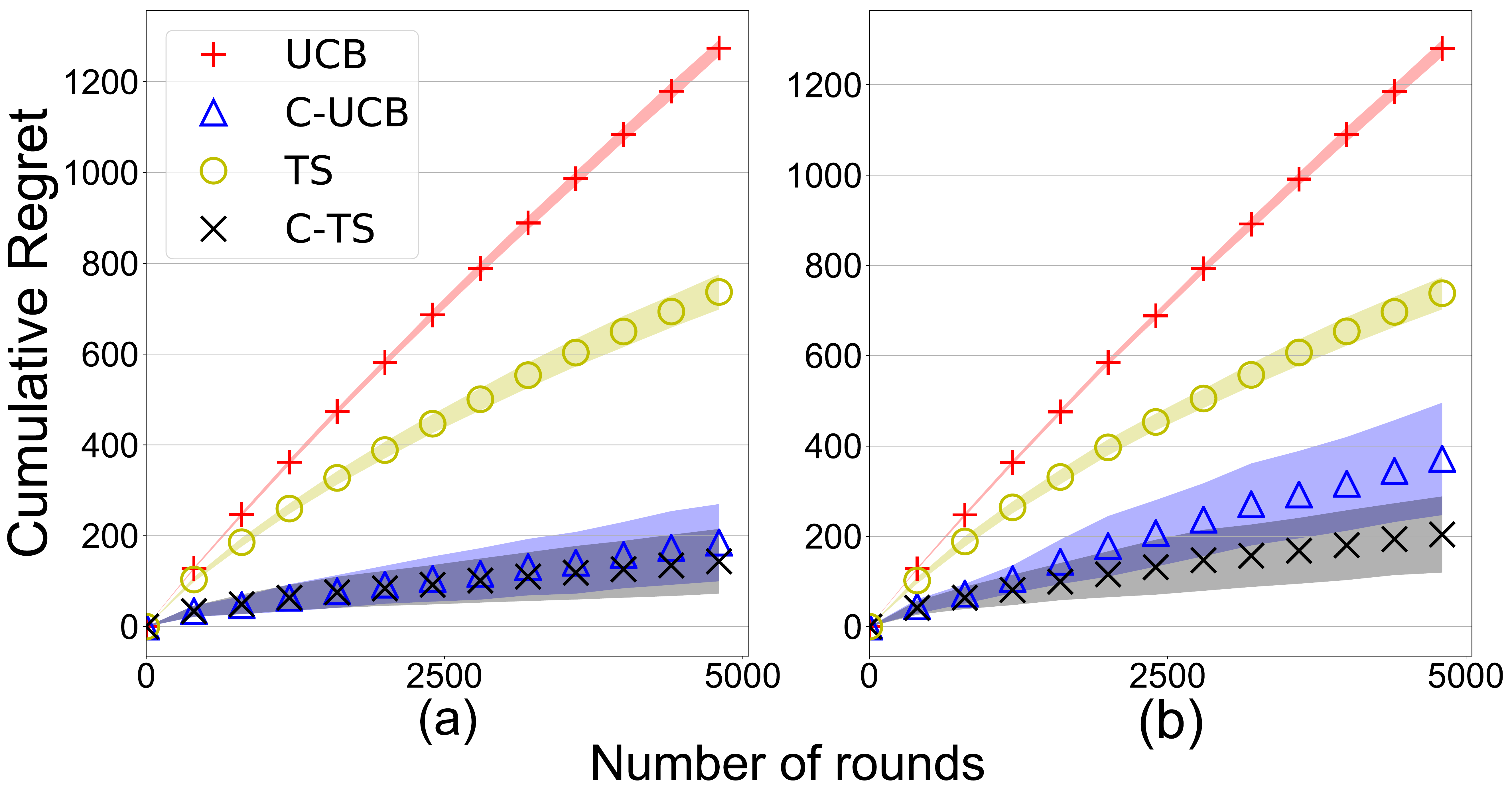}
    \caption{Cumulative regret of UCB, C-UCB, TS and C-TS for providing best poetry book recommendation in the Goodreads dataset. Every pseudo-reward entry is added by $q$ and $p$ fraction of the pseudo-reward entries are removed, with (a) $p = 0.1, q = 0.1$ and (b) $p = 0.3, q = 0.1$. The value of $C$ is $8$ and $11$ in $(a)$ and $(b)$ respectively. As $C$ is much smaller than $K$ (i.e., $25$) in each case, we see that C-UCB and  C-TS outperform UCB and TS significantly.}
    \label{fig:book_plot}
    \vspace{-0.3cm}
\end{figure}
\noindent
\textbf{Recommending the best poetry book.} We consider the 25 most rated books in the dataset and use these as the set of arms to recommend in the testing phase. These 25 poems have 349,523 user-ratings given by 171,433 users. As with the {\sc MovieLens} dataset, the pseudo-reward entries are learned on the training data. In practical situations it might not be possible to obtain all pseudo-reward entries. Therefore, we randomly select $p$ fraction of the pseudo-reward entries and replace them with maximum possible reward (i.e. $5$). Among the remaining pseudo-reward entries we add a safety buffer of $q$ to each entry. Our result in \Cref{fig:book_plot} shows the performance of C-UCB and C-TS relative to UCB and TS in two scenarios. In scenario (a), $10\%$ of the pseudo-reward entries are replaced by $5$ and remaining are padded with a safety buffer of $0.1$. For case (b), $30\%$ entries are replaced by $5$ and safety buffer is $0.1$. Under both  cases, our proposed C-UCB and C-TS algorithms are able to outperform UCB and TS significantly.

\newadd{
\subsection{Pseudo-rewards learned on a smaller dataset}
In our previous set of experiments, half of the dataset was used to learn the pseudo-reward entries. We did additional experiments in a setup where only $10\%$ of the data was used for learning the pseudo-reward entries and tested our algorithms on the remaining dataset. On doing so, we observed that C-UCB and C-TS were still able to outperform UCB and TS in most of our experimental setups. One setting in which the performance of C-UCB was similar to that of UCB is in a scenario where each pseudo-reward entry was padded by $0.6$. As the padding was large, the C-UCB algorithm was not able to identify many arms as non-competitive, leading to a performance that is similar to that of UCB. In all other scenarios, we noted that C-UCB and C-TS significantly outperformed UCB and TS, suggesting that even when smaller dataset is used for learning pseudo-rewards, the C-UCB and C-TS can be quite effective. The results are presented in \Cref{fig:genre_plot_less}, \Cref{fig:movie_plot_less} and \Cref{fig:book_plot_less}.

\begin{figure}[t]
    \centering
    \includegraphics[width = 0.6\textwidth]{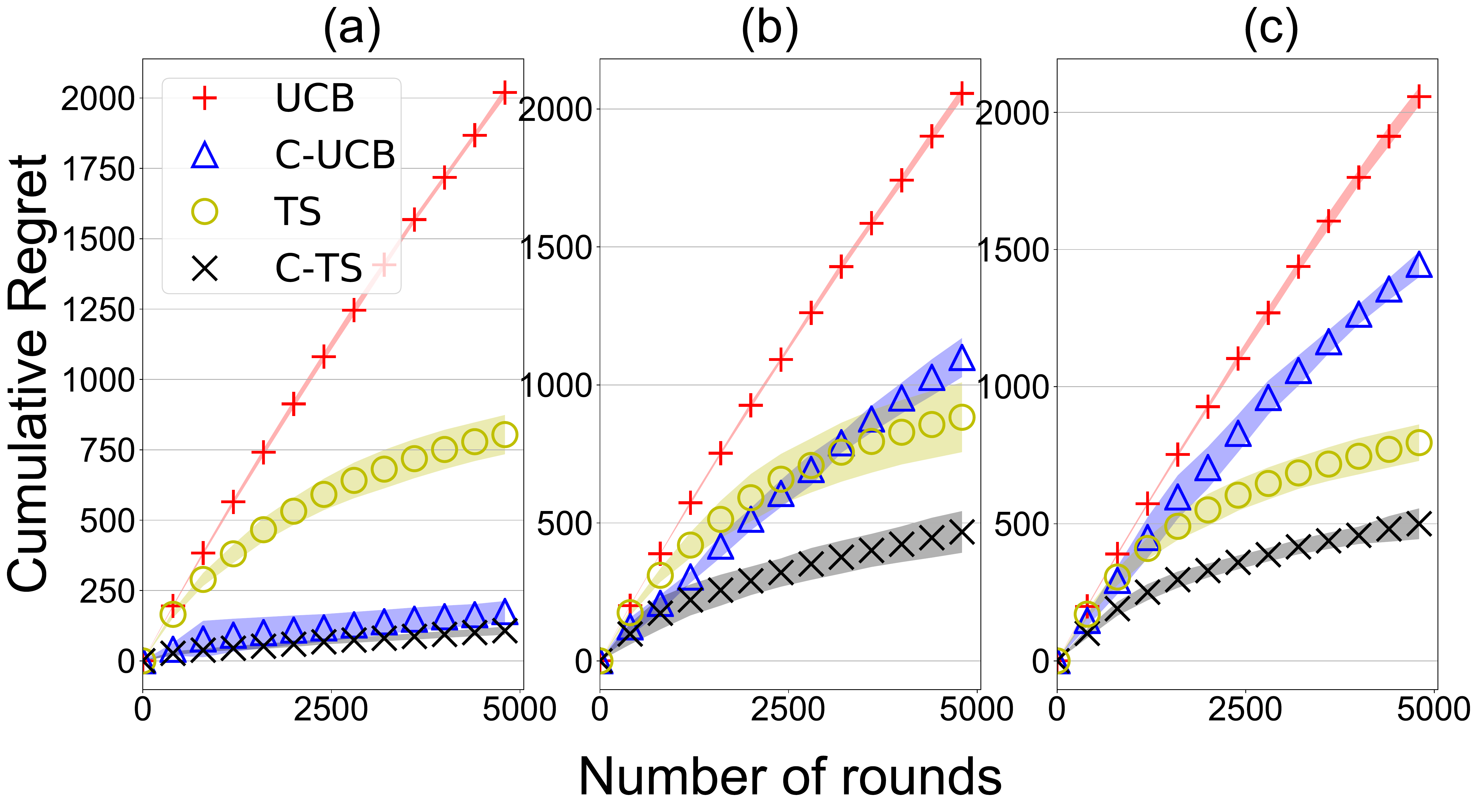}
    \caption{Cumulative regret for UCB, C-UCB, TS and C-TS for the application of recommending the best genre in the Movielens dataset, where $p$ fraction of the pseudo-entries are replaced with maximum reward $i.e., 5$. In $(a), p = 0.25$, for $(b), p = 0.50$ and $p = 0.7$ in $(c)$. We used $10\%$ of the dataset to learn the pseudo-reward entry and the algorithms are tested on the remaining dataset. The value of $C$ is $5,11$ and $15$ in $(a), (b)$ and $(c)$ respectively. As $C$ is smaller than $K$ (i.e., $18$) in each case, we see that C-UCB and C-TS outperform UCB and TS significantly. Note that the value of $C$ is larger in the case where only $10\%$ data is used for learning the pseudo-reward.}
    \label{fig:genre_plot_less}
    \vspace{-0.3cm}
\end{figure}

\begin{figure}[t]
    \centering
    \includegraphics[width = 0.6\textwidth]{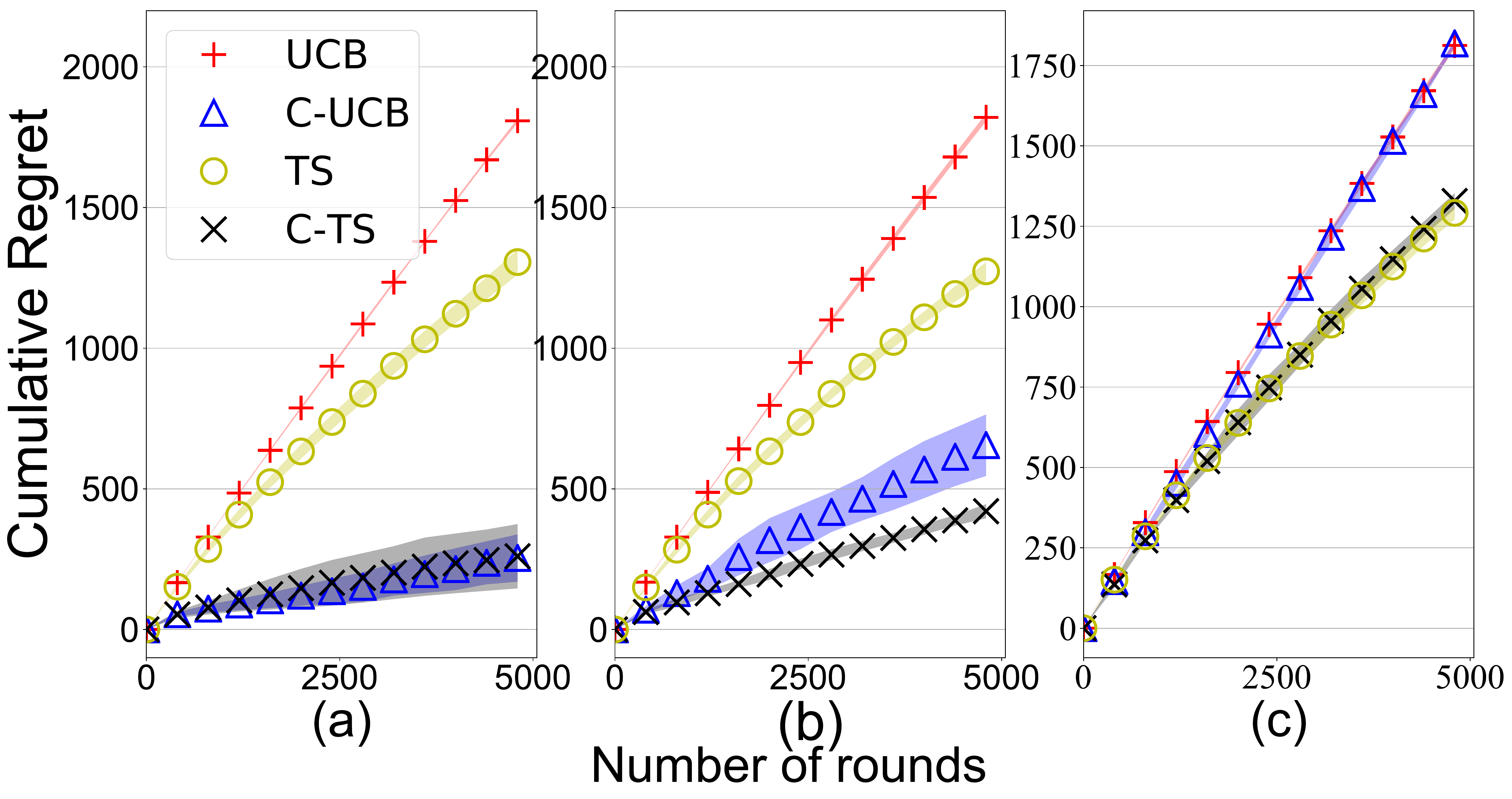}
    \caption{Cumulative regret of UCB, C-UCB, TS and C-TS for providing the best movie recommendations in the Movielens dataset. In this experiment $10\%$ of the dataset is used for learning the pseudo-reward entry and the algorithms are tested on the remaining dataset. Each pseudo-reward entry is added by $0.1$ in (a), $0.4$ in (b) and $0.6$ in (c). The value of $C$ is $14,29$ and $42$ in $(a), (b)$ and $(c)$ respectively. Note that the value of $C$ is larger in the case where only $10\%$ data is used for learning the pseudo-reward. As $C$ is still smaller than $K$ (i.e., $50$) in each case, we see the superior performance of C-UCB, C-TS over UCB and TS.}
    \label{fig:movie_plot_less}
    \vspace{-0.3cm}
\end{figure}

\begin{figure}[t]
    \centering
    \includegraphics[width = 0.6\textwidth]{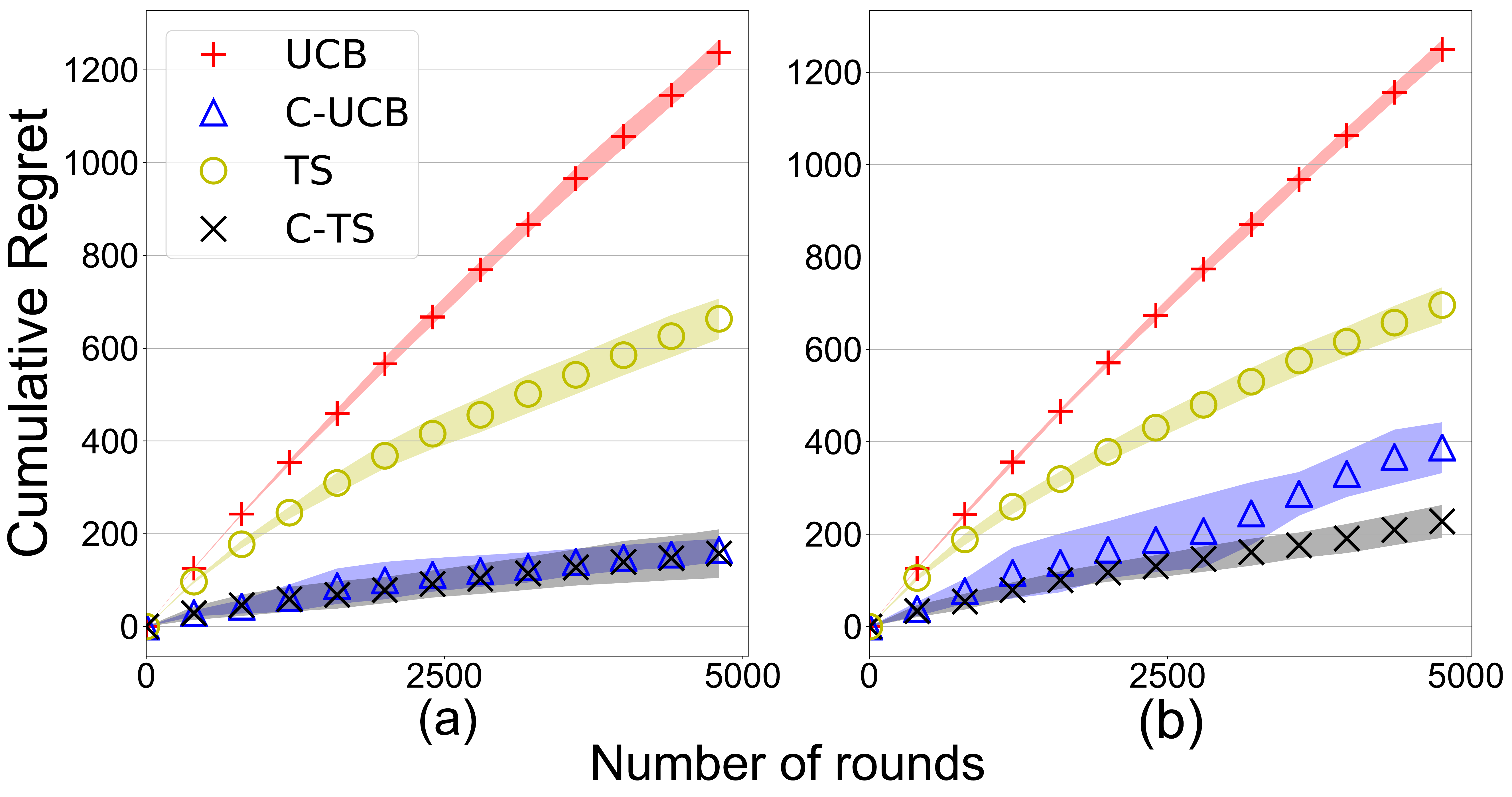}
    \caption{Cumulative regret of UCB, C-UCB, TS and C-TS for providing best poetry book recommendation in the Goodreads dataset. We used $10\%$ of the dataset to learn the pseudo-reward entry and the algorithms are tested on the remaining dataset. Every pseudo-reward entry is added by $q$ and $p$ fraction of the pseudo-reward entries are removed, with (a) $p = 0.1, q = 0.1$ and (b) $p = 0.3, q = 0.1$. The value of $C$ is $7$ and $12$ in $(a)$ and $(b)$ respectively. As $C$ is much smaller than $K$ (i.e., $25$) in each case, we see that C-UCB and  C-TS outperform UCB and TS significantly.}
    \label{fig:book_plot_less}
    \vspace{-0.3cm}
\end{figure}
}

\section{Conclusion}
\label{sec:conclusion}
This work presents a new correlated Multi-Armed bandit problem in which rewards obtained from different arms are correlated. We capture this correlation through the knowledge of \textit{pseudo-rewards}. These pseudo-rewards, which represent upper bound on conditional mean rewards, could be known in practice from either domain knowledge or learned from prior data. Using the knowledge of these pseudo-rewards, we the propose \textit{C-Bandit} algorithm which fundamentally generalizes any classical bandit algorithm to the correlated multi-armed bandit setting. A key strength of our paper is that it allows pseudo-rewards to be loose (in case there is not much prior information) and even then our \textit{C-Bandit} algorithms adapt and provide performance at least as good as that of classical bandit algorithms.

We provide a unified method to analyze the regret of C-Bandit algorithms. In particular, the analysis shows that C-UCB ends up pulling \emph{non-competitive} arms only $\OO(1)$ times; i.e., they stop pulling certain arms after a finite time $t$. Due to this, C-UCB pulls only $C-1 \leq K-1$ of the $K-1$ sub-optimal arms $\OO(\log T)$ times, as opposed to UCB that pulls {\em all} $K-1$ sub-optimal arms $\OO(\log T)$ times. In this sense, our C-Bandit algorithms reduce a $K$-armed bandit to a $C$-armed bandit problem. We present several cases where $C = 1$ for which C-UCB achieves bounded regret. For the special case when rewards are correlated through a latent random variable $X$, we provide a lower bound showing that bounded regret is possible only when $C = 1$; if $C > 1$, then $\OO(\log T)$ regret is not possible to avoid. Thus, our C-UCB algorithm achieves bounded regret whenever possible. Simulation results validate the theoretical findings and we perform experiments on {\sc Movielens} and {\sc Goodreads} datasets to demonstrate the applicability of our framework in the context of recommendation systems. The experiments on real-world datasets show that our C-UCB and C-TS algorithms significantly outperform the UCB and TS algorithms. 

There are several interesting open problems and extensions of this work, some of which we describe below.

\noindent
\textbf{Extension to light tailed and heavy tailed rewards} In this work, we assume that the rewards have a bounded support. The algorithm and analysis can be extended to settings with sub-gaussian rewards as well. In particular, in step 3 of the algorithm, one would play UCB/TS for sub-gaussian rewards. For instance, the UCB index in the scenario of sub-gaussian rewards can be redefined as $\hat{\mu}_k + \sqrt{\frac{2\sigma^2\log t}{n_k(t)}}$, where $\sigma$ is the sub-Gaussianity parameter of the reward distribution. Similar regret bounds will hold in this setting as well because the Hoeffding's inequality used in our regret analysis is valid for sub-Gaussian rewards as well. For heavy-tailed rewards, the Hoeffding's inequality is not valid. Due to which, one would need to construct confidence bounds for UCB in a different manner as done in \cite{bubeck2013bandits}. On doing so, the C-Bandit algorithm can be employed in heavy-tailed reward settings. However, the regret analysis may not extend directly as one would need to use modified concentration inequalities to obtain bounds on mean reward of arm $k$ as done in Lemma 1 of \cite{bubeck2013bandits}.

\noindent
\textbf{Designing better algorithms.}  While our proposed algorithms are order-optimal for the model in Section 2.3, they do not match the pre-log constants in the lower bound of the regret. It may be possible to design algorithms that have smaller pre-log constants in their regret upper bound. Further discussion along these lines is presented in Appendix F. A key advantage of our approach is that our algorithms are easy to implement and they incorporate the classical bandit algorithms nicely for the problem of correlated multi-armed bandits.

\noindent
\textbf{Best-Arm Identification.} We plan to study the problem of best-arm identification in the correlated multi-armed bandit setting, i.e., to identify the best arm with a confidence $1 - \delta$ in as few samples as possible. Since rewards are correlated with each other, we believe the sample complexity can be significantly improved relative to state of the art algorithms, such as LIL-UCB \cite{jamieson2014best, jamieson2014lil}, which are designed for classical multi-armed bandits. Another open direction is to improve the C-Bandit algorithm to make sure that it achieves bounded regret whenever possible in the general framework studied in this paper.

\section{Acknowledgments}
This work was partially supported by the NSF through grants CCF-1840860 and CCF-2007834, the Siebel Energy Institute, the Carnegie Bosch Institute, the Manufacturing Futures Initiative, and the CyLab IoT Initiative. In addition, Samarth Gupta was supported by the CyLab Presidential Fellowship and the David H. Barakat and LaVerne Owen-Barakat CIT Dean's Fellowship.

\bibliographystyle{ieeetr}
\bibliography{multi_armed_bandit}

\newpage
\onecolumn
\appendix
\section{Standard Results from Previous Works}
\begin{fact}[Hoeffding's inequality]
Let $Z_1, Z_2 \ldots Z_n$ be i.i.d random variables bounded between $[a, b]: a \leq Z_i \leq b$, then for any $\delta>0$, we have
$$\Pr\left(\left| \frac{\sum_{i = 1}^{n} Z_i}{n} - \E{Z_i} \right| \geq \delta\right) \leq \exp \left( \frac{-2 n \delta^2}{(b - a)^2}\right).$$ 
\end{fact}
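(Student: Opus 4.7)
The plan is to prove this via the standard Chernoff--Hoeffding argument, which rests on three ingredients: the exponential Markov bound, a moment-generating-function bound for bounded centered random variables (Hoeffding's lemma), and optimization over the free parameter $s$.

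First I would apply the exponential Markov inequality. Let $S_n = \sum_{i=1}^n (Z_i - \E{Z_i})$. For any $s > 0$,
\begin{equation*}
\Pr\bigl(S_n \geq n\delta\bigr) \;=\; \Pr\bigl(e^{sS_n} \geq e^{sn\delta}\bigr) \;\leq\; e^{-sn\delta}\,\E{e^{sS_n}} \;=\; e^{-sn\delta}\prod_{i=1}^n \E{e^{s(Z_i - \E{Z_i})}},
\end{equation*}
where the last equality uses independence of the $Z_i$'s. A symmetric bound handles $\Pr(S_n \leq -n\delta)$ by applying the same reasoning to $-S_n$.

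Second, I would establish Hoeffding's lemma: if $Y \in [\alpha,\beta]$ with $\E{Y}=0$, then $\E{e^{sY}} \leq \exp\bigl(s^2(\beta-\alpha)^2/8\bigr)$ for all real $s$. The standard proof writes $y$ as the convex combination $y = \lambda\beta + (1-\lambda)\alpha$ with $\lambda = (y-\alpha)/(\beta-\alpha)$, uses convexity of $e^{sy}$, takes expectations (noting $\E{Y}=0$), and then bounds the resulting function of $s$ by a Taylor-with-remainder argument applied to $\psi(u) = -\theta u + \log(1-\theta+\theta e^u)$, whose second derivative is at most $1/4$. Applying this with $Y_i = Z_i - \E{Z_i} \in [a - \E{Z_i}, b - \E{Z_i}]$ (an interval of length $b-a$), each factor is at most $\exp\bigl(s^2(b-a)^2/8\bigr)$.

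Combining the two steps gives $\Pr(S_n \geq n\delta) \leq \exp\bigl(-sn\delta + ns^2(b-a)^2/8\bigr)$. I would then optimize in $s$: the minimum occurs at $s^* = 4\delta/(b-a)^2$, yielding the bound $\exp\bigl(-2n\delta^2/(b-a)^2\bigr)$. Combining with the analogous lower-tail bound via a union bound yields the two-sided statement (up to the usual factor of $2$ that appears in the standard form). The main obstacle, and the only nontrivial step, is Hoeffding's lemma itself; everything else is a mechanical application of Markov, independence, and one-dimensional optimization.
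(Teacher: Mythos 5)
Your argument is the standard Chernoff--Hoeffding proof (exponential Markov bound, Hoeffding's lemma for bounded centered variables, optimization over $s$), and it is correct; the paper itself offers no proof of this Fact, importing it as a standard result, so there is nothing to diverge from. One point worth making explicit: as literally stated in the paper, the two-sided bound is missing the usual factor of $2$ on the right-hand side (take $n=1$ and $Z_1$ uniform on $\{a,b\}$ with $\delta=(b-a)/2$: the left side is $1$ while the right side is $e^{-1/2}$), so either the factor $2$ should be inserted or the statement should be read as the one-sided inequality $\Pr\bigl(\frac{1}{n}\sum_{i=1}^n Z_i - \E{Z_i} \geq \delta\bigr) \leq \exp\bigl(-2n\delta^2/(b-a)^2\bigr)$; your parenthetical remark about the factor of $2$ correctly identifies this, and since the paper's lemmas only ever invoke one-sided versions (with union bounds supplying any factors of $2$ explicitly), nothing downstream is affected.
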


\begin{lem}[Standard result used in bandit literature]
If $\hat{\mu}_{k,n_k(t)}$ denotes the empirical mean of arm $k$ by pulling arm $k$ $n_k(t)$ times through any algorithm and $\mu_k$ denotes the mean reward of arm $k$, then we have 
$$\Pr\left(\hat{\mu}_{k,n_k(t)} - \mu_k \geq \epsilon, \tau_2 \geq n_k(t) \geq \tau_1 \right) \leq \sum_{s = \tau_1}^{\tau_2}\exp \left(- 2 s \epsilon^2\right).$$ 
\label{lem:UnionBoundTrickInt}
\end{lem}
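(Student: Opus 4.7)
The plan is to decouple the random pull count $n_k(t)$ from the reward samples using the standard coupling trick in bandit analysis, and then finish with a union bound over all possible values of $n_k(t)$ together with Hoeffding's inequality. The difficulty here is purely conceptual: because $n_k(t)$ is determined by the algorithm's past choices (which in turn depend on past rewards), we cannot apply Hoeffding directly to $\hat{\mu}_{k,n_k(t)}$ as if it were a sum of $s$ i.i.d. variables for some fixed $s$.

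To set up the decoupling, I would introduce an auxiliary sequence $X_{k,1}, X_{k,2}, \ldots$ of i.i.d. samples drawn from the reward distribution of arm $k$, and define $\bar{\mu}_{k,s} = \frac{1}{s}\sum_{i=1}^{s} X_{k,i}$ for each deterministic index $s \geq 1$. Since any algorithm, when it pulls arm $k$ for the $i$-th time, observes exactly $X_{k,i}$, we have $\hat{\mu}_{k,n_k(t)} = \bar{\mu}_{k,n_k(t)}$ pointwise. This moves the randomness entirely into the index $n_k(t)$ and leaves each $\bar{\mu}_{k,s}$ as an average of $s$ genuinely i.i.d. bounded random variables with mean $\mu_k$.

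Next I would apply a union bound over the deterministic range $[\tau_1,\tau_2]$. Specifically,
\begin{align*}
\bigl\{\hat{\mu}_{k,n_k(t)} - \mu_k \geq \epsilon,\ \tau_1 \leq n_k(t) \leq \tau_2\bigr\}
\subseteq \bigcup_{s=\tau_1}^{\tau_2} \bigl\{\bar{\mu}_{k,s} - \mu_k \geq \epsilon\bigr\},
\end{align*}
so by the union bound the probability on the left is at most $\sum_{s=\tau_1}^{\tau_2} \Pr(\bar{\mu}_{k,s} - \mu_k \geq \epsilon)$. Finally, I would invoke Hoeffding's inequality (quoted as Fact~1 just above) for each fixed $s$: since rewards are assumed to lie in $[0,1]$ for the regret analysis, $\Pr(\bar{\mu}_{k,s} - \mu_k \geq \epsilon) \leq \exp(-2s\epsilon^2)$. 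Summing gives exactly the claimed bound $\sum_{s=\tau_1}^{\tau_2} \exp(-2s\epsilon^2)$.

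The only potential obstacle is making sure the coupling argument is airtight — i.e., that the i.i.d. sequence $\{X_{k,i}\}$ can be constructed independently of the algorithm's behavior so that $\bar{\mu}_{k,s}$ is genuinely a sum of i.i.d. variables for every deterministic $s$, and not just those realized indices with $s = n_k(t)$. Once this is established, the proof is a one-line union bound followed by a one-line application of Hoeffding, so no additional calculation is required.
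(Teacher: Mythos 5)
Your proposal is correct and follows essentially the same route as the paper: the paper also introduces an i.i.d. "stack" of reward samples $Z_1,\ldots,Z_t$ (observed in order as arm $k$ is pulled), replaces the random index $n_k(t)$ by a union over deterministic indices $m \in [\tau_1,\tau_2]$, and applies Hoeffding's inequality to each fixed-$m$ average. The coupling concern you flag is handled in the paper exactly as you suggest — by drawing the samples "separately" in advance — so no further work is needed.
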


\begin{proof}
Let $Z_1, Z_2, ... Z_t$ be the reward samples of arm $k$ drawn separately. If the algorithm chooses to play arm $k$ for $m^{th}$ time, then it observes reward $Z_m$. Then the probability of observing the event $\hat{\mu}_{k,n_k(t)} - \mu_k \geq \epsilon, \tau_2 \geq n_k(t) \geq \tau_1$ can be upper bounded as follows,
\begin{align}
    \Pr\left(\hat{\mu}_{k,n_k(t)} - \mu_k \geq \epsilon, \tau_2 \geq n_k(t) \geq \tau_1 \right) &= \Pr\left( \left( \frac{\sum_{i=1}^{n_k(t)}Z_i}{n_k(t)} - \mu_k \geq \epsilon \right), \tau_2 \geq n_k(t) \geq \tau_1 \right) \\
    &\leq \Pr\left( \left(\bigcup_{m = \tau_1}^{\tau_2} \frac{\sum_{i=1}^{m}Z_i}{m} - \mu_k \geq \epsilon \right), \tau_2 \geq n_k(t) \geq \tau_1 \right) \label{upperBoundTrick}\\
    &\leq \Pr \left(\bigcup_{m = \tau_1}^{\tau_2} \frac{\sum_{i=1}^{m}Z_i}{m} - \mu_k \geq \epsilon \right) \\
    &\leq \sum_{s = \tau_1}^{\tau_2}\exp \left( - 2 s \epsilon^2\right).
\end{align}
\end{proof}

\begin{lem}[From Proof of Theorem 1 in \cite{auer2002finite}]
\label{lem:ucbindexmore}
Let $\Index_\arm(\slot)$ denote the UCB index of arm $\arm$ at round $\slot$, and $\meanReward_\arm = \E{g_{\arm}(X)}$ denote the mean reward of that arm.  Then, we have
$$\Pr(\meanReward_\arm > \Index_\arm(\slot)) \leq \slot^{-3}.$$ 
\end{lem}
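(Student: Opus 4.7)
The plan is to unfold the definition of the UCB index, then handle the random quantity $n_k(t)$ by a union bound over its possible values $\{1,2,\ldots,t\}$, and finally apply Hoeffding's inequality (Fact~1 in the excerpt) to each fixed-count term. Concretely, by the definition in \eqref{eqn:UCB1_index} we have $I_k(t) = \hat{\mu}_k(t) + B\sqrt{2\log(t)/n_k(t)}$, so the event $\{\mu_k > I_k(t)\}$ is equivalent to $\{\mu_k - \hat{\mu}_{k,n_k(t)} > B\sqrt{2\log(t)/n_k(t)}\}$, where $\hat{\mu}_{k,s}$ denotes the empirical mean of arm $k$ after $s$ pulls.

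Next, since $n_k(t)$ is a random integer in $\{1,\ldots,t\}$ that is not independent of the reward samples (it depends on the whole history), I would decouple it by a union bound. Introducing an i.i.d.\ sequence $Z_1,Z_2,\ldots$ of rewards drawn from arm $k$'s distribution (as in the proof of \Cref{lem:UnionBoundTrickInt}), and writing $\bar Z_s = (Z_1+\cdots+Z_s)/s$, the event under consideration is contained in the union $\bigcup_{s=1}^{t}\{\mu_k - \bar Z_s > B\sqrt{2\log(t)/s}\}$. Hence
\begin{equation*}
\Pr(\mu_k > I_k(t)) \;\leq\; \sum_{s=1}^{t} \Pr\!\left(\mu_k - \bar Z_s > B\sqrt{\tfrac{2\log t}{s}}\right).
\end{equation*}

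Now I would apply Hoeffding's inequality to each term. Since the rewards lie in an interval of length $B$, Fact~1 gives, with $\delta = B\sqrt{2\log(t)/s}$,
\begin{equation*}
\Pr\!\left(\mu_k - \bar Z_s > B\sqrt{\tfrac{2\log t}{s}}\right) \;\leq\; \exp\!\left(-\frac{2s\cdot B^2 \cdot 2\log t/s}{B^2}\right) \;=\; \exp(-4\log t) \;=\; t^{-4}.
\end{equation*}
Summing over the $t$ values of $s$ then yields $\Pr(\mu_k > I_k(t)) \leq t \cdot t^{-4} = t^{-3}$, which is the claim.

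The argument has no real obstacle; the only delicate point is avoiding a direct use of Hoeffding on the random index $n_k(t)$, which is handled cleanly by the standard ``union over possible sample sizes'' device already recorded in \Cref{lem:UnionBoundTrickInt}. The constant $2$ inside the square root in the UCB index is precisely chosen so that the Hoeffding exponent cancels to give $t^{-4}$ per term, leaving exactly one $t$ factor to spare after the union bound.
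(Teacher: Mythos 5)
Your proposal is correct and follows essentially the same route as the paper's proof: unfold the UCB index, union-bound over the possible values of $n_k(t)$ (the device recorded in \Cref{lem:UnionBoundTrickInt}), and apply Hoeffding's inequality to each fixed sample size to get $t^{-4}$ per term, hence $t^{-3}$ overall. The only cosmetic difference is that you carry the scale factor $B$ explicitly, whereas the paper's regret analysis normalizes rewards to $[0,1]$, so the two computations coincide.
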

Observe that this bound does not depend on the number $\pulls_\arm(\slot)$ of times arm $\arm$ is pulled. UCB index is defined in equation (6) of the main paper.
\begin{proof}
This proof follows directly from \cite{auer2002finite}. We present the proof here for completeness as we use this frequently in the paper.
\begin{align}
    \Pr(\meanReward_\arm > \Index_\arm(\slot)) &= \Pr\left(\meanReward_\arm > \hat{\meanReward}_{\arm,\pulls_\arm(\slot)} + \sqrt{\frac{2 \log \slot}{\pulls_\arm(\slot)}}\right) \\
    &\leq \sum_{m = 1}^{\slot} \Pr \left(\meanReward_\arm > \hat{\meanReward}_{\arm,m} + \sqrt{\frac{2 \log \slot}{m}} \right) \label{unionTrick}\\
    &= \sum_{m =1}^{\slot} \Pr \left(\hat{\meanReward}_{\arm,m} - \meanReward_\arm < - \sqrt{\frac{2 \log \slot}{m}}\right) \\ 
    &\leq \sum_{m = 1}^{\slot} \exp\left(- 2 m \frac{2 \log \slot}{m}\right) \label{eqn:ucbindex}\\
    &= \sum_{m = 1}^{\slot} \slot^{-4} \\
    &= \slot^{-3}.
\end{align}
where \eqref{unionTrick} follows from the union bound and is a standard trick (\Cref{lem:UnionBoundTrickInt}) to deal with random variable $\pulls_\arm(\slot)$. We use this trick repeatedly in the proofs. We have \eqref{eqn:ucbindex} from the Hoeffding's inequality. 
\end{proof}

\begin{lem} Let $\E{\indicator_{\Index_\arm > \Index_{\arm^*}}}$  be the expected number of times $\Index_\arm (t)> \Index_{\arm^*}(t)$ in $\totalPulls$ rounds. Then, we have 
$$\E{\indicator_{\Index_\arm > \Index_{\arm^*}}} = \sum_{\slot = 1}^{\totalPulls} Pr(\Index_\arm > \Index_{\arm^*}) \leq \frac{8 \log (\totalPulls)}{\gap_\arm^2} + \left(1 + \frac{\pi^2}{3} \right).$$
\label{lem:AuerResult}
\end{lem}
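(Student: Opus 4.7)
The plan is to mirror the classic UCB1 analysis of Auer, Cesa-Bianchi, and Fischer. Fix a sub-optimal arm $k$ and set the saturation threshold $\ell_k := \lceil 8 \log T / \gap_k^2 \rceil$. The idea is that once arm $k$ has been pulled at least $\ell_k$ times, its confidence radius $\sqrt{2\log t / \pulls_k(t-1)}$ is already smaller than $\gap_k/2$, so the event $\{\Index_k > \Index_{\arm^*}\}$ can only occur if one of two ``bad'' concentration events holds.

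First I would write the trivial decomposition
\begin{equation*}
\sum_{t=1}^{T} \indicator\{\Index_k(t) > \Index_{\arm^*}(t)\} \;\le\; \ell_k \;+\; \sum_{t=1}^{T} \indicator\bigl\{\Index_k(t) > \Index_{\arm^*}(t),\ \pulls_k(t-1) \ge \ell_k\bigr\},
\end{equation*}
where the first term accounts for all rounds before arm $k$ is saturated. Next, on the ``saturated'' event I would argue by contradiction that at least one of the following holds: (A) $\meanReward_{\arm^*} > \Index_{\arm^*}(t)$, i.e.\ the optimal arm's UCB index underestimates its mean, or (B) $\hat{\meanReward}_k(t-1) - \sqrt{2\log t / \pulls_k(t-1)} \ge \meanReward_k$, i.e.\ arm $k$'s empirical lower confidence bound exceeds its mean. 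Otherwise we would have the chain
\begin{equation*}
\Index_{\arm^*}(t) \;\ge\; \meanReward_{\arm^*} \;=\; \meanReward_k + \gap_k \;\ge\; \hat{\meanReward}_k(t-1) + \sqrt{\tfrac{2\log t}{\pulls_k(t-1)}} + \Bigl(\gap_k - 2\sqrt{\tfrac{2\log t}{\pulls_k(t-1)}}\Bigr) \;\ge\; \Index_k(t),
\end{equation*}
with the last step using $\pulls_k(t-1) \ge \ell_k \ge 8\log t / \gap_k^2$. So the saturated-regime sum is upper bounded by $\sum_{t=1}^{T}\bigl[\Pr(E_A(t)) + \Pr(E_B(t))\bigr]$.

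To bound $\Pr(E_A(t))$ I would invoke Lemma 3 directly, which gives $\Pr(\meanReward_{\arm^*} > \Index_{\arm^*}(t)) \le t^{-3}$. For $\Pr(E_B(t))$ the same Hoeffding-plus-union-bound trick of Lemma 2 applies (the argument is symmetric, replacing $+\sqrt{2\log t/m}$ by $-\sqrt{2\log t/m}$ and union-bounding over the unknown $\pulls_k(t-1) \in \{\ell_k, \ldots, t\}$), giving $\Pr(E_B(t)) \le t^{-3}$ as well. Summing these tails over $t$ produces the $1 + \pi^2/3$ additive constant (the ``$1$'' absorbs the ceiling in $\ell_k$ and the low-$t$ contribution, while $\pi^2/3$ is the standard two-sided $\sum_t 2/t^{2}$ bound obtained after one further union bound over the possible values of $\pulls_{\arm^*}(t-1)$ in event $E_A$). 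Combining with the $\ell_k \le 8 \log T / \gap_k^2 + 1$ contribution from the unsaturated regime yields the claim.

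The main obstacle is purely bookkeeping: the indices $\pulls_k(t-1)$ and $\pulls_{\arm^*}(t-1)$ appearing inside the confidence radii are random, and one must be careful to union-bound over their possible values in just the right way so that the constants collapse to exactly $1 + \pi^2/3$ rather than something larger. Since this is precisely the Auer et al.\ argument and the excerpt already records the Hoeffding-based union-bound trick in Lemmas 2 and 3, no new ideas are required beyond plugging these pieces together.
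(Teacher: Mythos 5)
Your proposal is correct and takes essentially the same route as the paper, whose proof simply defers to the Theorem~1 analysis of Auer et al.\ (2002): split on whether $n_k$ has reached $\lceil 8\log T/\Delta_k^2\rceil$ pulls and, in the saturated regime, reduce the event $\{I_k(t) > I_{k^*}(t)\}$ to the two concentration failures you call (A) and (B), each controlled by the Hoeffding-plus-union-bound trick of the earlier lemmas. The one (shared) caveat is that bounding the unsaturated-regime count by $\ell_k$ implicitly requires that arm $k$ is pulled on those rounds (so that $n_k$ increments); this is exactly the joint event $\{I_k > I_{k^*},\, k_t = k\}$ that the paper actually invokes when it applies this lemma in the proof of Theorem~2, so your argument matches the intended statement just as the paper's citation-based proof does.
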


The proof follows the analysis in Theorem 1 of \cite{auer2002finite}. The analysis of  $Pr(\Index_\arm > \Index_{\arm^*})$ is done by by evaluating the joint probability $\Pr\left(\Index_\arm(\slot) > \Index_{\arm^*}(\slot),  \pulls_\arm(\slot) \geq \frac{8 \log T}{\Delta_k^2}\right)$. Authors in \cite{auer2002finite} show that the probability of pulling arm $k$ jointly with the event that it has had at-least $\frac{8 \log T}{\Delta_k^2}$ pulls decays down with $t$, i.e., $\Pr\left(\Index_\arm(\slot) > \Index_{\arm^*}(\slot), \pulls_\arm(\slot) \geq \frac{8 \log T}{\Delta_k^2}\right) \leq \slot^{-2}$.

\begin{lem}[Theorem 2 \cite{lai1985asymptotically}]

Consider a two armed bandit problem with reward distributions $\Theta = \{f_{R_1}(\reward), f_{R_2}(\reward)\}$, where the reward distribution of the optimal arm is $f_{R_1}(\reward)$ and for the sub-optimal arm is $f_{R_2}(\reward)$, and $\E{f_{R_1}(\reward)} > \E{f_{R_2}(\reward)}$; i.e., arm 1 is optimal. If it is possible to create an alternate problem with distributions $\Theta' = \{f_{R_1}(\reward), \tilde{f}_{R_2}(\reward)\}$ such that $\E{\tilde{f}_{R_2}(\reward)} > \E{f_{R_1}(\reward)}$ and $0< D(f_{R_2}(r)||\tilde{f}_{R_2}(r)) < \infty$ (equivalent to assumption 1.6 in \cite{lai1985asymptotically}),  then for any policy that achieves sub-polynomial regret, we have $$\liminf\limits_{\totalPulls \rightarrow \infty}  \frac{\E{\pulls_2(\totalPulls)}}{\log \totalPulls} \geq \frac{1}{D(f_{R_2}(r) || \tilde{f}_{R_2}(r))}.$$

\label{lem:LaiRobbins2Arms}
\end{lem}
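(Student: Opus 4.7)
The plan is to follow the classical change-of-measure argument of Lai and Robbins. Since the two instances $\Theta = \{f_{R_1}, f_{R_2}\}$ and $\Theta' = \{f_{R_1}, \tilde f_{R_2}\}$ differ only in the distribution of arm $2$, any policy can statistically distinguish them only through samples drawn from arm $2$. A sub-polynomial-regret policy must behave very differently on the two instances, pulling arm $2$ order-$T$ times under $\Theta'$ but rarely under $\Theta$, so it must accumulate enough information from arm $2$ to tell them apart. The KL rate says this requires on the order of $\log T / D(f_{R_2}\|\tilde f_{R_2})$ samples of arm $2$.

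First I would set up the log-likelihood ratio between the two laws of the full observation history. Because only arm $2$'s marginal changes, letting $Y_1, Y_2, \ldots$ denote the successive draws the algorithm obtains from arm $2$,
\[
L_T \;:=\; \log \frac{dP_{\Theta'}}{dP_\Theta} \;=\; \sum_{i=1}^{n_2(T)} \log \frac{\tilde f_{R_2}(Y_i)}{f_{R_2}(Y_i)},
\]
and under $P_\Theta$ each increment has mean $-D$, where $D := D(f_{R_2}\|\tilde f_{R_2}) \in (0,\infty)$ by hypothesis.

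Next, fix $\varepsilon \in (0,1)$ and define the event $A_T = \{n_2(T) < (1-\varepsilon)\log T / D\}$. I would first argue that $P_{\Theta'}(A_T) \to 0$ super-polynomially: under $\Theta'$ arm $2$ is optimal, so the sub-polynomial regret assumption forces $\mathbb{E}_{\Theta'}[T - n_2(T)] = \mathbb{E}_{\Theta'}[n_1(T)] = o(T^a)$ for every $a > 0$, and Markov's inequality then yields $P_{\Theta'}(A_T) = o(T^{a-1})$ for every $a$. The third step transfers this vanishing probability back to $P_\Theta$ via the standard change-of-measure inequality
\[
P_\Theta\!\left(A_T \cap \{L_T \leq (1-\tfrac{\varepsilon}{2})\log T\}\right) \;\leq\; T^{1-\varepsilon/2}\, P_{\Theta'}(A_T) \;\longrightarrow\; 0.
\]

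The remaining step is to show that on $A_T$ the likelihood ratio cannot exceed $(1-\tfrac{\varepsilon}{2})\log T$ with non-negligible $P_\Theta$-mass. On $A_T$, $L_T$ is a sum of at most $N_T := \lceil (1-\varepsilon)\log T/D \rceil$ i.i.d.\ terms of mean $-D$ under $P_\Theta$; a maximal strong-law estimate (Lemma~3 of \cite{lai1985asymptotically}, whose hypothesis $0 < D < \infty$ is our standing assumption) yields
\[
P_\Theta\big(A_T \cap \{L_T > (1-\tfrac{\varepsilon}{2})\log T\}\big) \;\longrightarrow\; 0.
\]
Combining the two bounds gives $P_\Theta(A_T) \to 0$, hence $n_2(T) \geq (1-\varepsilon)\log T/D$ with $P_\Theta$-probability tending to one. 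Taking expectations and then $\liminf$ in $T$ gives $\liminf_{T\to\infty} \mathbb{E}_\Theta[n_2(T)]/\log T \geq (1-\varepsilon)/D$, and letting $\varepsilon \downarrow 0$ finishes the proof.

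The main obstacle is the last likelihood-ratio control, because $n_2(T)$ is chosen adaptively by the policy and is therefore a random index, ruling out a direct application of the law of large numbers to $L_T$. The Lai--Robbins workaround is to invoke a maximal deviation inequality uniformly over all partial-sum lengths $m \leq N_T$, and this is precisely where the finiteness of $D$ is genuinely used. Once that uniform control is in place, the remainder of the argument is bookkeeping.
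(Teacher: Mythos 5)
Your argument is correct in substance, but it follows the original Lai--Robbins change-of-measure route, which is genuinely different from the proof given in the paper. The paper's proof (adapted from the Lattimore--Szepesv\'ari treatment) combines the divergence decomposition lemma, $D(\mathbb{P}_{\Theta,\pi}\,\|\,\mathbb{P}_{\Theta',\pi}) = \mathbb{E}_{\Theta,\pi}[n_2(T)]\, D(f_{R_2}\,\|\,\tilde f_{R_2})$, with the high-probability Pinsker (Bretagnolle--Huber) inequality applied to the single event $\{n_2(T)\ge T/2\}$; both resulting probabilities are bounded by the regrets of the two instances via Markov's inequality, and the sub-polynomial regret assumption then yields the $\log T$ lower bound directly on $\mathbb{E}_{\Theta}[n_2(T)]$. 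That route never has to control the likelihood ratio at the adaptively chosen number of pulls: the randomness of $n_2(T)$ is absorbed once and for all into the divergence decomposition, which is proved by the chain rule over the observation history. Your route instead shows $n_2(T)\ge (1-\varepsilon)\log T/D$ with $P_\Theta$-probability tending to one, by transferring $P_{\Theta'}(A_T)\to 0$ back to $P_\Theta$ through an explicit change of measure and paying for the random index with the Lai--Robbins maximal strong-law estimate; this gives a slightly stronger in-probability statement, at the price of that extra maximal inequality, whereas the paper's argument is shorter and works directly in expectation. One orientation slip you should fix: having defined $L_T=\log(dP_{\Theta'}/dP_\Theta)$, whose increments have mean $-D$ under $P_\Theta$, the displayed change-of-measure bound $P_\Theta\bigl(A_T\cap\{L_T\le(1-\tfrac{\varepsilon}{2})\log T\}\bigr)\le T^{1-\varepsilon/2}P_{\Theta'}(A_T)$ and the final maximal-inequality step are the ones appropriate for the opposite convention $L_T=\log(dP_\Theta/dP_{\Theta'})$, whose increments have mean $+D$ under $P_\Theta$; with that definition every displayed step is correct as written, and the rest of your bookkeeping (Markov under $\Theta'$, letting $\varepsilon\downarrow 0$) goes through.
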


\begin{proof}
Proof of this is derived from the analysis done in \cite{banditalgs}. We show the analysis here for completeness. A bandit instance $v$ is defined by the reward distribution of arm 1 and arm 2. Since policy $\pi$ achieves sub-polynomial regret, for any instance $v$, $\mathbb{E}_{v,\pi}\left[(\regret(\totalPulls))\right] = \OO(\totalPulls^p)$ as $\totalPulls \rightarrow \infty$, for all $p > 0$. 

Consider the bandit instances $\Theta = \{f_{R_1}(r), f_{R_2}(r)\}$, $\Theta' = \{f_{R_1}(r), \tilde{f}_{R_2}(r)\}$, where $\E{f_{R_2}(r)} < \E{f_{R_1}(r)} < \E{\tilde{f}_{R_2}(r)}$. The bandit instance $\Theta'$ is constructed by changing the reward distribution of arm 2 in the original instance, in such a way that arm 2 becomes optimal in instance $\Theta'$ without changing the reward distribution of arm 1 from the original instance.

From divergence decomposition lemma (derived in \cite{banditalgs}), it follows that $$D(\mathbb{P}_{\Theta,\Pi} || \mathbb{P}_{\Theta',\Pi}) = \mathbb{E}_{\Theta,\pi}\left[\pulls_2(\totalPulls)\right] D(f_{R_2}(r) || \tilde{f}_{R_2}(r)).$$

The high probability Pinsker's inequality (Lemma 2.6 from \cite{Tsybakov:2008:INE:1522486}, originally in \cite{highProbPinsker}) gives that for any event $A$, $$\mathbb{P}_{\Theta, \pi}(A) + \mathbb{P}_{\Theta',\pi}(A^c) \geq \frac{1}{2}\exp\left(-D(\mathbb{P}_{\Theta,\pi} || \mathbb{P}_{\Theta',\pi})\right),$$
or equivalently, 
$$D(\mathbb{P}_{\Theta,\pi} || \mathbb{P}_{\Theta',\pi}) \geq \log \frac{1}{2(\mathbb{P}_{\Theta,\pi}(A) + \mathbb{P}_{\Theta',\pi}(A^c))}.$$

If arm 2 is suboptimal in a 2-armed bandit problem, then $\E{\regret(\totalPulls)} = \gap_2 \E{\pulls_2(\totalPulls)}.$ Expected regret in $\Theta$ is $$\mathbb{E}_{\Theta,\pi}\left[\regret(\totalPulls)\right] \geq \frac{\totalPulls \gap_2}{2} \mathbb{P}_{\Theta,\pi}\left(\pulls_2(\totalPulls) \geq \frac{\totalPulls}{2}\right),$$
Similarly regret in bandit instance $\Theta'$ is $$\mathbb{E}_{\Theta',\pi}\left[\regret(\totalPulls)\right] \geq \frac{\totalPulls \delta}{2} \mathbb{P}_{\Theta',\pi}\left(\pulls_2(\totalPulls) < \frac{\totalPulls}{2}\right),$$
since suboptimality gap of arm $1$ in $\Theta'$ is  $\delta$. Define $\kappa(\gap_2, \delta) = \frac{\min(\gap_2, \delta)}{2}$. Then we have, $$\mathbb{P}_{\Theta,\pi}\left(\pulls_2(\totalPulls) \geq \frac{\totalPulls}{2}\right) + \mathbb{P}_{\Theta',\pi}\left(\pulls_2(\totalPulls) < \frac{\totalPulls}{2}\right) \leq \frac{\mathbb{E}_{\Theta,\pi}\left[\regret(\totalPulls)\right] + \mathbb{E}_{\Theta',\pi}\left[\regret(\totalPulls)\right]}{\kappa(\gap_2, \delta) \totalPulls}.$$

On applying the high probability Pinsker's inequality and divergence decomposition lemma stated earlier, we get 
\begin{align}
    D(f_{R_2}(r) || \tilde{f}_{R_2}(r)) \mathbb{E}_{\Theta,\pi}\left[\pulls_2(\totalPulls)\right] &\geq \log\left(\frac{\kappa(\gap_2, \delta) \totalPulls}{2 (\mathbb{E}_{\Theta,\pi}\left[\regret(\totalPulls)\right] + \mathbb{E}_{\Theta',\pi}\left[\regret(\totalPulls)\right]) }\right) \\
    &= \log\left(\frac{\kappa(\gap_2,\delta)}{2}\right) + \log(\totalPulls)  \nonumber \\ 
    &\qquad   - \log(\mathbb{E}_{\Theta,\pi}\left[\regret(\totalPulls)\right] + \mathbb{E}_{\Theta',\pi}\left[\regret(\totalPulls)\right]).
\end{align}

Since policy $\pi$ achieves sub-polynomial regret for any bandit instance, $\mathbb{E}_{\Theta,\pi}\left[\regret(\totalPulls)\right] + \mathbb{E}_{\Theta',\pi}\left[\regret(\totalPulls)\right] \leq \gamma \totalPulls^p$ for all $\totalPulls$ and any $p > 0$, 
% \OY{I think sub-polynomial should imply something stronger than this; e.g., Regret $=o(T^p)$ for any $p>0$.}
hence, 
\begin{align} 
\liminf\limits_{\totalPulls \rightarrow \infty}  D(f_{R_2}(r) || \tilde{f}_{R_2}(r)) \frac{\mathbb{E}_{\Theta,\pi}\left[\pulls_2(\totalPulls)\right]}{\log \totalPulls} &\geq 1 - \limsup\limits_{\totalPulls \rightarrow \infty}  \frac{\mathbb{E}_{\Theta,\pi}\left[\regret(\totalPulls)\right] + \mathbb{E}_{\Theta',\pi}\left[\regret(\totalPulls)\right]}{\log \totalPulls} + \nonumber \\
&\quad \liminf\limits_{\totalPulls \rightarrow \infty}  \frac{\log\left(\frac{\kappa(\Delta_2,\delta)}{2}\right)}{\log \totalPulls} \\
&= 1.
\end{align}

Hence, $\liminf\limits_{\totalPulls \rightarrow \infty} \frac{\mathbb{E}_{\Theta,\pi}\left[\pulls_2(\totalPulls)\right]}{\log \totalPulls} \geq \frac{1}{D(f_{R_2}(r) || \tilde{f}_{R_2}(r))}.$

\end{proof}

\section{Results for any \textsc{C-Bandit} Algorithm}
\begin{lem}
Define $E_1(\slot)$ to be the event that arm $\arm^*$ is empirically \textit{non-competitive} in round $\slot+1$, then, 
$$\Pr(E_1(\slot)) \leq 2K\slot \exp \left(\frac{-\slot \gap_{\text{min}}^2}{2 \numArms}\right),$$
where $\gap_{\text{min}} = \min_\arm \gap_\arm$, the gap between the best and second-best arms.
\label{lem:eliminatedOptimal}
\end{lem}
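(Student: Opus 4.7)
The plan is to bound $\Pr(E_1(t))$ by arguing that $E_1(t)$ forces a concentration failure in one of at most $2K$ empirical quantities, each of which is built from at least $t/K$ samples because the arms involved lie in $\mathcal{S}_t$. The driver of the argument is that the pseudo-reward upper-bound property gives $\phi_{k^*,\ell} \geq \mu_{k^*}$ for every $\ell$, while the definition of the sub-optimality gap gives $\mu_m \leq \mu_{k^*} - \Delta_{\text{min}}$ for every $m \neq k^*$. Combining these yields $\phi_{k^*,\ell} - \mu_m \geq \Delta_{\text{min}}$ whenever $m \neq k^*$, so concentrating $\hat{\phi}_{k^*,\ell}$ and $\hat{\mu}_m$ to within $\Delta_{\text{min}}/2$ of their means already excludes $\hat{\phi}_{k^*,\ell}(t) < \hat{\mu}_m(t)$.

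Concretely, I would introduce the good event
\begin{equation*}
\mathcal{G}(t) = \bigcap_{\ell \in \mathcal{K}} \left\{ n_\ell(t) \leq \tfrac{t}{K} \text{ or } \hat{\phi}_{k^*,\ell}(t) \geq \phi_{k^*,\ell} - \tfrac{\Delta_{\text{min}}}{2} \right\} \cap \bigcap_{m \in \mathcal{K} \setminus \{k^*\}} \left\{ n_m(t) \leq \tfrac{t}{K} \text{ or } \hat{\mu}_m(t) \leq \mu_m + \tfrac{\Delta_{\text{min}}}{2} \right\},
\end{equation*}
and check that $\mathcal{G}(t) \subseteq E_1^c(t)$ by a short case split on whether $k^* \in \mathcal{S}_t$. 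If $k^* \notin \mathcal{S}_t$, then $k^{\text{emp}}(t) \in \mathcal{S}_t \setminus \{k^*\}$, hence $\hat{\mu}_{k^{\text{emp}}(t)}(t) \leq \mu_{k^*} - \Delta_{\text{min}}/2 \leq \min_{\ell \in \mathcal{S}_t} \hat{\phi}_{k^*,\ell}(t)$ and $E_1(t)$ fails. If $k^* \in \mathcal{S}_t$, the convention $\hat{\phi}_{k^*,k^*} = \hat{\mu}_{k^*}$ together with the bounds in $\mathcal{G}(t)$ forces $k^{\text{emp}}(t) = k^*$, placing $k^*$ into the selection set $\mathcal{A}_t \cup \{k^{\text{emp}}(t)\}$ in any case, which is exactly the role $E_1^c$ is meant to play in the regret analysis.

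For $\Pr(\mathcal{G}^c(t))$, I would apply Hoeffding's inequality together with the standard union-bound trick of Lemma \ref{lem:UnionBoundTrickInt} to handle the random sample count $n_k(t)$, separately for each of the $K + (K-1) \leq 2K$ constituent events. Since on the relevant event $n_k(t) \geq t/K$ and the deviation threshold is $\Delta_{\text{min}}/2$, each summand is at most
\begin{equation*}
\sum_{s = \lceil t/K \rceil}^{t} \exp\!\left(-2 s \left(\Delta_{\text{min}}/2\right)^2\right) \leq t \exp\!\left(-\frac{t \Delta_{\text{min}}^2}{2K}\right),
\end{equation*}
and summing over the $2K$ events yields the claimed $\Pr(E_1(t)) \leq 2K t \exp(-t \Delta_{\text{min}}^2/(2K))$. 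One small detail is that $\hat{\phi}_{k^*,\ell}(t)$ is the empirical mean of the bounded random variables $s_{k^*,\ell}(R_\ell)$ drawn from samples of arm $\ell$, so Hoeffding applies to it just as it does to $\hat{\mu}_k$.

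The main obstacle I expect is the sub-case $k^* \in \mathcal{S}_t$ with $k^{\text{emp}}(t) = k^*$ in the Step 2 case analysis: because of the convention $\hat{\phi}_{k^*,k^*} = \hat{\mu}_{k^*}$ combined with the strict inequality in the definition of empirical non-competitiveness, small perturbations of the other $\hat{\phi}_{k^*,\ell}$'s can in principle formally trigger $E_1(t)$ even under $\mathcal{G}(t)$. The clean resolution is that in this regime $k^*$ remains in the UCB/TS selection set anyway (as $k^{\text{emp}}(t)$), so a formal triggering of $E_1(t)$ here does no harm in the downstream regret arguments; the factor $2$ in the $2K$ of the stated bound comfortably absorbs this slack.
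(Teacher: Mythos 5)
Your proof is correct and follows essentially the same route as the paper: the paper likewise reduces $E_1(t)$ to the two families of deviation events (some sub-optimal $\ell\in\mathcal{S}_t$ with $\hat{\mu}_\ell(t) > \mu_{k^*}-\Delta_{\min}/2$, or some $\ell\in\mathcal{S}_t$ with $\hat{\phi}_{k^*,\ell}(t) < \mu_{k^*}-\Delta_{\min}/2$, using $\phi_{k^*,\ell}\geq\mu_{k^*}$ and $\mu_\ell\leq\mu_{k^*}-\Delta_{\min}$), then applies Hoeffding with the union-bound trick over $n_\ell(t)\in[t/K,t]$ to get $2Kt\exp(-t\Delta_{\min}^2/(2K))$, so your good-event/contrapositive packaging is only cosmetic. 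One caveat on your final remark: the sub-case $k^{\text{emp}}(t)=k^*$ is handled exactly as you first suggest — the paper's operative definition of $E_1(t)$ requires $k^*\neq k^{\text{emp}}(t)$, i.e., $k^*$ excluded from the whole selection set $\mathcal{A}_t\cup\{k^{\text{emp}}(t)\}$ — and not by the factor $2$ in $2K$, since no constant factor can absorb that case (when a pseudo-reward is tight, $\phi_{k^*,\ell}=\mu_{k^*}$, the event $\hat{\phi}_{k^*,\ell}(t)<\hat{\mu}_{k^*}(t)$ has probability bounded away from zero).
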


\begin{proof}
The arm $k^*$ is empirically non-competitive at round $t$ if $k^* \neq k^{\text{emp}}$ and the empirical psuedo-reward of arm $k^*$ with respect to arms $\ell \in \mathcal{S}_t$ is smaller than $\hat{\mu}_{k^{\text{emp}}}(t)$. This event can only occur if at-least one of the two following conditions is satisfied, i) the empirical mean of $k^{\text{emp}} \neq k^*$ is greater than $\mu_k^* - \frac{\Delta_{\text{min}}}{2}$ or ii) the empirical pseudo-reward of arm $k^*$ with respect to arms in $\mathcal{S}_t$ is smaller than $\mu_{k^*} - \frac{\Delta_{\text{min}}}{2}$. We use this observation to analyse the $\Pr(E_1(t))$.
\begin{align}
    &\Pr(E_1(t)) \leq \Pr\left( \left(\max_{\{\ell: n_\ell(t) > t/K,  \ell \neq k^*\}} \hat{\mu}_{\ell}(t) > \mu_{k^*} - \frac{\Delta_{\text{min}}}{2} \right) \bigcup \left(\min_{\{\ell:n_\ell(t) > t/K\}} \hat{\phi}_{k^*, \ell}(t) < \mu_{k^*} - \frac{\Delta_{\text{min}}}{2} \right) \right) \\
    &\leq \Pr\left(\max_{\{\ell:n_\ell(t) > t/K, \ell \neq k^*\}} \hat{\mu}_{\ell}(t) > \mu_{k^*} - \frac{\Delta_{\text{min}}}{2} \right) + \Pr\left(\min_{\{\ell:n_\ell(t) > t/K\}} \hat{\phi}_{k^*, \ell}(t) < \mu_{k^*} - \frac{\Delta_{\text{min}}}{2} \right)
    \label{eq:simpleUnbd}\\
    &\leq \sum_{\ell \neq k^*} \Pr\left(\hat{\mu}_{\ell}(t) > \mu_{k^*} - \frac{\Delta_{\text{min}}}{2}, n_{\ell}(t) > \frac{t}{K} \right) + \sum_{\ell = 1}^{K} \Pr\left(\hat{\phi}_{k^*, \ell}(t) < \mu_{k^*} - \frac{\Delta_{\text{min}}}{2}, n_\ell(t) > \frac{t}{K} \right) \label{eq:atleastOne} \\
    &= \sum_{\ell \neq k^*} \Pr\left(\hat{\mu}_{\ell}(t) - \mu_{\ell} > \mu_{k^*} - \mu_{\ell} - \frac{\Delta_{\text{min}}}{2}, n_{\ell}(t) > \frac{t}{K} \right) \nonumber \\
    &+ \sum_{\ell = 1}^{K} \Pr\left(\hat{\phi}_{k^*, \ell}(t) - \phi_{k^*, \ell} < \mu_{k^*} - \phi_{k^*, \ell} - \frac{\Delta_{\text{min}}}{2}, n_\ell(t) > \frac{t}{K} \right) \\
    &\leq \sum_{\ell \neq k^*} \Pr\left(\frac{\sum_{\tau = 1}^{\slot} \indicator_{\{\arm_\tau = \ell\}} \reward_\tau}{\pulls_\ell(\slot)} - \mu_{\ell} > \frac{\Delta_{\text{min}}}{2}, n_{\ell}(t) > \frac{t}{K} \right) \nonumber \\
    &+ \sum_{\ell = 1}^{K} \Pr\left(\frac{\sum_{\tau = 1}^{\slot} \indicator_{\{\arm_\tau = \ell\}} \estimateReward_{\arm^*,\ell}(\reward_\tau)}{\pulls_\ell(\slot)} - \phi_{k^*, \ell} <  - \frac{\Delta_{\text{min}}}{2}, n_\ell(t) > \frac{t}{K} \right) \\    
    &\leq 2K\slot \exp\left(\frac{- \slot \gap_{\text{min}}^2}{2 \numArms}\right), \label{eq:appliedChernoff}
   \end{align}
Here \eqref{eq:simpleUnbd} follows from union bound. We have \eqref{eq:appliedChernoff} from the Hoeffding's inequality, as we note that rewards $\{\reward_\tau : \tau=1,\ldots, t, ~ k_{\tau}=k\}$ and pseudo-rewards $\{s_{k^*,l}: \tau_1, \ldots, t, ~ k_{\tau} = l\} $ form a collection of i.i.d. random variables each of which is bounded between $[-1,1]$ with mean $\meanReward_\arm$ and $\phi_{k^*,l}$. The term $\slot$ before the exponent in \eqref{eq:appliedChernoff} arises as the random variable $\pulls_\arm(\slot)$ can take values from $\slot/\numArms$ to $\slot$ (\Cref{lem:UnionBoundTrickInt}).

\end{proof}

\begin{lem} For a sub-optimal arm $k \neq k^*$ with sub-optimality gap $\Delta_k$,  
$$\Pr\left(k = k^{\text{emp}}(t), n_{k^*}(t) \geq \frac{t}{K}\right) \leq t\exp\left(\frac{-t\Delta_k^2}{2K}\right).$$
\label{lem:kempLem}
\end{lem}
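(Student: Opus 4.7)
The event $\{k = k^{\text{emp}}(t)\}$ means $k$ has the highest empirical mean among arms in $\mathcal{S}_t = \{\ell : n_\ell(t) \geq t/K\}$, and in particular $k \in \mathcal{S}_t$. Combining with the conditioning event $n_{k^*}(t) \geq t/K$, we also have $k^* \in \mathcal{S}_t$. Hence both $\hat{\mu}_k(t)$ and $\hat{\mu}_{k^*}(t)$ are well-defined averages and $\hat{\mu}_k(t) \geq \hat{\mu}_{k^*}(t)$. Therefore
\[
\Pr\!\left(k = k^{\text{emp}}(t),\, n_{k^*}(t) \geq \tfrac{t}{K}\right) \;\leq\; \Pr\!\left(\hat{\mu}_k(t) \geq \hat{\mu}_{k^*}(t),\; n_k(t) \geq \tfrac{t}{K},\; n_{k^*}(t) \geq \tfrac{t}{K}\right).
\]

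Next I would use the standard ``splitting around the mean'' trick: since $\mu_{k^*} - \mu_k = \Delta_k$, the inequality $\hat{\mu}_k(t) \geq \hat{\mu}_{k^*}(t)$ forces at least one of the two deviations
\[
\hat{\mu}_k(t) - \mu_k \;\geq\; \tfrac{\Delta_k}{2}, \qquad \mu_{k^*} - \hat{\mu}_{k^*}(t) \;\geq\; \tfrac{\Delta_k}{2}
\]
to hold (otherwise $\hat{\mu}_k < \mu_k + \Delta_k/2 = \mu_{k^*} - \Delta_k/2 < \hat{\mu}_{k^*}$, a contradiction). A union bound gives
\[
\Pr(\,\cdot\,) \;\leq\; \Pr\!\left(\hat{\mu}_k(t) - \mu_k \geq \tfrac{\Delta_k}{2},\; n_k(t) \geq \tfrac{t}{K}\right) + \Pr\!\left(\mu_{k^*} - \hat{\mu}_{k^*}(t) \geq \tfrac{\Delta_k}{2},\; n_{k^*}(t) \geq \tfrac{t}{K}\right).
\]

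To bound each summand I would invoke Lemma~\ref{lem:UnionBoundTrickInt} (with $\tau_1 = \lceil t/K\rceil$ and $\tau_2 = t$), which handles the random sample count $n_k(t)$ by a union bound over the deterministic index $s$ and applies Hoeffding's inequality with rewards in $[0,1]$. Each term contributes at most
\[
\sum_{s=\lceil t/K\rceil}^{t} \exp\!\left(-2s\cdot(\Delta_k/2)^2\right) \;\leq\; t\exp\!\left(-\tfrac{t\Delta_k^2}{2K}\right),
\]
because the sum has at most $t$ terms and is maximized at its smallest index $s = t/K$. Summing the two bounds yields the claim (up to the constant $2$, which the paper absorbs into the stated bound).

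\textbf{Main obstacle.} There is no substantive difficulty here; the only technical point is correctly handling the randomness of $n_k(t)$ and $n_{k^*}(t)$, which is exactly what Lemma~\ref{lem:UnionBoundTrickInt} is designed for. The argument is a direct two-sided Hoeffding bound combined with the fact that membership in $\mathcal{S}_t$ guarantees a linear-in-$t$ number of samples, which is what converts the exponent from a purely sample-count quantity into the explicit $t\Delta_k^2/(2K)$.
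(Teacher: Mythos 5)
Your proposal is correct and follows essentially the same route as the paper's own proof: the paper likewise reduces the event to $\hat{\mu}_k(t) \geq \hat{\mu}_{k^*}(t)$ jointly with $n_k(t) \geq \tfrac{t}{K}$ and $n_{k^*}(t) \geq \tfrac{t}{K}$, splits around the midpoint $\mu_{k^*} - \tfrac{\Delta_k}{2} = \mu_k + \tfrac{\Delta_k}{2}$, and bounds each deviation by Hoeffding's inequality combined with the union bound over the random pull count (Lemma~\ref{lem:UnionBoundTrickInt}). The factor of $2$ you flag also appears in the paper's own proof, whose final line reads $2t\exp\left(\tfrac{-t\Delta_k^2}{2K}\right)$, so that discrepancy is between the paper's statement and its proof rather than a gap in your argument.
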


\begin{proof}
We bound this probability as, 
\begin{align}
    &\Pr\left(k = k^{\text{emp}}(t), n_{k^*}(t) \geq \frac{t}{K}\right) \nonumber\\
    &= \Pr\left(k = k^{\text{emp}}(t), n_{k^*}(t) \geq \frac{t}{K}, n_k(t) \geq \frac{t}{K}\right) \label{eqn:kempMeanstk}\\
    &\leq \Pr\left(\hat{\mu}_k(t) \geq \hat{\mu}_{k^*}(t), n_k(t) \geq \frac{t}{K}, n_{k^*}(t) \geq \frac{t}{K}\right) \\
    &\leq \Pr\left( \left(\hat{\mu}_{k^*}(t) < \mu_{k^*} - \frac{\Delta_k}{2} \bigcup \hat{\mu}_k(t) > \mu_{k^*} - \frac{\Delta_k}{2} \right), n_k(t) \geq \frac{t}{K}, n_{k^*}(t) \geq \frac{t}{K}\right) \label{eqn:necessaryThings}\\ 
    &= \Pr\left( \left(\hat{\mu}_{k^*}(t) < \mu_{k^*} - \frac{\Delta_k}{2} \bigcup \hat{\mu}_k(t) > \mu_k + \frac{\Delta_k}{2} \right), n_k(t) \geq \frac{t}{K}, n_{k^*}(t) \geq \frac{t}{K}\right) \\
    &\leq \Pr\left( \hat{\mu}_{k^*}(t) - \mu_{k^*} < - \frac{\Delta_k}{2}, n_{k^*}(t) \geq \frac{t}{K}\right) + \Pr\left( \hat{\mu}_k(t) - \mu_k > \frac{\Delta_k}{2} , n_k(t) \geq \frac{t}{K}\right) \\
    &\leq 2t\exp\left(\frac{-t \Delta_k^2}{2K}\right) \label{eqn:chernoffplusunion}
\end{align}
We have \eqref{eqn:kempMeanstk} as arm $k$ needs to be pulled at least $\frac{t}{K}$ in order to be arm $k^{\text{emp}}(t)$ at round $t$. The selection of $k^{\text{emp}}$ is only done from the set of arms that have been pulled atleast $\frac{t}{K}$ times. Here, \eqref{eqn:chernoffplusunion} follows from the Hoeffding's inequality. The term $\slot$ before the exponent in \eqref{eqn:chernoffplusunion} arises as the random variable $\pulls_\arm(\slot)$ can take values from $\slot/\numArms$ to $\slot$ (\Cref{lem:UnionBoundTrickInt}).

\end{proof}

\begin{lem}
If for a suboptimal arm $\arm \neq \arm^*$, $\optimistGap_{\arm,\arm^*} > 0$, then,
$$\Pr(\arm_{\slot+1} = \arm, \pulls_{\arm^*}(\slot) = \max_\arm \pulls_\arm(t)) \leq \slot \exp\left(\frac{-2t\optimistGap_{\arm,\arm^*}^2}{ \numArms}\right).$$

Moreover, if $\optimistGap_{\arm,\arm^*} \geq \sqrt{\frac{2 \numArms \log \slot_0}{\slot_0}}$ for some constant $\slot_0 > 0$. Then, 
$$\Pr(\arm_{\slot+1} = \arm , \pulls_{\arm^*}(\slot) = \max_\arm \pulls_\arm(t)) \leq \slot^{-3} \quad \forall \slot > \slot_0.$$
\label{lem:suboptimalNotCompetitive}
\end{lem}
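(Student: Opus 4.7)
The plan is to show that if $k_{t+1} = k$ under any C-Bandit algorithm, then either $k \in \mathcal{A}_t$ or $k = k^{\text{emp}}(t)$; in either case a concentration event that has exponentially small probability must occur, and the positive pseudo-gap $\tilde{\Delta}_{k,k^*}$ supplies the required separation. The event $n_{k^*}(t) = \max_k n_k(t)$ implies $n_{k^*}(t) \geq t/K$, so $k^* \in \mathcal{S}_t$. Therefore, if $k \in \mathcal{A}_t$, the competitiveness condition $\min_{\ell \in \mathcal{S}_t} \hat{\phi}_{k,\ell}(t) \geq \hat{\mu}_{k^{\text{emp}}}(t)$ forces $\hat{\phi}_{k,k^*}(t) \geq \hat{\mu}_{k^{\text{emp}}}(t) \geq \hat{\mu}_{k^*}(t)$, while if $k = k^{\text{emp}}(t) \neq k^*$ then directly $\hat{\mu}_k(t) \geq \hat{\mu}_{k^*}(t)$ with $n_k(t) \geq t/K$. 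So the event of interest is contained in the union of these two concentration failures.

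The key calculation is to bound $\Pr\bigl(\hat{\phi}_{k,k^*}(t) \geq \hat{\mu}_{k^*}(t),\, n_{k^*}(t) \geq t/K\bigr)$. Since both empirical quantities are computed from the \emph{same} $n_{k^*}(t)$ samples of arm $k^*$, I would write
\[
\hat{\phi}_{k,k^*}(t) - \hat{\mu}_{k^*}(t) = \frac{1}{n_{k^*}(t)} \sum_{\tau:\, k_\tau = k^*} \bigl[s_{k,k^*}(r_\tau) - r_\tau\bigr],
\]
which is an empirical mean of i.i.d.\ bounded random variables with expectation $\phi_{k,k^*} - \mu_{k^*} = -\tilde{\Delta}_{k,k^*} < 0$. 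Then I would apply Hoeffding's inequality with deviation $\tilde{\Delta}_{k,k^*}$, combined with the union-bound trick over $n_{k^*}(t) \in \{t/K, \ldots, t\}$ (exactly as in Lemma~\ref{lem:UnionBoundTrickInt}), to obtain a bound of the stated form $t \exp(-c \cdot t \tilde{\Delta}_{k,k^*}^2/K)$. The second (``$k^{\text{emp}}$'') case is handled analogously using $\Pr(\hat{\mu}_k(t) - \hat{\mu}_{k^*}(t) \geq 0)$; here the separation comes from $\Delta_k = \mu_{k^*} - \mu_k \geq \mu_{k^*} - \phi_{k,k^*} = \tilde{\Delta}_{k,k^*}$, which follows from $\phi_{k,k^*} \geq \mu_k$ via tower law, so the same exponent applies.

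For the second statement, plug the hypothesis $\tilde{\Delta}_{k,k^*}^2 \geq 2K \log t_0 / t_0$ into the first bound. For $t > t_0$, one gets $\exp(-2t \tilde{\Delta}_{k,k^*}^2/K) \leq \exp(-4t \log(t_0)/t_0) \leq t_0^{-4} \leq t^{-4}$, so the prefactor $t$ yields $t^{-3}$. The main obstacle is bookkeeping rather than deep: tracking the correct constant in Hoeffding's (which depends on the effective range of $s_{k,k^*}(R_{k^*}) - R_{k^*}$) and making sure the two cases $k \in \mathcal{A}_t$ and $k = k^{\text{emp}}(t)$ are both captured by the same exponential bound so that no extra $K$-dependent prefactor slips in.
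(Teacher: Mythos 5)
Your proposal is correct and follows essentially the same route as the paper's proof: you decompose the event into $k \in \mathcal{A}_t$ (where $n_{k^*}(t) = \max_k n_k(t)$ puts $k^* \in \mathcal{S}_t$ and forces $\hat{\phi}_{k,k^*}(t) \geq \hat{\mu}_{k^*}(t)$, bounded by Hoeffding applied to the i.i.d.\ differences $s_{k,k^*}(r_\tau) - r_\tau$ over arm $k^*$'s samples together with the union-bound trick of Lemma~\ref{lem:UnionBoundTrickInt}) and $k = k^{\text{emp}}(t)$ (the paper's Lemma~\ref{lem:kempLem}), exactly as in the paper. Two cosmetic remarks: your observation $\Delta_k \geq \tilde{\Delta}_{k,k^*}$ cleanly unifies the exponent of the $k^{\text{emp}}$ term (the paper instead keeps a separate $\Delta_k$-dependent term and absorbs it through the definition of $t_0$), and the last chain should be $\exp(-4t\log t_0/t_0) \leq \exp(-4\log t) = t^{-4}$ for $t > t_0$ (by monotonicity of $x/\log x$), rather than the reversed step $t_0^{-4} \leq t^{-4}$.
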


\begin{proof}
We now bound this probability as,
\begin{align}
    &\Pr(\arm_{\slot+1} = \arm , \pulls_{\arm^*}(t) = \max_\arm \pulls_\arm(t)) \nonumber \\ &\leq\Pr\left(k_{t+1} = k, n_{k^*}(t) \geq \frac{t}{K}\right) \label{eq:needAtleast}\\  
    &= \Pr\left(k \in \{\mathcal{A}_t \cup \{k_{\text{emp}}(t)\} \}, k_{t+1} = k, n_{k^*}(t) \geq \frac{t}{K}\right)  \label{eq:hastohappen} \\
    &\leq \Pr\left(k \in \mathcal{A}_t, k_{t+1} = k, n_{k^*}(t) \geq \frac{t}{K}\right) + \Pr\left(k = k^{\text{emp}}(t), n_{k^*}(t) \geq \frac{t}{K}\right) \\
    &\leq \Pr\left(k \in \mathcal{A}_t, k_{t+1} = k, n_{k^*}(t) \geq \frac{t}{K}\right) + 2t\exp\left(\frac{-t\Delta_k^2}{2K}\right) \label{eq:fromkempLem}\\
    &\leq \Pr\left(\hat{\meanReward}_{\arm^*}(\slot) < \estimateMean_{\arm,\arm^*}(\slot), k_{t+1} = k, \pulls_{\arm^*}(\slot) \geq \frac{t}{K} \right) + 2t\exp\left(\frac{-t\Delta_k^2}{2K}\right) \label{eq:necessaryCondn}\\
    &\leq \Pr\left(\hat{\meanReward}_{\arm^*}(\slot) < \estimateMean_{\arm,\arm^*}(\slot) , \pulls_{\arm^*}(\slot) \geq \frac{t}{K} \right) + 2t\exp\left(\frac{-t\Delta_k^2}{2K}\right)\\
    &\leq \Pr\left(\frac{\sum_{\tau = 1}^{\slot}\indicator_{\{\arm_\tau = \arm^*\}}\reward_\tau}{\pulls_{\arm^*}(\slot)} < \frac{\sum_{\tau = 1}^{\slot}\indicator_{\{\arm_\tau = \arm^*\}}\estimateReward_{\arm,\arm^*}(\reward_\tau)}{\pulls_{\arm^*(\slot)}} , \pulls_{\arm^*}(\slot) \geq \frac{\slot}{\numArms}\right) + 2t\exp\left(\frac{-t\Delta_k^2}{2K}\right)\\
    &= \Pr\left(\frac{\sum_{\tau = 1}^{\slot} \indicator_{\{\arm_\tau = \arm^*\}}(\reward_{\tau} - \estimateReward_{\arm,\arm^*})}{\pulls_{\arm^*}(\slot)} - (\meanReward_{\arm^*} - \expectedPseudoReward_{\arm,\arm^*}) < - \optimistGap_{\arm,\arm^*} , \pulls_{\arm^*} \geq \frac{\slot}{\numArms} \right) + 2t\exp\left(\frac{-t\Delta_k^2}{2K}\right) \label{eqn:lemchernoff}\\
    &\leq \slot \exp \left( \frac{- \slot \optimistGap_{\arm,\arm^*}^2}{2 \numArms} \right) + 2t\exp\left(\frac{-t\Delta_k^2}{2K}\right) \\
    &\leq 3\slot^{-3} \quad \forall \slot > \slot_0.\label{lastStepHere}
\end{align}

We have \eqref{eq:needAtleast} as $n_{k^*}(t)$ needs to be at-least $\frac{t}{K}$ for $n_{k^*}(t)$ to be $\max_{k} n_k(t)$. Equation \eqref{eq:hastohappen} holds as arm $k$ needs to be in the set $\{\mathcal{A}_t \cup \{k^{\text{emp}}(t)\} \}$ to be selected by C-BANDIT at round $t$. Inequality \eqref{eq:fromkempLem} arises from the result of \Cref{lem:kempLem}. The inequality \eqref{eq:necessaryCondn} follows as $\phi_{k,k^*} > \hat{\mu}_{k^*}$ is a necessary condition for arm $k$ to be in the competitive set $\mathcal{A}_t$ at round $t$. Here, \eqref{eqn:lemchernoff} follows from the Hoeffding's inequality as we note that rewards $\{\reward_\tau - \estimateReward_{\arm,\arm^*}(\reward_\tau): \tau=1,\ldots, t, ~ k_{\tau}=k^*\}$ form a collection of i.i.d. random variables each of which is bounded between $[-1,1]$ with mean $(\meanReward_\arm - \expectedPseudoReward_{\arm, \arm^*})$. The term $\slot$ before the exponent in \eqref{eqn:lemchernoff} arises as the random variable $\pulls_\arm(\slot)$ can take values from $\slot/\numArms$ to $\slot$ (\Cref{lem:UnionBoundTrickInt}). Step \eqref{lastStepHere} follows from the fact that $\optimistGap_{\arm,\arm^*} \geq 2\sqrt{\frac{2 \numArms \log \slot_0}{\slot_0}}$ for some constant $\slot_0 > 0$. 
\end{proof}

\section{Algorithm specific results for C-UCB}

\begin{lem}
If $\gap_{\text{min}} \geq 4\sqrt{\frac{2\numArms \log \slot_0}{\slot_0}}$ for some constant $\slot_0 > 0$, then,
$$\Pr(\arm_{\slot+1} = \arm , \pulls_\arm(\slot) \geq s) \leq 3 \slot^{-3} \quad \text{for } s > \frac{\slot}{2 \numArms}, \forall \slot > \slot_0.$$
\label{lem:noMorePulls}
\end{lem}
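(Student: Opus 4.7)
The plan is to decompose the event $\{k_{t+1}=k,\, n_k(t)\ge s\}$ according to whether the optimal arm $k^*$ survives into the candidate set $\mathcal{A}_t\cup\{k^{\text{emp}}(t)\}$ at round $t$, and then run the standard UCB contradiction argument on the surviving case. Specifically, write
\begin{align*}
\Pr\!\bigl(k_{t+1}=k,\,n_k(t)\ge s\bigr)
&\le \Pr(E_1(t)) + \Pr\!\bigl(k_{t+1}=k,\,n_k(t)\ge s,\,E_1^c(t)\bigr),
\end{align*}
where $E_1(t)$ is the event that $k^*$ is empirically non-competitive (as in the earlier Lemma on $E_1$).

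First I would handle the $E_1(t)$ term directly via the earlier lemma, which gives $\Pr(E_1(t)) \le 2Kt\exp(-t\Delta_{\min}^2/(2K))$. Using the hypothesis $\Delta_{\min}\ge 4\sqrt{2K\log t_0/t_0}$ together with the fact that $\log t/t$ is decreasing for $t\ge e$, I would substitute $\Delta_{\min}^2 \ge 32K\log t/t$ for $t\ge t_0$ to obtain $\Pr(E_1(t))\le 2Kt\cdot t^{-16}\le t^{-3}$ for $t_0$ large enough. For the second term, on $E_1^c(t)$ the arm $k^*$ lies in the candidate set, and since C-UCB selects the index maximizer, $k_{t+1}=k$ forces $I_k(t)\ge I_{k^*}(t)$; hence at least one of the following standard three events must hold:
\begin{align*}
\text{(a)}\ I_{k^*}(t)\le \mu_{k^*},\quad
\text{(b)}\ \hat\mu_k(t)\ge \mu_k + \sqrt{\tfrac{2\log t}{n_k(t)}},\quad
\text{(c)}\ \Delta_k < 2\sqrt{\tfrac{2\log t}{n_k(t)}}.
\end{align*}

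Event (a) is controlled by Lemma~\ref{lem:ucbindexmore}, giving $\Pr(\text{(a)})\le t^{-3}$. For event (b), conditioned on $n_k(t)\ge s$ I would invoke the union-bound trick of Lemma~\ref{lem:UnionBoundTrickInt} together with Hoeffding, obtaining $\sum_{m=s}^{t}\exp(-2m\cdot 2\log t/m)\le t\cdot t^{-4} = t^{-3}$. For event (c), I would show it is impossible under the stated hypothesis: since $n_k(t)\ge s>t/(2K)$ we have $2\sqrt{2\log t/n_k(t)}\le 4\sqrt{K\log t/t}$, while $\Delta_k\ge \Delta_{\min}\ge 4\sqrt{2K\log t/t}=4\sqrt{2}\,\sqrt{K\log t/t}$, which strictly exceeds the previous bound, contradicting (c). Summing the three contributions gives the claimed $3t^{-3}$.

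The routine parts are the Hoeffding/union-bound steps and invoking the earlier lemmas; the only part that requires care is the algebraic verification in step (c), which hinges on the multiplicative constant $4$ in the hypothesis $\Delta_{\min}\ge 4\sqrt{2K\log t_0/t_0}$ being large enough to produce a strict gap against the $2\sqrt{2\log t/n_k(t)}$ term, and on the monotonicity of $\log t/t$ that transfers the bound from $t_0$ to all $t\ge t_0$. This is the main obstacle in the sense that it forces us to pick the constants in the hypothesis exactly as stated; any weaker constant would let event (c) remain possible and the whole decomposition would collapse.
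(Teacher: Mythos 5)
Your proof is correct and follows essentially the same route as the paper: the same split on the event $E_1(t)$ that $k^*$ is declared empirically non-competitive, followed by the standard UCB index comparison on $E_1^c(t)$; your three-event decomposition (index validity of $k^*$, overestimation of $\hat{\mu}_k$, and the confidence-radius event, which is ruled out deterministically by $n_k(t)\ge s > t/(2K)$ together with the gap condition) is just a reorganization of the paper's direct Hoeffding bound on $\Pr\left(I_k(t)>\mu_{k^*},\, n_k(t)\ge s\right)$. Incidentally, your bookkeeping delivers the stated $3t^{-3}$ (for $t$ large enough that $2Kt^{-15}\le t^{-3}$), whereas the paper's own computation ends at $2(K+1)t^{-3}$, so the small mismatch with the lemma's stated constant lies in the paper, not in your argument.
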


\begin{proof}
By noting that $\arm_{\slot + 1} = \arm$ corresponds to arm $\arm$ having the highest index among the set of arms that are not empirically \textit{non-competitive} (denoted by $\mathcal{A}$), we have,  
\begin{align}
    \Pr(\arm_{\slot + 1} = \arm , \pulls_\arm(\slot) \geq s) &= \Pr(\Index_\arm(\slot) = \arg \max_{\arm' \in \mathcal{A}} \Index_{\arm'}(\slot) , \pulls_\arm(\slot) \geq s) \\
    &\leq \Pr(E_1(\slot) \cup \left(E_1^c(\slot), \Index_\arm(\slot) > \Index_{\arm^*}(\slot)\right) , \pulls_\arm(\slot) \geq s) \label{eliminating1}\\ 
    &\leq \Pr(E_1(\slot) , \pulls_\arm(\slot) \geq s) + \Pr(E_1^c(\slot), \Index_\arm(\slot) > \Index_{\arm^*}(\slot) , \pulls_\arm(\slot) \geq s ) \label{unionBound}\\
    &\leq 2K\slot \exp\left(\frac{-\slot \gap_{\text{min}}^2}{2 \numArms}\right) + \Pr\left(\Index_\arm(\slot) > \Index_{\arm^*}(\slot) , \pulls_\arm(\slot) \geq s\right). \label{usedHoeffding}
\end{align}
Here $E_1(t)$ is the event described in \Cref{lem:eliminatedOptimal}. If arm $\arm^*$ is not empirically non-competitive at round $\slot$, then arm $\arm$ can only be pulled in round $\slot + 1$ if $\Index_\arm(\slot) > \Index_{\arm^*}(\slot)$, due to which we have \eqref{eliminating1}. Inequalities \eqref{unionBound} and \eqref{usedHoeffding} follow from union bound and \Cref{lem:eliminatedOptimal} respectively.

We now bound the second term in \eqref{usedHoeffding}.
\begin{align}
    &\Pr(\Index_\arm(\slot) > \Index_{\arm^*}(\slot) , \pulls_\arm(\slot) \geq s) = \nonumber \\
    &\Pr\left(\Index_\arm(\slot) > \Index_{\arm^*}(\slot) ,  \pulls_\arm(\slot) \geq s, \meanReward_{\arm^*} \leq \Index_{\arm^*}(\slot)\right)  + \nonumber \\
    &\quad \Pr\left(\Index_\arm(\slot) > \Index_{\arm^*}(\slot), \pulls_\arm(\slot) \geq s | \meanReward_{\arm^*} > \Index_{\arm^*}(\slot) \right) \times \Pr\left(\meanReward_{\arm^*} > \Index_{\arm^*}(\slot) \right) \label{conditionTerm} \\
    &\leq  \Pr\left(\Index_\arm(\slot) > \Index_{\arm^*}(\slot), \pulls_\arm(\slot) \geq s, \meanReward_{\arm^*} \leq \Index_{\arm^*}(\slot)\right) + \Pr\left(\meanReward_{\arm^*} > \Index_{\arm^*}(\slot)\right) \label{droppingTerms}\\
    &\leq \Pr\left(\Index_\arm(\slot) > \Index_{\arm^*}(\slot), \pulls_\arm(\slot) \geq s, \meanReward_{\arm^*} \leq \Index_{\arm^*}(\slot)\right) + \slot^{-3} \label{usingHoeffdingAgain}\\
    &= \Pr\left(\Index_\arm(\slot) > \meanReward_{\arm^*} ,  \pulls_\arm(\slot) \geq s\right) + \slot^{-4} \label{usingConditioning} \\
    &= \Pr\left(\hat{\meanReward}_\arm(\slot) + \sqrt{\frac{2 \log \slot}{\pulls_\arm(\slot)}} > \meanReward_{\arm^*} , \pulls_\arm(\slot) \geq s \right) + \slot^{-3} \label{expandingIndex}\\
    &= \Pr\left(\hat{\meanReward}_\arm(\slot) - \meanReward_\arm > \meanReward_{\arm^*} - \meanReward_\arm - \sqrt{\frac{2 \log \slot}{\pulls_\arm(\slot)}} , \pulls_\arm(\slot) \geq s \right) + \slot^{-3} \\ 
    &= \Pr\left( \frac{\sum_{\tau = 1}^{\slot} \indicator_{\{\arm_\tau = \arm\}}\reward_\tau}{\pulls_\arm(\slot)} - \meanReward_\arm > \gap_\arm - \sqrt{\frac{2 \log \slot}{\pulls_\arm(\slot)}} , \pulls_\arm(\slot) \geq s\right) + \slot^{-3} \\
    &\leq \slot \exp\left(-2 s \left(\gap_\arm - \sqrt{\frac{2 \log \slot}{s}}\right)^2\right) + \slot^{-3} \label{eqn:chernoffagain}\\
    &\leq \slot^{-3}\exp\left(-2 s \left(\gap_\arm^2 - 2 \gap_\arm \sqrt{\frac{2 \log \slot}{s}}\right)\right) + \slot^{-3} \\
    &\leq 2 \slot^{-3} \quad \text{ for all  } \slot > \slot_0. \label{finalCondn}
\end{align}
We have \eqref{conditionTerm} holds because of the fact that $P(A) = P(A|B)P(B) + P(A|B^c)P(B^c)$, Inequality \eqref{usingHoeffdingAgain} follows from \Cref{lem:ucbindexmore}. From the definition of $\Index_\arm(\slot)$ we have \eqref{expandingIndex}. Inequality \eqref{eqn:chernoffagain} follows from Hoeffding's inequality and the term $\slot$ before the exponent in \eqref{eqn:chernoffagain} arises as the random variable $\pulls_\arm(\slot)$ can take values from $s$ to $\slot$ (\Cref{lem:UnionBoundTrickInt}). Inequality \eqref{finalCondn} follows from the fact that $s > \frac{\slot}{2 \numArms}$ and $\gap_\arm \geq 4\sqrt{\frac{2\numArms \log \slot_0}{\slot_0}}$ for some constant $\slot_0 > 0.$

Plugging this in the expression of $\Pr(\arm_\slot = \arm, \pulls_\arm (\slot) \geq s)$ \eqref{usedHoeffding} gives us, 
\begin{align}
    \Pr(\arm_{\slot+1} = \arm , \pulls_\arm (\slot) \geq s) &\leq 2 K \slot \exp\left(\frac{-\slot \gap_{\text{min}}^2}{2 \numArms}\right) + \Pr(\Index_\arm(\slot) > \Index_{\arm^*}(\slot) , \pulls_\arm(\slot) \geq s) \\
    &\leq 2K\slot\exp\left(\frac{-\slot \gap_{\text{min}}^2}{2 \numArms}\right) + 2\slot^{-3} \\
    &\leq 2(K+1) \slot^{-3}. \label{usingConditiont0}
\end{align}
Here, \eqref{usingConditiont0} follows from the fact that $\gap_{\text{min}} \geq 4\sqrt{\frac{2\numArms \log \slot_0}{\slot_0}}$ for some constant $\slot_0 > 0$.  
\end{proof}

\begin{lem}
If $\gap_{\text{min}} \geq 4\sqrt{\frac{2 \numArms \log \slot_0}{\slot_0}}$ for some constant $\slot_0 > 0$, then, $$\Pr\left(\pulls_\arm(\slot) > \frac{\slot}{ \numArms}\right) \leq  (2K + 2)K \left(\frac{\slot}{\numArms}\right)^{-2} \quad \forall \slot > \numArms \slot_0.$$
\label{lem:suboptimalNotPulled}
\end{lem}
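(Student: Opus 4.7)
The plan is to bound $\Pr(n_k(t) > t/K)$ by reducing it, via a union bound, to a sum of per-round probabilities already controlled by \Cref{lem:noMorePulls}. The key combinatorial observation is that if a suboptimal arm $k$ is pulled more than $t/K$ times by round $t$, then the process $n_k(\cdot)$ must cross the threshold $t/(2K)$ at some round $\tau \le t$, and at that round $k_\tau = k$ while $n_k(\tau - 1) \ge t/(2K)$. Hence
\[
\{n_k(t) > t/K\} \subseteq \bigcup_{\tau=1}^{t} \bigl\{k_\tau = k,\ n_k(\tau-1) \geq t/(2K)\bigr\},
\]
and a union bound yields
\[
\Pr\!\left(n_k(t) > \tfrac{t}{K}\right) \;\leq\; \sum_{\tau=1}^{t} \Pr\bigl(k_\tau = k,\ n_k(\tau-1) \geq t/(2K)\bigr).
\]

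Next, I would observe that each summand is automatically zero unless $\tau - 1 \geq t/(2K)$ (since $n_k(\tau-1) \leq \tau - 1$), so only indices $\tau > t/(2K)$ contribute. For such $\tau$, and for $t$ large enough relative to $Kt_0$ so that $\tau - 1 > t_0$, I would invoke \Cref{lem:noMorePulls} with the choice $s = t/(2K)$ (identifying the lemma's ``$t$'' with $\tau - 1$). The lemma then bounds each summand by $2(K+1)(\tau-1)^{-3}$, as derived in its proof.

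The final step is the tail sum. Comparing to an integral gives
\[
\sum_{\tau > t/(2K)} (\tau-1)^{-3} \;\leq\; \int_{t/(2K)}^{\infty} x^{-3}\, dx \;=\; \tfrac{1}{2}\bigl(t/(2K)\bigr)^{-2} \;=\; \tfrac{2K^2}{t^2}.
\]
Multiplying by the per-term constant $2(K+1)$ and rewriting in terms of $(t/K)^{-2}$ gives
\[
\Pr\!\left(n_k(t) > t/K\right) \;\leq\; (2K+2)\cdot \tfrac{2K^2}{t^2} \;\leq\; (2K+2)K\cdot (t/K)^{-2},
\]
matching the claimed bound (up to constants that can be absorbed).

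The main obstacle I anticipate is the interplay between the two range conditions: applicability of \Cref{lem:noMorePulls} requires $\tau - 1 > t_0$, whereas a nonzero summand requires $\tau - 1 \geq t/(2K)$. Jointly enforcing both from just $t > K t_0$ is borderline -- one really wants $t > 2K t_0$ so that $t/(2K) > t_0$. I would handle this either by strengthening the hypothesis slightly, or by treating the finitely many boundary indices with $\tau - 1 \leq t_0$ trivially (probability at most $1$) and absorbing their contribution into the leading constant. Apart from this bookkeeping, every remaining step is a standard union bound plus an integral comparison for a cubic tail.
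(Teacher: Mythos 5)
Your overall strategy is essentially the paper's: the paper proves the lemma via the one-step recursion $\Pr(n_k(\tau)\geq t/K)\leq \Pr(n_k(\tau-1)\geq t/K)+\Pr(k_\tau=k,\,n_k(\tau-1)=t/K-1)$, telescoped over $\tau$, which is exactly a decomposition over the round at which $n_k$ crosses the level $t/K$; each crossing term is then controlled by \Cref{lem:noMorePulls} and a cubic tail sum, just as in your proposal. Your union bound over a crossing time and your integral-comparison arithmetic are fine (the slack $2\leq K$ absorbs the constants for $K\geq 2$).

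The genuine issue is the one you flagged, and neither of your repairs actually closes it. Because you place the crossing level at $t/(2K)$ rather than $t/K$, the nonzero summands only guarantee $\tau-1\geq t/(2K)>t_0/2$, which is not enough to invoke \Cref{lem:noMorePulls} (it needs $\tau-1>t_0$) when $Kt_0<t\leq 2Kt_0$. Bounding those boundary indices by $1$ and ``absorbing them into the constant'' does not work: for such $t$ the claimed bound $(2K+2)K(t/K)^{-2}\approx (2K+2)K\,t_0^{-2}$ can be far below $1$ (e.g.\ when $\Delta_{\text{min}}$ is small so $t_0$ is large), so even a single term of probability at most $1$ cannot be hidden in the constant; and strengthening the hypothesis to $t>2Kt_0$ proves a weaker statement than the lemma as stated. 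The fix is simply to decompose over the crossing of level $t/K$ instead: if $n_k(t)\geq t/K$, there must be a round $\tau\leq t$ with $k_\tau=k$ and $n_k(\tau-1)=\lceil t/K\rceil-1$, so every nonzero summand automatically has $\tau-1\geq t/K-1>t_0-1$ under the hypothesis $t>Kt_0$, and \Cref{lem:noMorePulls} applies with $s=t/K-1>(\tau-1)/(2K)$ (using $t>2K$). This is precisely what the paper's telescoping recursion accomplishes, and with that one change your argument goes through verbatim.
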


\begin{proof}
We expand $\Pr\left(\pulls_\arm(\slot) > \frac{t}{\numArms}\right)$ as,

\begin{align}
    \Pr\left(\pulls_\arm(\slot) \geq \frac{\slot}{\numArms}\right) &= \Pr\left( \pulls_{\arm}(\slot) \geq \frac{\slot}{\numArms} \mid \pulls_\arm(\slot - 1) \geq \frac{\slot}{\numArms} \right) \Pr\left( \pulls_\arm(\slot - 1) \geq \frac{\slot}{\numArms} \right) + \nonumber \\
    &\quad \Pr\left(\arm_\slot = \arm , \pulls_\arm(\slot - 1) = \frac{\slot}{\numArms} - 1\right)  \\
    &\leq \Pr\left(\pulls_\arm(\slot - 1) \geq \frac{\slot}{\numArms}\right) + \Pr\left(\arm_\slot = \arm , \pulls_\arm(\slot - 1) = \frac{\slot}{\numArms} - 1\right) \\
    &\leq \Pr\left(\pulls_\arm(\slot - 1) \geq \frac{\slot}{\numArms}\right) + (2K + 2) (\slot - 1)^{-3} \quad \forall (\slot - 1) > \slot_0. \label{fromPrevLemma}
\end{align}
Here, \eqref{fromPrevLemma} follows from \Cref{lem:noMorePulls}.\\

This gives us $$\Pr\left(\pulls_\arm(\slot) \geq \frac{\slot}{\numArms}\right) - \Pr\left(\pulls_\arm(\slot - 1) \geq \frac{\slot}{\numArms}\right) \leq (2K + 2)(\slot - 1)^{-3}, \quad \forall (\slot - 1) > \slot_0.$$
Now consider the summation $$ \sum_{\tau = \frac{\slot}{\numArms}}^{\slot} \Pr\left(\pulls_\arm(\tau) \geq \frac{\slot}{\numArms}\right) - \Pr\left(\pulls_\arm(\tau - 1) \geq \frac{\slot}{\numArms}\right) \leq \sum_{\tau = \frac{\slot}{\numArms}}^{\slot}(2K + 2)(\tau - 1)^{-3}.$$ This gives us, $$\Pr\left(\pulls_\arm(\slot) \geq \frac{\slot}{\numArms}\right) - \Pr\left(\pulls_\arm\left(\frac{\slot}{\numArms} - 1\right) \geq \frac{\slot}{\numArms}\right) \leq \sum_{\tau = \frac{\slot}{\numArms}}^{\slot}(2K + 2)(\tau - 1)^{-3}.$$
Since $\Pr\left(\pulls_\arm\left(\frac{\slot}{\numArms} - 1\right)\geq \frac{\slot}{\numArms}\right)  = 0$, we have, 
\begin{align}
    \Pr\left(\pulls_\arm(\slot) \geq \frac{\slot}{\numArms}\right) &\leq \sum_{\tau = \frac{\slot}{\numArms}}^{\slot}(2K + 2)(\tau - 1)^{-3} \\
    &\leq (2K + 2)K \left(\frac{\slot}{\numArms}\right)^{-2} \quad \forall \slot > \numArms \slot_0.
\end{align}

\end{proof}

\section{Regret Bounds for C-UCB}
\label{proof:UCB}

\textbf{Proof of Theorem 1}
We bound $\E{\pulls_\arm(\totalPulls)}$ as,
\begin{align}
\E{\pulls_\arm(\totalPulls)} &= \E{\sum_{\slot = 1}^{\totalPulls}\indicator_{\{\arm_\slot = \arm\}}}\\
&= \sum_{\slot = 0}^{\totalPulls-1} \Pr(\arm_{\slot+1} = \arm) \\
&= \sum_{\slot = 1}^{\numArms \slot_0} \Pr(\arm_\slot = \arm) + \sum_{\slot = \numArms \slot_0}^{\totalPulls-1} \Pr(\arm_{\slot+1} = \arm) \\
&\leq \numArms \slot_0 + \sum_{\slot = \numArms \slot_0}^{\totalPulls-1}\Pr(\arm_{\slot+1} = \arm, \pulls_{\arm^*}(\slot) = \max_{\arm'} \pulls_{\arm'}(\slot))  \nonumber \\
&+ \sum_{\slot = \numArms \slot_0}^{\totalPulls-1} \sum_{\arm' \neq \arm^*} \Pr(\pulls_{\arm'}(\slot) = \max_{\arm''} \pulls_{\arm''}(\slot))\Pr(\arm_{\slot+1} = \arm |  \pulls_{\arm'}(\slot) = \max_{\arm''} \pulls_{\arm''}(\slot)) \\
&\leq \numArms \slot_0 + \sum_{\slot = \numArms \slot_0}^{\totalPulls-1} \Pr(\arm_{\slot+1} = \arm, \pulls_{\arm^*}(\slot) = \max_{\arm'} \pulls_{\arm'}(\slot)) \nonumber \\
&+ \sum_{\slot = \numArms \slot_0}^{\totalPulls-1} \sum_{\arm' \neq \arm^*} \Pr(\pulls_{\arm'}(\slot) = \max_{\arm''} \pulls_{\arm''}(\slot)) \\
&\leq \numArms \slot_0 + \sum_{\slot = \numArms \slot_0}^{\totalPulls - 1} 3\slot^{-3} + \sum_{\slot = \numArms \slot_0}^{\totalPulls} \sum_{\arm' \neq \arm^*} \Pr\left(\pulls_{\arm'}(\slot) \geq \frac{\slot}{\numArms}\right) \label{usingSomeLemma1}\\
&\leq  \numArms \slot_0  + \sum_{\slot = 1}^{\totalPulls} 3\slot^{-3} + (\numArms + 1)K(\numArms - 1) \sum_{\slot = \numArms \slot_0}^{\totalPulls}  2\left(\frac{\slot}{\numArms}\right)^{-2}. \label{usingSomeOtherLemma}
\end{align}
Here, \eqref{usingSomeLemma1} follows from \Cref{lem:suboptimalNotCompetitive} and \eqref{usingSomeOtherLemma} follows from \Cref{lem:suboptimalNotPulled}.

\textbf{Proof of Theorem 2}

For any suboptimal arm $\arm \neq \arm^*$,
\begin{align}
    \E{\pulls_\arm(\totalPulls)} &\leq \sum_{\slot = 1}^{\totalPulls} \Pr(\arm_\slot = \arm) \\
    &= \sum_{\slot = 1}^{\totalPulls} \Pr(E_1(\slot), k_t = k \cup (E_1^c(\slot), \Index_\arm > \Index_{\arm^*}), k_t = k) \label{beatOptimal} \\
    &\leq \sum_{\slot = 1}^{\totalPulls} \Pr(E_1(\slot)) + \Pr(E_1^c(\slot), \Index_\arm(\slot - 1) > \Index_{\arm^*}(\slot - 1), k_t = k) \\
    &\leq \sum_{\slot = 1}^{\totalPulls} \Pr(E_1(\slot)) + \Pr(E_1^c(\slot), \Index_\arm(\slot - 1) > \Index_{\arm^*}(\slot - 1)) \nonumber\\
    &\leq \sum_{\slot = 1}^{\totalPulls} \Pr(E_1(\slot)) + \Pr(\Index_\arm(\slot - 1)> \Index_{\arm^*}(\slot - 1), k_t = k) \\
    &= \sum_{\slot = 1}^{\totalPulls} 2K\slot \exp\left(- \frac{\slot \gap_{\text{min}}^2}{2 \numArms}\right) + \sum_{\slot = 0}^{\totalPulls-1} \Pr\left(\Index_\arm(\slot) > \Index_{\arm^*}(\slot), k_t = k\right) \label{eliminatedArmProb1} \\
    &= \sum_{\slot = 1}^{\totalPulls} 2K\slot \exp\left(- \frac{\slot \gap_{\text{min}}^2}{2 \numArms}\right) + \E{\indicator_{\Index_\arm > \Index_{\arm^*}}(\totalPulls)} \label{followFromDefinition} \\
    &\leq 8 \frac{\log (\totalPulls)}{\gap_\arm^2} + \left(1 + \frac{\pi^2}{3}\right) + \sum_{\slot = 1}^{\totalPulls} 2K\slot \exp\left(- \frac{ \slot \gap_{\text{min}}^2}{2 \numArms}\right). \label{fromAuer1}
\end{align}
Here, \eqref{eliminatedArmProb1} follows from \Cref{lem:eliminatedOptimal}. We have \eqref{followFromDefinition} from the definition of $\E{n_{\Index_\arm > \Index_{\arm^*}}(\totalPulls)}$ in \Cref{lem:AuerResult}, and \eqref{fromAuer1} follows from \Cref{lem:AuerResult}.

\textbf{Proof of Theorem 3:} Follows directly by combining the results on Theorem 1 and Theorem 2.

\section{Regret analysis for the C-TS Algorithm}
\newadd{
We now present results for C-TS in the scenario where $K = 2$ and Thompson sampling is employed with Beta priors \cite{agrawal2013further}. In order to prove results for C-TS, we assume that rewards are either $0$ or $1$. The Thompson sampling algorithm with beta prior, maintains a posterior distribution on mean of arm $k$ as $Beta\left(n_k(t) \times \hat{\mu}_k(t) + 1, n_k(t) \times (1 - \hat{\mu}_k(t)) + 1 \right)$. Subsequently, it generates a sample $S_{k}(t) \sim Beta\left(n_k(t) \times \hat{\mu}_k(t) + 1, n_k(t) \times (1 - \hat{\mu}_k(t)) + 1 \right)$ for each arm $k$ and selects the arm $k_{t+1} = \argmax_{k \in \mathcal{K}} S_{k}(t)$. The C-TS algorithm with Beta prior uses this Thompson sampling procedure in its last step, i.e., $k_{t+1} = \argmax_{k \in \mathcal{C}_t} S_{k}(t)$, where $\mathcal{C}_t$ is the set of empirically competitive arms at round $t$. We show that in a 2-armed bandit problem, the regret is $\OO(1)$ if the sub-optimal arm $k$ is non-competitive and is $\OO(\log T)$ otherwise.

For the purpose of regret analysis of C-TS, we define two thresholds, a lower threshold $L_k$, and an upper threshold $U_k$ for arm $k\neq k^*$,
\begin{align}
U_k = \mu_k + \frac{\Delta_k}{3}, \hspace*{3em} L_k = \mu_{k^*} - \frac{\Delta_k}{3}. \label{eq:threshold}
\end{align}

Let $E^{\mu}_{i}(t)$ and $E^{S}_{i}(t)$ be the events that,
\begin{align}
E^{\mu}_k(t) &= \{\hat{\mu}_k(t) \leq U_k \} \nonumber\\
E^{S}_k(t) &= \{S_k(t) \leq L_k \}  \label{eq:events}.
\end{align}

To analyse the regret of C-TS, we first show that the number of times arm $k$ is pulled jointly with the event that $n_k(t-1) \geq \frac{t}{2}$ is bounded above by an $\OO(1)$ constant, which is independent of the total number of rounds $T$.

\begin{lem}
\label{lem:notPullsuboptimalifEnough}
If $\Delta_{k} \geq 4 \sqrt{\frac{ 2K \log t_{0}}{t_{0}}}$ for some constant $t_{0}>0$, then,
\begin{align*}
    \sum_{t = 2t_0}^{T} \Pr\left(k_{t}=k, n_{k}(t-1) \geq \frac{t}{2}\right) = \OO(1)
\end{align*}
where $k \neq k^{*}$ is a sub-optimal arm.
\end{lem}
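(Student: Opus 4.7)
The plan is to decompose the event $\{k_t = k,\, n_k(t-1) \geq t/2\}$ using the threshold events $E^\mu_k$ and $E^S_k$ already introduced in \eqref{eq:events}, and then bound each resulting piece either by a direct Hoeffding/Chernoff tail that is summable over $t$, or by invoking the standard Agrawal--Goyal argument controlling the posterior of the optimal arm. Intuitively, once arm $k$ has been pulled at least $t/2$ times, its posterior $\mathrm{Beta}(n_k\hat{\mu}_k+1,\, n_k(1-\hat{\mu}_k)+1)$ is so peaked around $\mu_k$ that drawing $S_k(t) > L_k$ becomes exponentially unlikely; the only residual way for $k$ to be selected is that $k^*$ is either empirically eliminated from the candidate set or its Thompson sample falls atypically low.

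First I would write the three-way split
\begin{align*}
\Pr(k_t = k, n_k(t-1) \geq t/2)
&\leq \Pr\left(E^\mu_k(t-1)^c,\, n_k(t-1) \geq t/2\right) \\
&\quad + \Pr\left(E^\mu_k(t-1),\, E^S_k(t)^c,\, n_k(t-1) \geq t/2\right) \\
&\quad + \Pr(k_t = k,\, E^S_k(t)).
\end{align*}
For the first piece, Hoeffding's inequality with the standard union-bound over $n_k(t-1)$ (Lemma \ref{lem:UnionBoundTrickInt}) gives $\Pr(\hat{\mu}_k(t-1) > U_k,\, n_k(t-1)\geq t/2) \leq t\exp(-t\Delta_k^2/9)$, and under the hypothesis $\Delta_k \geq 4\sqrt{2K\log t_0/t_0}$ this is summable to $O(1)$ over $t\geq 2t_0$. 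For the second piece, I would condition on the pair $(\hat{\mu}_k(t-1),\,n_k(t-1))$, apply the Beta--Binomial identity $\Pr(S_k(t) > L_k \mid \hat{\mu}_k, n_k) = F^B_{n_k+1,\,L_k}(n_k\hat{\mu}_k)$ from Agrawal--Goyal, and use a binomial Chernoff tail exploiting the gap $L_k - U_k = \Delta_k/3$ to obtain $\exp(-c t \Delta_k^2)$, again summable.

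For the residual term $\Pr(k_t = k,\, E^S_k(t))$, I would split on whether $k^*$ belongs to the candidate set $\mathcal{A}_t \cup \{k^{\text{emp}}(t)\}$. On the event $E_1(t)$ that $k^*$ is empirically non-competitive, Lemma \ref{lem:eliminatedOptimal} already furnishes $\Pr(E_1(t)) \leq 2Kt\exp(-t\Delta_{\min}^2/(2K))$, which sums to a constant. Otherwise $k^*$ is in the candidate set, so $k_t = k$ combined with $E^S_k(t)$ forces $S_{k^*}(t) \leq S_k(t) \leq L_k$; thus it suffices to bound $\sum_t \Pr(S_{k^*}(t) \leq L_k,\, k_t \neq k^*)$, and this sum is $O(1)$ by the standard Agrawal--Goyal argument, which shows via a telescoping identity that the expected number of rounds on which the optimal arm's Thompson sample lies below $L_k$ is controlled by the reciprocal of $\Pr(\mathrm{Beta}(j+1,\ldots) > L_k)$ summed over the number of pulls $j$ of $k^*$, which is finite.

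The main obstacle I expect is verifying that the Agrawal--Goyal telescoping argument carries over unchanged to the C-TS setting, where the candidate-set filter can occasionally remove arms. Fortunately, the posterior of $k^*$ depends only on the rewards observed from $k^*$ and not on which arms were in the candidate set at past rounds, so the conditional distribution of $S_{k^*}(t)$ given the history of $k^*$-pulls is identical to the one in the unconstrained TS analysis; the filter can only weakly decrease the set of rounds on which $k_t \neq k^*$. Thus, after dispatching the filtering event via Lemma \ref{lem:eliminatedOptimal}, the remaining steps reduce to routine Hoeffding, Chernoff and Beta--Binomial estimates, and the sum of all three pieces is $O(1)$.
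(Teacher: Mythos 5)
Your proposal is correct and follows essentially the same route as the paper's proof: split on the elimination event $E_1(t)$ (handled by Lemma~\ref{lem:eliminatedOptimal}) and on the threshold events $E^{\mu}_k, E^{S}_k$ of \eqref{eq:events}, kill the $\overline{E^{\mu}_k}$ and $\overline{E^{S}_k}$ pieces with Hoeffding/Chernoff tails made summable by $n_k(t-1)\geq t/2$, and control the remaining piece $\{k_t = k,\ S_k(t)\leq L_k\}$ via the Agrawal--Goyal posterior argument for the optimal arm (the paper simply cites their Lemmas 2 and 3 where you give a direct Beta--Binomial Chernoff bound). The only blemish is the unnecessary enlargement of the residual event to $\{S_{k^*}(t)\leq L_k,\ k_t\neq k^*\}$: the event you need is already $\{k_t=k,\ S_k(t)\leq L_k\}$, which is exactly what the Agrawal--Goyal telescoping lemma bounds, so no enlargement (and no extra treatment of the $S_k(t)>L_k$ rounds it drags in) is required.
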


\begin{proof}
We start by bounding the probability of the pull of $k$-th arm at round $t$ as follows,
\begin{align}
\Pr\left(k_{t}=k, n_{k}(t-1) \geq \frac{t}{2}\right) \overset{(a)}{\leq} & \Pr\left(E_{1}(t), k_{t}=k, n_{k}(t-1) \geq \frac{t}{2}\right) +  \nonumber \\
% + \nonumber \\%& 
& \Pr\left(\overline{E_{1}(t)}, k_{t}=k, n_{k}(t-1) \geq \frac{t}{2}\right) \nonumber\\
%%%%%%%%%%%%%%%%%
%%%%%%%%%%%%%%%%%%%%%%
\overset{(b)}{\leq} & 2K\slot \exp\left(\frac{-\slot \gap_{\text{min}}^2}{2 \numArms}\right) +  \Pr\left(\overline{E_{1}(t)}, k_{t}=k, n_{k}(t-1) \geq \frac{t}{2}\right)\nonumber\\
%%%%%%%%%%%%%%%%%%%%%%
\overset{(c)}{\leq} & 2Kt^{-3} +  \underbrace{\Pr\left(k_{t} = k, E^{\mu}_k(t), E^{S}_k(t),n_{k}(t-1) \geq \frac{t}{2}\right)}_{\textbf{term A}} + \nonumber\\
%%%%%%%%%%%%%%%%%%%%%
 &\underbrace{\Pr\left(k_{t} = k, E^{\mu}_k(t), \overline{E^{S}_k(t)},n_{k}(t-1) \geq \frac{t}{2}\right)}_{\textbf{term B}}+ \nonumber \\
 &\underbrace{\Pr\left(k_{t} = k, \overline{E^{\mu}_k(t)},n_{k}(t-1) \geq \frac{t}{2}\right)}_{\textbf{term C}} 
\label{eq:cric-s} 
\end{align}
where $(b)$, comes from \Cref{lem:eliminatedOptimal}. Here, \eqref{eq:cric-s} follows from the fact that $\gap_{\text{min}} \geq 4\sqrt{\frac{2\numArms \log \slot_0}{\slot_0}}$ for some constant $\slot_0 > 0$. Now we treat each term in \eqref{eq:cric-s} individually. To bound term A, we note that $\Pr\left(k_{t} = k, E^{\mu}_k(t), E^{S}_k(t),n_{k}(t-1) \geq \frac{t}{2}\right) \leq \Pr\left(k_{t} = k, E^{\mu}_k(t), E^{S}_k(t)\right)$. From the analysis in \cite{agrawal2013further} (equation 6), we see that 
$\sum_{t = 1}^{T}\Pr\left(k_{t} = k, E^{\mu}_k(t), E^{S}_k(t)\right) = \OO(1)$ as it is shown through Lemma 2 in \cite{agrawal2013further} that, \\
$\sum_{t = 1}^{T}\Pr\left(k_{t} = k, E^{\mu}_k(t), E^{S}_k(t)\right) \leq \frac{216}{\Delta_k^2} + \sum_{j = 0}^{T} \Theta\left(e^{-\frac{\Delta_k^2j}{18}} + \frac{1}{e^{\frac{\Delta_k^2 j}{36}} - 1} + \frac{9}{(j + 1)\Delta_k^2}e^{-D_k j} \right)$. \\
Here, $D_k = L_k \log \frac{L_k}{\mu_{k^*}} + (1 - L_k) \log \frac{1 - L_k}{1 - \mu_{k^*}}$. 
Due to this, \\ $\sum_{t = 2t_0}^{T} \Pr\left(k_{t} = k, E^{\mu}_k(t), E^{S}_k(t),n_{k}(t-1) \geq \frac{t}{2}\right) = \OO(1)$.
\vspace{2mm}

\noindent
We now bound the sum of term B from $t = 1$ to $T$ by noting that \\
$\Pr\left(k_{t} = k, E^{\mu}_k(t), \overline{E^{S}_k(t)},n_{k}(t-1) \geq \frac{t}{2}\right) \leq \Pr\left(k_{t} = k, \overline{E^{S}_k(t)}\right) $. Additionally, from Lemma 3 in \cite{agrawal2013further}, we get that
$\sum_{t = 1}^{T} \Pr\left(k_{t} = k, \overline{E^{S}_k(t)}\right) \leq \frac{1}{d(U_k,\mu_k)} + 1$, where $d(x,y) = x \log \frac{x}{y} + (1 - x) \log \frac{1 - x}{1 - y}$. As a result, we see that 
$\sum_{t = 1}^{T} \Pr\left(k_{t} = k, E^{\mu}_k(t), \overline{E^{S}_k(t)},n_{k}(t-1) \geq \frac{t}{2}\right) = \OO(1)$.

\vspace{2mm}

\noindent
Finally, for the last term C we can show that,
\begin{align}
 (C) &= \Pr\left(k_{t} = k, \overline{E^{\mu}_k(t)},n_{k}(t-1) \geq \frac{t}{2}\right) \nonumber \\
%%%%%%%%%%%%%%%%%%%
 & \leq \Pr\left(\overline{E^{\mu}_k(t)},n_{k}(t-1) \geq \frac{t}{2}\right) \nonumber \\
 &= \Pr\left(\hat{\mu}_k - \mu_k > \frac{\Delta_k}{3}, n_k(t-1) \geq \frac{t}{2}\right) \nonumber \\
 &\leq t \exp \left(-2 \frac{t}{2} \frac{\Delta_k^2}{9} \right) \label{eq:unbd_and_hoeffding} \\
%%%%%%%%%%%%%%%%%%%
 %%%%%%%%%%%%%%%%%%%
& \leq t^{-3} \nonumber
\end{align}
Here \Cref{eq:unbd_and_hoeffding} follows from hoeffding's inequality and the union bound trick to handle random variable $n_k(t-1)$. After plugging these results in \eqref{eq:cric-s}, we get that 

\begin{align}
    \sum_{t = 2t_0}^{T} \Pr\left(k_{t}=k, n_{k}(t-1) \geq \frac{t}{2}\right) &\leq \sum_{t = 2t_0}^{T} 2Kt^{-3} + \sum_{t = 2t_0}^{T} \Pr\left(k_{t} = k, E^{\mu}_k(t), E^{S}_k(t),n_{k}(t-1) \geq \frac{t}{2}\right) + \nonumber\\
%%%%%%%%%%%%%%%%%%%%%
 & \sum_{t = 2t_0}^{T} \Pr\left(k_{t} = k, E^{\mu}_k(t), \overline{E^{S}_k(t)},n_{k}(t-1) \geq \frac{t}{2}\right)+ \nonumber \\
 & \sum_{t = 2t_0}^{T} \Pr\left(k_{t} = k, \overline{E^{\mu}_k(t)},n_{k}(t-1) \geq \frac{t}{2}\right) \\
 &\leq \sum_{t = 2t_0}^{T} 2Kt^{-3} + \OO(1) + \OO(1) + \sum_{t = 2t_0}^{T} t^{-3} \\
 &= \OO(1)
\end{align}

\end{proof}

We now show that the expected number of pulls by C-TS for a non-competitive arm is bounded above by an $\OO(1)$ constant.\\
\noindent
\textbf{Expected number of pulls by C-TS for a non-competitive arm.}
We bound $\E{\pulls_\arm(\slot)}$ as 
\begin{align}
\E{\pulls_\arm(\totalPulls)} &= \E{\sum_{\slot = 1}^{\totalPulls}\indicator_{\{\arm_\slot = \arm\}}} \nonumber \\
&= \sum_{\slot = 0}^{\totalPulls-1} \Pr(\arm_{\slot+1} = \arm) \nonumber \\
&= \sum_{\slot = 1}^{2\slot_0} \Pr(\arm_\slot = \arm) + \sum_{\slot = 2 \slot_0}^{\totalPulls-1} \Pr(\arm_{\slot+1} = \arm) \nonumber \\
&\leq 2 \slot_0 +  \sum_{\slot = 2 \slot_0}^{\totalPulls-1} \Pr\left(\arm_{\slot+1} = \arm , \pulls_{\arm^*}(\slot) \geq \frac{t}{2}\right)  + \sum_{\slot = 2 \slot_0}^{\totalPulls-1} \Pr\left(\arm_{\slot+1} = \arm , \pulls_{\arm}(\slot) \geq \frac{t}{2}\right) \\
&\leq 2 \slot_0 + \sum_{\slot = 2 \slot_0}^{\totalPulls - 1} 3\slot^{-3} + \sum_{\slot = 2 \slot_0}^{\totalPulls-1} \Pr\left(\arm_{\slot+1} = \arm , \pulls_{\arm}(\slot) \geq \frac{t}{2}\right) \label{usingSomeLemma}\\
&= \OO(1) \label{eq:lastStepTS}
\end{align}
Here, \eqref{usingSomeLemma} follows from \Cref{lem:suboptimalNotCompetitive} and \eqref{eq:lastStepTS} follows from \Cref{lem:notPullsuboptimalifEnough} and the fact that the sum of $3t^{-3}$ is bounded and $\slot_0  = \inf \bigg\{\tau > 0: \Delta_{\text{min}},\epsilon_k \geq 4 \sqrt{\frac{ 2K \log \tau}{\tau}} \bigg\}.$

We now show that when the sub-optimal arm $k$ is competitive, the expected pulls of arm $k$ is $\OO(\log T)$.\\

\noindent
\textbf{Expected number of pulls by C-TS for a competitive arm $k \neq k^*$.}:
For any sub-optimal arm $\arm \neq \arm^*$,
\begin{align}
    \E{\pulls_\arm(\totalPulls)} &\leq \sum_{\slot = 1}^{\totalPulls} \Pr(\arm_\slot = \arm) \nonumber \\
    &= \sum_{\slot = 1}^{\totalPulls} \Pr((k_t = k, E_1(\slot)) \cup (E_1^c(\slot), k_t = k)) \label{stepE12} \\
    &\leq \sum_{\slot = 1}^{\totalPulls} \Pr(E_1(\slot)) + \sum_{\slot = 1}^{\totalPulls}  \Pr(E_1^c(\slot), k_t = k) \nonumber \\
    & \leq \sum_{\slot = 1}^{\totalPulls} \Pr(E_1(\slot)) + \sum_{\slot = 1}^{\totalPulls}  \Pr(E_1^c(\slot), k_{t} = k, S_k(t-1) > S_{k^*}(t-1)) \nonumber 
\end{align}
\begin{align}
    &\leq \sum_{\slot = 1}^{\totalPulls} \Pr(E_1(\slot)) + \sum_{\slot = 0}^{\totalPulls-1} \Pr(S_\arm(\slot)> S_{\arm^*}(\slot), k_{t+1} = k) \nonumber \\
    &= \sum_{\slot = 1}^{\totalPulls} 2\numArms\slot^{-3} + \sum_{\slot = 0}^{\totalPulls-1} \Pr\left(S_\arm(\slot) > S_{\arm^*}(\slot), k_{t+1} = k \right) \label{eliminatedArmProb} \\
    &\leq \frac{9\log(T)}{\Delta_k^2} + \OO(1) + \sum_{\slot = 1}^{\totalPulls} 2 \numArms \slot^{-3}. \label{fromAuer} \\
    &= \OO(\log T).
\end{align}
Here, \eqref{eliminatedArmProb} follows from \Cref{lem:eliminatedOptimal}. We have \eqref{fromAuer} from the analysis of Thompson Sampling for the classical bandit problem in \cite{agrawal2013further}. This arises as the term $\Pr\left(S_\arm(\slot) > S_{\arm^*}(\slot), k_{t+1} = k \right)$ counts the number of times $S_k(t) > S_{k^*}(t)$ and $k_{t+1} = k$. This is precisely the term analysed in Theorem 3 of \cite{agrawal2013further} to bound the expected pulls of sub-optimal arms by TS. 
In particular, \cite{agrawal2013further} analyzes the expected number of pull of sub-optimal arm (termed as $\E{k_i(T)}$ in their paper) by evaluating $\sum_{t = 0}^{T-1} \Pr(S_k(t) > S_{k^*}(t), k_{t+1} = k)$ and it is shown in their Section 2.1 (proof of Theorem 1 of \cite{agrawal2013further}) that $\sum_{t = 0}^{T-1} \Pr(S_k(t) > S_{k^*}(t), k_{t+1} = k) \leq \OO(1) + \frac{\log(T)}{d(x_i, y_i)}$. The term $x_i$ is equivalent to $U_k$ and $y_i$ is equal to $L_k$ in our notations. Moreover $d(U_k, L_k) \leq \frac{\Delta_k^2}{9}$, giving us the desired result of \eqref{fromAuer}.}

\section{Lower Bounds}
For the proof we define $R_\arm = Y_\arm(X)$ and $\tilde{R}_\arm = g_\arm(\tilde{X})$, where $f_X(x)$ is the probability density function of random variable $X$ and $f_{\tilde{X}}(x)$ is the probability density function of random variable $\tilde{X}$. Similarly, we define $f_{R_k}(r)$ to be the reward distribution of arm $k$.

\vspace{0.1cm}
\noindent
\textbf{Proof of Theorem 4}

Let arm $\arm$ be a $\textit{Competitive}$ sub-optimal arm, i.e $\optimistGap_{\arm,\arm^*} < 0$. To prove that regret is $\Omega(\log T)$ in this setting, we need to create a new bandit instance, in which reward distribution of optimal arm is unaffected, but a previously competitive sub-optimal arm $k$ becomes optimal in the new environment. We do so by constructing a bandit instance with latent randomness $\tilde{X}$ and random rewards $\tilde{Y}_k(X)$. Let's denote to $\tilde{Y}_k(\tilde{X})$ to be the random reward obtained on pulling arm $k$ given the realization of $\tilde{X}$. To make arm $k$ optimal in the new bandit instance, we construct $\tilde{Y}_k(X)$ and $\tilde{X}$ in the following manner. Let $\mathcal{Y}_k$ denote the support of $Y_k(X)$. 

Define $$\tilde{Y}_k(X) = 
\begin{cases}
\bar{g}_k(X) \quad \text{w.p. } 1-\epsilon_1 \\
\tilde{Y}_k(X) \sim \text{Uniform}(\mathcal{Y}_k) \quad \text{w.p. } \epsilon_1
\end{cases} 
$$This changes the conditional reward of arm $k$ in the new bandit instance (with increased mean). 

Furthermore, Define $$\tilde{X} = 
\begin{cases}
S(R_{k^*}) \quad w.p. 1 - \epsilon_2  \\
\text{Uniform} \sim \mathcal{X} \quad w.p. \epsilon_2.
\end{cases}, 
$$
with $S(R_{k^*}) = \arg \max_{\underline{g}_{k^*}(x) < R_{k^*} < \bar{g}_{k^*}(x)} \bar{g}_k(x)$.

\noindent
Here $R_{k^*}$ represents the random reward of arm $k^*$ in the original bandit instance.

This construction of $\tilde{X}$ is possible for some $\epsilon_1, \epsilon_2 > 0$, whenever arm $k$ is competitive by definition. Moreover, under such a construction one can change reward distribution of $\tilde{Y}_{k^*}(\tilde{X})$ such that reward $\tilde{R}_{k^*}$ has the same distribution as $R_{k^*}$. This is done by changing the conditional reward distribution, $f_{\tilde{Y}_{k^*} | X}(r) = \frac{f_{Y_{k^*} | X}(r) f_X(x)}{f_{\tilde{X}}(x)}$. 

Due to this, if an arm is competitive, there exists a new bandit instance with latent randomness $\tilde{X}$ and conditional rewards $\tilde{Y}_{k^*}|X$ and $\tilde{Y}_k | X$ such that $f_{R_{k^*}} = f_{\tilde{R}_k^*}$ and $\E{\tilde{R}_k} > \mu_{k^*}$, with $f_{R_k}$ denoting the probability distribution function of the reward from arm $k$ and $\tilde{R}_k$ representing the reward from arm $k$ in the new bandit instance.

Therefore, if these are the only two arms in our problem, then from \Cref{lem:LaiRobbins2Arms}, $$\lim_{\totalPulls \rightarrow \infty}\inf \frac{\E{\pulls_\arm(\totalPulls)}}{\log \totalPulls} \geq \frac{1}{D(f_{R_\arm}(r) || f_{\tilde{R}_{\arm}}(r))},$$
where $f_{\tilde{R}_k}(r)$ represents the reward distribution of arm $k$ in the new bandit instance.

Moreover, if we have more $\numArms - 1$ sub-optimal arms, instead of just 1, then $$\lim_{\totalPulls \rightarrow \infty}\inf \frac{\E{\sum_{\ell \neq \arm^*} \pulls_{\ell}(\totalPulls)}}{\log \totalPulls} \geq \frac{1}{D(f_{R_{\arm}}(r)|| f_{\tilde{R}_{\arm}}(r))}.$$

Consequently, since $\E{\regret(\totalPulls)} = \sum_{ell = 1}^{\numArms} \gap_\ell \E{\pulls_{\ell}(\totalPulls)}$, we have 
\begin{align}
\lim_{\totalPulls \rightarrow \infty}\inf \frac{\E{\regret (\totalPulls)}}{\log (\totalPulls)} \geq \max_{\arm \in \setofArmsC}\frac{\Delta_k}{D(f_{R_k} || f_{\tilde{R}_k})}.
\end{align}

\vspace{0.1cm}
\noindent
\textbf{A stronger lower bound valid for the general case}
A stronger lower bound for the general case can be shown by using the result in Proposition 1 of \cite{van2020optimal}. If $\mathcal{P}$ denotes the set of all possible joint probability distribution under which all pseudo-reward constraints are satisfied and $P$ denotes the underlying unknown joint probability distribution which has $k^*$ as the optimal arm. Then, the expected cumulative regret for any algorithm that achieves a sub-polynomial regret is lower bounded as 
    $$\lim \inf_{\totalPulls \rightarrow \infty} \frac{Reg(\totalPulls)}{\log \totalPulls} \geq L(P),$$
where $L(P)$ is the solution of the optimization problem: 
\begin{align}
    &\min_{\eta(\arm) \geq 0, \arm \in \mathcal{K}} \sum_{\arm \in \mathcal{K}} \eta(\arm)\left(\max_{\ell \in \mathcal{K}}\mu_\ell- \mu_\arm\right) \nonumber\\
    &\text{subject to} \sum_{\arm \in \mathcal{K}} \eta(\arm) D(P,Q,\arm) \geq 1, \quad \forall Q \in \mathcal{Q}, \label{optProblem}\\
    where \quad & \mathcal{Q} = \{Q \in \mathcal{P} : f_R(R_{k^*} | Q, k^*) = f_R(R_{k^*} | P, k^*) ~~ \text{and} ~~ \bestarm \neq \arg \max_{\arm \in \mathcal{K}} \mu_\arm(Q) \}. \nonumber
\end{align}
Here, $D(P,Q,\arm)$ is the KL-Divergence between reward distributions of arm $k$ under joint probability distributions $P$ and $Q$, i.e., $f_R(R_{\arm}|\theta,\arm)$ and $f_R(R_{\arm}|\lambda,\arm)$. The term $\mu_k(Q)$ represents the mean reward of arm $k$ under the joint probability distribution $Q$. 

To interpret the lower bound, one can think of $\mathcal{Q}$ as the set of all joint probability distributions, under which the reward distribution of arm $k^*$ remains the same, but some other arm $k' \neq k^*$ is optimal under the joint probability distribution. The optimization problem reflects the amount of samples needed to distinguish these two joint probability distributions. This result is based on the original result of \cite{graves1997asymptotically}, which has been used recently in \cite{combes2017minimal, van2020optimal} for studying other bandit problems.

\vspace{0.1cm}
\noindent
\textbf{Lower bound discussion in general framework}

\begin{table}[t]
\centering
\begin{tabular}{|l|l|l|l|l|}
\cline{1-2} \cline{4-5}
\textbf{r} & \textbf{$s_{2,1}(r)$} &  & \textbf{r} & \textbf{$s_{1,2}(r)$} \\ \cline{1-2} \cline{4-5} 
\textbf{0} & $\frac{2}{3}$                   &  & \textbf{0} & $\frac{3}{4}$                     \\ \cline{1-2} \cline{4-5} 
\textbf{1} & $\frac{6}{7}$                   &  & \textbf{1} & $\frac{2}{3}$                     \\ \cline{1-2} \cline{4-5} 
\end{tabular}
\\ \vspace{2mm}
\parbox{.45\linewidth}{
\centering
\begin{tabular}{|l|l|l|}
\hline
    \textbf{(a)}    & $R_2 = 0$ & $R_2 = 1$ \\ \hline
$R_1 = 0$ & 0.1       & 0.2       \\ \hline
$R_1 = 1$ & 0.3       & 0.4       \\ \hline
\end{tabular}
}
\hfill
\parbox{.45\linewidth}{
\centering
\begin{tabular}{|l|l|l|}
\hline
    \textbf{(b)}    & $R_2 = 0$ & $R_2 = 1$ \\ \hline
$R_1 = 0$ & a       & b       \\ \hline
$R_1 = 1$ & c       & d      \\ \hline
\end{tabular}
}
\caption{The top row shows the pseudo-rewards of arms 1 and 2, i.e., upper bounds on the conditional expected rewards (which are known to the player). The bottom row depicts two possible joint probability distribution (unknown to the player). Under distribution (a), Arm 1 is optimal and all pseudo-reward except $s_{2,1}(1)$ are tight.}
\label{tab:pseudoBinappendix}
\end{table}

Consider the example shown in \Cref{tab:pseudoBinappendix}, for the joint probability distribution $(a)$, Arm 1 is optimal. Moreover, all pseudo-rewards except $s_{2,1}(1)$ are tight, i.e.,$s_{\ell,k}(r) = \E{R_\ell | R_k = r}$. For the joint probability distribution shown in $(a)$, expected pseudo-reward of Arm 2 is $0.8$ and hence it is competitive. Due to this, our C-UCB and C-TS algorithms pull Arm 2 $\OO(\log T)$ times.

However, it is not possible to construct an alternate bandit environment with joint probability distribution shown in \Cref{tab:pseudoBinappendix}(b), such that Arm 2 becomes optimal while maintaining the same marginal distribution for Arm 1, and making sure that the pseudo-rewards still remain upper bound on conditional expected rewards. Formally, there does not exist $a,b,c,d$ such that $c + d = 0.7$, $\frac{c}{a+c} < 3/4$, $\frac{b}{a+b} < 2/3$, $\frac{d}{b+d} < 2/3$, $\frac{d}{d+c} < 6/7$ and $a+b+c+d = 1$. This suggests that there should be a way to achieve $\OO(1)$ regret in this scenario. We believe this can be done by using all the constraints (imposed by the knowledge of pair-wise pseudo-rewards to shrink the space of possible joint probability distributions) when calculating empirical pseudo-reward. However, this becomes tough to implement as the ratings can have multiple possible values and the number of arms is more than 2. We leave the task of coming up with a practically feasible and easy to implement algorithm that achieves bounded regret whenever possible in a general setup as an interesting open problem.

\end{document}